\newtheorem{proposition}{Proposition}
\newtheorem*{proposition*}{Proposition}
\newtheorem{corollary}{Corollary}
\newtheorem*{corollary*}{Corollary}
\theoremstyle{definition}
\newtheorem{definition}{Definition}
\newtheorem{assumption}{Assumption}
\newtheorem*{assumption*}{Assumption}
\newtheorem*{problem*}{Problem}
\newtheorem{problem}{Problem}
\theoremstyle{remark}
\newtheorem*{solution*}{Solution}
\newcommand{\prl}[1]{\left(#1\right)}
\newcommand{\scaleMathLine}[2][1]{\resizebox{#1\linewidth}{!}{$\displaystyle{#2}$}}
\algnewcommand{\IfThenElse}[3]{
  \State \algorithmicif\ #1\ \algorithmicthen\ #2\ \algorithmicelse\ #3}
\DeclareMathOperator{\diag}{diag}
\newcommand{\NEW}[1]{{\color{black}#1}}
\newcommand{\bsym}[1]{{\boldsymbol{#1}}}
\begin{document}
%
\title{Autonomous Navigation in Unknown Environments with Sparse Bayesian Kernel-based Occupancy Mapping}
%
%
%
\author{Thai~Duong,~\IEEEmembership{Student Member,~IEEE,}~Michael~Yip,~\IEEEmembership{Member,~IEEE,}~and~Nikolay Atanasov,~\IEEEmembership{Member,~IEEE}
\thanks{We gratefully acknowledge support from ARL DCIST CRA W911NF-17-2-0181 and ONR SAI N00014-18-1-2828.}%
\thanks{The authors are with the Department of Electrical and Computer Engineering, 
  University of California, San Diego, 
  La Jolla, CA 92093 USA, email \{tduong, yip, natanasov\}@ucsd.edu}
}

%
%

\markboth{IEEE Transactions on Robotics}%
{Shell \MakeLowercase{\textit{et al.}}: Bare Demo of IEEEtran.cls for IEEE Journals}
%

\newcommand{\calA}{{\cal A}}
\newcommand{\calB}{{\cal B}}
\newcommand{\calC}{{\cal C}}
\newcommand{\calD}{{\cal D}}
\newcommand{\calE}{{\cal E}}
\newcommand{\calF}{{\cal F}}
\newcommand{\calG}{{\cal G}}
\newcommand{\calH}{{\cal H}}
\newcommand{\calI}{{\cal I}}
\newcommand{\calJ}{{\cal J}}
\newcommand{\calK}{{\cal K}}
\newcommand{\calL}{{\cal L}}
\newcommand{\calM}{{\cal M}}
\newcommand{\calN}{{\cal N}}
\newcommand{\calO}{{\cal O}}
\newcommand{\calP}{{\cal P}}
\newcommand{\calQ}{{\cal Q}}
\newcommand{\calR}{{\cal R}}
\newcommand{\calS}{{\cal S}}
\newcommand{\calT}{{\cal T}}
\newcommand{\calU}{{\cal U}}
\newcommand{\calV}{{\cal V}}
\newcommand{\calW}{{\cal W}}
\newcommand{\calX}{{\cal X}}
\newcommand{\calY}{{\cal Y}}
\newcommand{\calZ}{{\cal Z}}

\newcommand{\setA}{\textsf{A}}
\newcommand{\setB}{\textsf{B}}
\newcommand{\setC}{\textsf{C}}
\newcommand{\setD}{\textsf{D}}
\newcommand{\setE}{\textsf{E}}
\newcommand{\setF}{\textsf{F}}
\newcommand{\setG}{\textsf{G}}
\newcommand{\setH}{\textsf{H}}
\newcommand{\setI}{\textsf{I}}
\newcommand{\setJ}{\textsf{J}}
\newcommand{\setK}{\textsf{K}}
\newcommand{\setL}{\textsf{L}}
\newcommand{\setM}{\textsf{M}}
\newcommand{\setN}{\textsf{N}}
\newcommand{\setO}{\textsf{O}}
\newcommand{\setP}{\textsf{P}}
\newcommand{\setQ}{\textsf{Q}}
\newcommand{\setR}{\textsf{R}}
\newcommand{\setS}{\textsf{S}}
\newcommand{\setT}{\textsf{T}}
\newcommand{\setU}{\textsf{U}}
\newcommand{\setV}{\textsf{V}}
\newcommand{\setW}{\textsf{W}}
\newcommand{\setX}{\textsf{X}}
\newcommand{\setY}{\textsf{Y}}
\newcommand{\setZ}{\textsf{Z}}

\newcommand{\bfa}{\mathbf{a}}
\newcommand{\bfb}{\mathbf{b}}
\newcommand{\bfc}{\mathbf{c}}
\newcommand{\bfd}{\mathbf{d}}
\newcommand{\bfe}{\mathbf{e}}
\newcommand{\bff}{\mathbf{f}}
\newcommand{\bfg}{\mathbf{g}}
\newcommand{\bfh}{\mathbf{h}}
\newcommand{\bfi}{\mathbf{i}}
\newcommand{\bfj}{\mathbf{j}}
\newcommand{\bfk}{\mathbf{k}}
\newcommand{\bfl}{\mathbf{l}}
\newcommand{\bfm}{\mathbf{m}}
\newcommand{\bfn}{\mathbf{n}}
\newcommand{\bfo}{\mathbf{o}}
\newcommand{\bfp}{\mathbf{p}}
\newcommand{\bfq}{\mathbf{q}}
\newcommand{\bfr}{\mathbf{r}}
\newcommand{\bfs}{\mathbf{s}}
\newcommand{\bft}{\mathbf{t}}
\newcommand{\bfu}{\mathbf{u}}
\newcommand{\bfv}{\mathbf{v}}
\newcommand{\bfw}{\mathbf{w}}
\newcommand{\bfx}{\mathbf{x}}
\newcommand{\bfy}{\mathbf{y}}
\newcommand{\bfz}{\mathbf{z}}

\newcommand{\bfalpha}{\boldsymbol{\alpha}}
\newcommand{\bfbeta}{\boldsymbol{\beta}}
\newcommand{\bfgamma}{\boldsymbol{\gamma}}
\newcommand{\bfdelta}{\boldsymbol{\delta}}
\newcommand{\bfepsilon}{\boldsymbol{\epsilon}}
\newcommand{\bfzeta}{\boldsymbol{\zeta}}
\newcommand{\bfeta}{\boldsymbol{\eta}}
\newcommand{\bftheta}{\boldsymbol{\theta}}
\newcommand{\bfiota}{\boldsymbol{\iota}}
\newcommand{\bfkappa}{\boldsymbol{\kappa}}
\newcommand{\bflambda}{\boldsymbol{\lambda}}
\newcommand{\bfmu}{\boldsymbol{\mu}}
\newcommand{\bfnu}{\boldsymbol{\nu}}
\newcommand{\bfomicron}{\boldsymbol{\omicron}}
\newcommand{\bfpi}{\boldsymbol{\pi}}
\newcommand{\bfrho}{\boldsymbol{\rho}}
\newcommand{\bfsigma}{\boldsymbol{\sigma}}
\newcommand{\bftau}{\boldsymbol{\tau}}
\newcommand{\bfupsilon}{\boldsymbol{\upsilon}}
\newcommand{\bfphi}{\boldsymbol{\phi}}
\newcommand{\bfchi}{\boldsymbol{\chi}}
\newcommand{\bfpsi}{\boldsymbol{\psi}}
\newcommand{\bfomega}{\boldsymbol{\omega}}
\newcommand{\bfxi}{\boldsymbol{\xi}}
\newcommand{\bfell}{\boldsymbol{\ell}}

\newcommand{\bfA}{\mathbf{A}}
\newcommand{\bfB}{\mathbf{B}}
\newcommand{\bfC}{\mathbf{C}}
\newcommand{\bfD}{\mathbf{D}}
\newcommand{\bfE}{\mathbf{E}}
\newcommand{\bfF}{\mathbf{F}}
\newcommand{\bfG}{\mathbf{G}}
\newcommand{\bfH}{\mathbf{H}}
\newcommand{\bfI}{\mathbf{I}}
\newcommand{\bfJ}{\mathbf{J}}
\newcommand{\bfK}{\mathbf{K}}
\newcommand{\bfL}{\mathbf{L}}
\newcommand{\bfM}{\mathbf{M}}
\newcommand{\bfN}{\mathbf{N}}
\newcommand{\bfO}{\mathbf{O}}
\newcommand{\bfP}{\mathbf{P}}
\newcommand{\bfQ}{\mathbf{Q}}
\newcommand{\bfR}{\mathbf{R}}
\newcommand{\bfS}{\mathbf{S}}
\newcommand{\bfT}{\mathbf{T}}
\newcommand{\bfU}{\mathbf{U}}
\newcommand{\bfV}{\mathbf{V}}
\newcommand{\bfW}{\mathbf{W}}
\newcommand{\bfX}{\mathbf{X}}
\newcommand{\bfY}{\mathbf{Y}}
\newcommand{\bfZ}{\mathbf{Z}}

\newcommand{\bfGamma}{\boldsymbol{\Gamma}}
\newcommand{\bfDelta}{\boldsymbol{\Delta}}
\newcommand{\bfTheta}{\boldsymbol{\Theta}}
\newcommand{\bfLambda}{\boldsymbol{\Lambda}}
\newcommand{\bfPi}{\boldsymbol{\Pi}}
\newcommand{\bfSigma}{\boldsymbol{\Sigma}}
\newcommand{\bfUpsilon}{\boldsymbol{\Upsilon}}
\newcommand{\bfPhi}{\boldsymbol{\Phi}}
\newcommand{\bfPsi}{\boldsymbol{\Psi}}
\newcommand{\bfOmega}{\boldsymbol{\Omega}}

\newcommand{\bbA}{\mathbb{A}}
\newcommand{\bbB}{\mathbb{B}}
\newcommand{\bbC}{\mathbb{C}}
\newcommand{\bbD}{\mathbb{D}}
\newcommand{\bbE}{\mathbb{E}}
\newcommand{\bbF}{\mathbb{F}}
\newcommand{\bbG}{\mathbb{G}}
\newcommand{\bbH}{\mathbb{H}}
\newcommand{\bbI}{\mathbb{I}}
\newcommand{\bbJ}{\mathbb{J}}
\newcommand{\bbK}{\mathbb{K}}
\newcommand{\bbL}{\mathbb{L}}
\newcommand{\bbM}{\mathbb{M}}
\newcommand{\bbN}{\mathbb{N}}
\newcommand{\bbO}{\mathbb{O}}
\newcommand{\bbP}{\mathbb{P}}
\newcommand{\bbQ}{\mathbb{Q}}
\newcommand{\bbR}{\mathbb{R}}
\newcommand{\bbS}{\mathbb{S}}
\newcommand{\bbT}{\mathbb{T}}
\newcommand{\bbU}{\mathbb{U}}
\newcommand{\bbV}{\mathbb{V}}
\newcommand{\bbW}{\mathbb{W}}
\newcommand{\bbX}{\mathbb{X}}
\newcommand{\bbY}{\mathbb{Y}}
\newcommand{\bbZ}{\mathbb{Z}}




\maketitle

\begin{abstract}
This paper focuses on online occupancy mapping and real-time collision checking onboard an autonomous robot navigating in a large unknown environment. Commonly used voxel and octree map representations can be easily maintained in a small environment but have increasing memory requirements as the environment grows. We propose a fundamentally different approach for occupancy mapping, in which the boundary between occupied and free space is viewed as the decision boundary of a machine learning classifier. This work generalizes a kernel perceptron model which maintains a very sparse set of support vectors to represent the environment boundaries efficiently. We develop a probabilistic formulation based on Relevance Vector Machines, allowing robustness to measurement noise and probabilistic occupancy classification, supporting autonomous navigation. We provide an online training algorithm, updating the sparse Bayesian map incrementally from streaming range data, and an efficient collision-checking method for general curves, representing potential robot trajectories. The effectiveness of our mapping and collision checking algorithms is evaluated in tasks requiring autonomous robot navigation \NEW{and active mapping} in unknown environments. 
\end{abstract}

\begin{IEEEkeywords}
Sparse Bayesian Classification, Kernel-based Occupancy Mapping, Relevance Vector Machine, Autonomous Navigation, Collision Avoidance.
\end{IEEEkeywords}

%
\IEEEpeerreviewmaketitle

\section*{Supplementary Material}
Software and videos supplementing this paper:\\ 
\centerline{\url{https://thaipduong.github.io/sbkm}}
\section{Introduction}
\label{sec:intro}


Autonomous navigation in robotics involves online localization, mapping, motion planning, and control in partially known environments perceived through streaming data from onboard sensors~\cite{human_friendly_nav_guzzi_icra13, safe_auto_nav_pavone_rss18}. This paper focuses on the occupancy mapping problem and, specifically, on enabling large-scale, yet compact, representations and efficient collision checking to support autonomous navigation. Occupancy mapping is a well established and widely studied problem in robotics and a variety of explicit and implicit map representations have been proposed. Explicit maps model the obstacle surfaces directly, e.g., via surfels~\cite{rgbd-slam,chisel,elastic_fusion,DenseSurfelMapping,behley2018rss_surfel_slam}, geometric primitives~\cite{Kaess_infiniteplanes,Bowman_SemanticSLAM_ICRA17,quadric_slam,cubeslam,Shan_OrcVIO_IROS20}, or polygonal meshes~\cite{teixeira2016real,piazza2018real,kimera}. Implicit maps model the obstacle surfaces as the level set of an occupancy~\cite{occgrid,InfiniTAM,VoxelMapVisualSLAM,octomap,octree_fusion,supereight} or signed distance~\cite{curless1996volumetric,kazhdan2006poisson,kinfu,voxblox,fiesta} or spatial function encoded via voxels~\cite{niessner2013real,voxblox,fiesta} or octrees~\cite{octomap,octree_fusion,supereight}. The goal of this work is to generate \emph{sparse} \emph{probabilistic} maps online, enabling large-scale environment modeling, map uncertainty quantification, and efficient collision checking.



Our preliminary work \cite{duong2020autonomous} develops a kernel perceptron model for online occupancy mapping. The model uses support vectors and a kernel function to represent obstacle boundaries in configuration space. The number of support vectors scales with the complexity of the obstacle boundaries rather than the environment size. We develop an online training algorithm to update the support vectors incrementally as new range observations of the local surroundings are provided by the robot's sensors. To enable motion planning in the new occupancy representation, we develop efficient collision checking algorithms for piecewise-linear and piecewise-polynomial trajectories in configuration space. Our kernel perceptron model, however, provides occupancy labels without a probability distribution, making the classification accuracy susceptible to measurement noise. Since unknown regions are frequently assumed free for motion planning purposes, the lack of probabilistic information also does not allow us to distinguish between well-observed and unseen regions. This is especially important in active exploration problems, where the robot autonomously chooses the unknown regions to explore.

\begin{figure}[t]
\centering
        \includegraphics[width=0.48\textwidth]{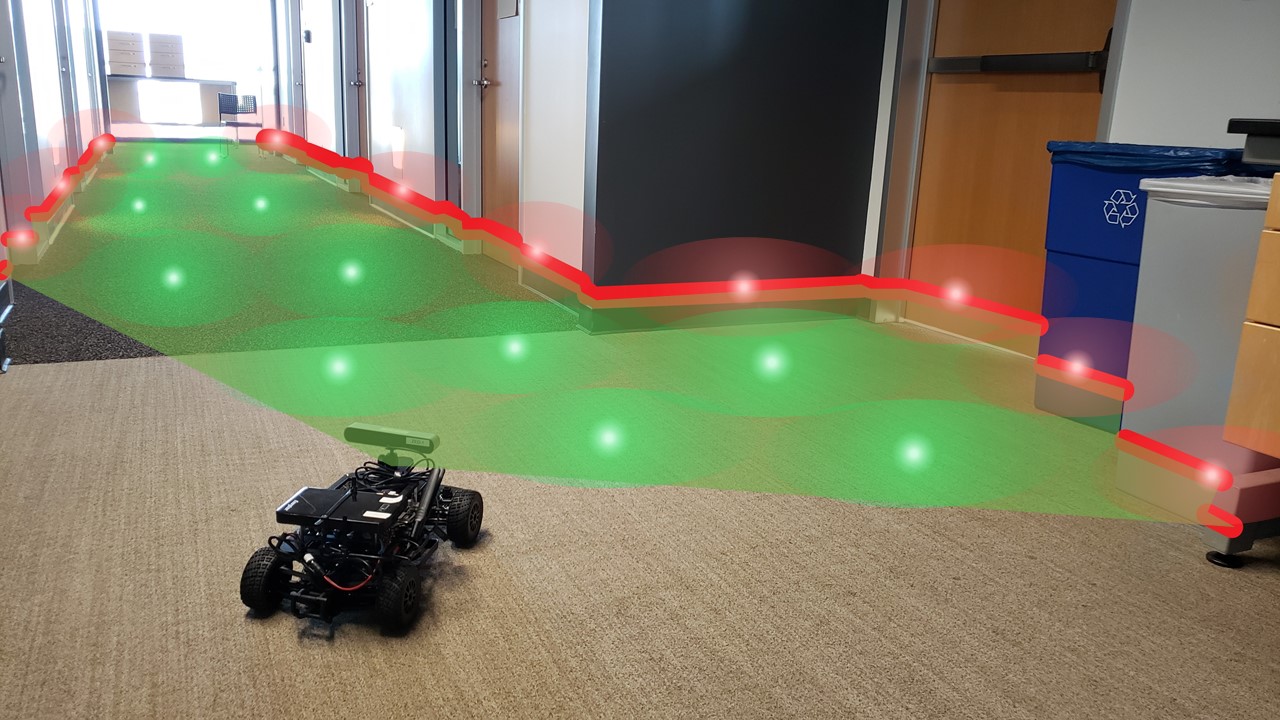}
\caption{A ground robot in an unknown environment relying on lidar scan data (red) for online occupancy mapping and collision-free trajectory planning. Our mapping algorithm learns a sparse set of occupied and free relevance vectors (light red and green dots, respectively) that represents the environment based on the lidar scans.}
\label{fig:robot_unkenv_scan}
\end{figure}


This paper develops a sparse Bayesian formulation of the occupancy mapping problem and introduces an incremental Relevance Vector Machine training algorithm to probabilistically model the environment. To make our sparse Bayesian kernel-based map compatible with motion planning algorithms, we derive collision checking algorithms for linear and general trajectories.


\textbf{Contributions}. This paper introduces a sparse Bayesian kernel-based mapping method that: 
\begin{itemize}[leftmargin=2em,nosep]
  \item represents continuous-space probabilistic occupancy using a sparse set of relevance vectors stored in an $R^*$-tree data structure (Sec. \ref{sec:online_probit_rvm} and \ref{subsec:online_mapping}),
  \item allows online map updates from streaming partial observations using an incremental Relevance Vector Machine training algorithm with the predictive distribution modeled by a probit function. (Sec. \ref{subsec:online_probit_rvm}), and

  \item provides efficient and complete (without sampling) collision checking for general robot trajectories (Sec. \ref{sec:poly_curve_check} and \ref{subsec:auto_nav}).
\end{itemize}

\section{Related Work}
\label{sec:related_work}

Occupancy grid mapping is a commonly used approach for modeling the free and occupied space of an environment. The space is discretized into a collection of cells, whose occupancy probabilities are estimated online using the robot's sensory data. While early work~\cite{thrun2005probabilistic, gmapping} assumes that the cells are independent, Gaussian process (GP) occupancy mapping~\cite{OCallaghan2012gaussian, wang2016fast, jadidi2017warped} uses a kernel function to capture the correlation among grid cells and predict the occupancy of unobserved cells. Online training of a Gaussian process model, however, does not scale well as its computational complexity grows cubically with the number of data points. Ramos et al.~\cite{ramos2016hilbertmap} improve on this by projecting the data points into Hilbert space and training a logistic regression model. Senanayake and Ramos~\cite{senanayake2017bayesian} propose a Bayesian treatment of Hilbert maps, called Sequential Bayesian Hilbert Map (SBHM), that updates the map from sequential observations of the environment. They achieve sparseness by calculating feature vectors based on a sparse set of hinged points, e.g., on a coarse grid. Instead of a fixed set of hinged points, Relevance Vector Machine (RVM)~\cite{tipping2000relevance, tipping2001sparse, tipping2003fast} learns a sparse set of relevance vectors from the training dataset. The original RVM work~\cite{tipping2000relevance} initially assumes that all data points are relevance vectors and prunes them down, incurring high computation cost. Tipping and Faul~\cite{tipping2003fast} derive a fast training algorithm that starts from an empty set of relevance vectors and adds points to the set gradually. Meanwhile, Lopez and How~\cite{lopez2017aggressive} propose an efficient determinstic alternative, which builds a k-d tree from point clouds and queries the nearest obstacles for collision checking. Using spatial partitioning similar to a k-d tree, octree-based maps~\cite{octomap,chen2017improving} offer efficient map storage by performing octree compression, while AtomMap~\cite{fridovich2017atommap} stores a collection of spheres in a k-d tree as a way to avoid grid cell discretization of the map. Instead of storing occupancy information, Voxblox \cite{voxblox} stores distance to obstacles in each cell and builds an Euclidean Signed Distance Field, as a map representation, online from streaming sensor data.

Navigation, in an unknown environment, requires the safety of potential robot trajectories to be evaluated through a huge amount of collision checks with respect to the map representation~\cite{bialkowski2016efficient,luo2014empirical,hauser2015lazy}. Many works rely on sampling-based collision checking, simplifying the safety verification of continuous-time trajectories by evaluating only a finite set of samples along the trajectory~\cite{luo2014empirical, Tsardoulias2016planninggridmap}. This may be undesirable in safety critical applications. Bialkowski et al.~\cite{bialkowski2016efficient} propose an efficient collision checking method using safety certificates with respect to the nearest obstacles. Using a different perspective, learning-based collision checking methods~\cite{das2017fastron, pan2015efficient, huh2016learningGMM} sample data from the environment and train machine learning models to approximate the obstacle boundaries. Pan et al.~\cite{pan2015efficient} propose an incremental support vector machine model for pairs of obstacles but train the models offline. Closely related to our work, Das et al.~\cite{das2017fastron, das2020learning} develop an online training algorithm, called Fastron, to train a kernel perceptron collision classifier. To handle dynamic environments, Fastron actively resamples the environment and updates the model globally. Geometry-based collision checking methods, such as the Flexible Collision Library (FCL)~\cite{pan2012fcl}, are also related but rely on mesh representations of the environment which may be inefficient to generate from local observations.


Our preliminary work \cite{duong2020autonomous}, summarized in Sec. \ref{sec:kernel_map_summary}, provides an approach to online occupancy mapping that supports efficient collision checking with guarantees. \NEW{However, to achieve robustness to noisy measurements and probabilistically model well-observed and unknown regions, we introduce a probabilistic formulation based on RVM inference that enables online sparse Bayesian kernel-based occupancy mapping}. Inspired by GP mapping techniques, we utilize a kernel function to capture occupancy correlations but focus on a compact representation of obstacle boundaries by building an RVM model, i.e. a sparse set of relevance vectors, incrementally from streaming local sensor data. Specifically, only a local subset of the relevance vectors is updated each time using our incremental RVM training algorithm. \NEW{Furthermore, motivated by the collision checking approach in~\cite{bialkowski2016efficient}, we derive our own efficient collision checking algorithms for our map representation}. We develop an ``inflated boundary" of the obstacle boundary that enables closed-form conditions for checking line segments and ellipsoids for collision. These key conditions allow us to check potential robot trajectories for motion planning purposes.

\section{Problem Formulation}
\label{sec:problem_formulation}

Consider a robot with state $\bfs \in \calS$, consisting of the robot's position $\bfp \in [0,1]^d$ and other variables such as orientation, velocity, etc., navigating in an unknown environment (Fig. \ref{fig:robot_unkenv_scan}). Let $\calO \subset [0,1]^d$ be a closed set representing occupied space and let $\calF$ be its complement, representing free space. Assume that the robot can be enclosed by a sphere of radius $r \in \mathbb{R}_{>0}$ centered at $\bfp$. In configuration space (C-space), the robot body becomes a point $\bfp$, while the obstacle space and free space are transformed as $\bar{\calO} = \cup_{\bfx\in \calO} \mathcal{B}(\bfx,r)$, where $\mathcal{B}(\bfx,r) = \{\bfx'\in [0,1]^d: \|\bfx-\bfx'\|_2 \leq r\}$, and $\bar{\calF} = [0,1]^d \setminus \bar{\calO}$. Let $\bar{\calS}$ be the subset of the robot state space that corresponds to the collision-free robot positions $\bar{\calF}$.

Let $\dot{\bfs}(t) = f(\bfs(t), \bfa(t))$ characterize the continuous-time robot dynamics with control input trajectory $\bfa(t) \in \calA$. We consider constant control inputs (zero-order hold) applied at discrete time steps $t_k$ for $k = 0, 1, \ldots, N$ so that $\bfa(t) \equiv \bfa_k$ for $[t_k, t_{k+1})$. We assume that the state $\bfs(t)$ is known or estimated by a localization algorithm and let $\bfs_k := \bfs(t_k)$.

The robot is equipped with a sensor, such as lidar or depth camera, that provides distance measurements $\bfz_k$ at time $t_k$ to the obstacle space $\calO$ within its field of view. Our objective is to construct an occupancy map $\hat{m}_{k}: [0,1]^d \rightarrow \{-1, 1\}$ of the C-space based on accumulated observations $\bfz_{0:k}$, where ``$-1$" and ``$1$" mean ``free" and ``occupied", respectively. As the robot is navigating, new sensor data are used to update the map as a function , $\hat{m}_{k+1} = g(\hat{m}_k, \bfz_k)$, of the previous estimate $\hat{m}_k$ and a newly received range observation~$\bfz_k$. Assuming unobserved regions are free, we rely on $\hat{m}_k$ to plan a robot trajectory to a goal region $\calG \subseteq \bar{\calS}$. Applying control action $\bfa$ at $\bfs$ incurs a motion cost $c(\bfs, \bfa)$, e.g., based on traveled distance or energy expenditure, and we aim to minimize the cumulative cost of navigating safely to the goal $\calG$.

\begin{problem}
\label{problem_formulation_unknown_env}
Given a start state $\bfs_0 \in \bar{\calS}$ and a goal region $\calG \subseteq \bar{\calS}$, find a sequence of control actions that leads the robot to $\calG$ safely, while minimizing the motion cost:
\begin{align}
\label{problem_formulation_unknown_env_equation}
\min_{N, \bfa_0, \ldots, \bfa_N} \;&\sum_{k=0}^{N-1}  c(\bfs_k, \bfa_{k})\\
\text{s.t.} \quad\; &\dot{\bfs} = f(\bfs, \bfa), \bfa(t) = \bfa_k \text{ for } t\in [t_k, t_{k+1}), \notag\\
&\;\bfs(t_0) = \bfs_0, \; \bfs_{N} \in \calG, \;\hat{m}_{k+1} = g(\hat{m}_{k}, \bfz_{k}), \notag\\
& \;\hat{m}_{k}(\bfs(t)) = -1 \text{ for } t\in [t_k, t_{k+1}), k= 0,\ldots,N.\notag
\end{align}
\end{problem}

In the remainder of the paper, we develop a sparse Bayesian kernel-based map representation, offering efficient collision checking for robot trajectories, and propose a complete solution to Problem \ref{problem_formulation_unknown_env}.

\section{A Sparse Kernel-based Classifier for Occupancy Mapping}
\label{sec:kernel_map_summary}

Our preliminary work~\cite{duong2020autonomous} on sparse Kernel-based map (SKM) develops a sparse kernel perceptron model for online classification of occupied and free space in the environment. The model uses a set of support vectors and a kernel function to represent the obstacle boundaries in configuration space. The number of support vectors necessary for accurate classification scales with the complexity of the obstacle boundaries rather than the environment size. Our approach extends the Fastron algorithm~\cite{das2017fastron, pan2015efficient}, which efficiently trains a kernel perceptron model using a training dataset collected globally from the environment. We develop an online training procedure (Alg.~\ref{alg:fastron_model}) that updates the support vectors incrementally as new range observations $\bfz_k$ of the local surroundings arrive. Given a training dataset $\mathcal{D} = \{(\bfx_l, y_l)\}$ generated from $\bfz_k$ (e.g. see Sec.~\ref{subsec:online_mapping} for details), Alg.~\ref{alg:fastron_model} prioritizes updating misclassified points' weight based on their margins (lines $6$ and $7$) and remove the redundant support vectors (line 8) without affecting the model. When the next local dataset arrives, it looks for new misclassified points and incrementally adds them to the set of support vectors. Alg.~\ref{alg:fastron_model} returns a set of $M^+$ positive support vectors and their weight $\Lambda^+ = \{(\bfx_i^+, \alpha_i^+)\}_i$ and a set of $M^-$ negative support vectors and their weight $\Lambda^- = \{(\bfx_j^-, \alpha_j^-)\}_j$. The classifier decision boundary is characterized by a score function: 
\begin{equation}
\label{eq:fastron_score}
F(\bfx) =\sum_{i = 1}^{M^+} \alpha_i^+ k(\bfx_i^+, \bfx) - \sum_{j = 1}^{M^-} \alpha_j^- k(\bfx_j^-, \bfx),
\end{equation}
where $k(\cdot, \cdot)$ is a kernel function and $\alpha_j^-, \alpha_i^+ > 0$. The occupancy of a query point $\bfx$ can be checked by evaluating the score function $F(\bfx)$ in Eq.~\eqref{eq:fastron_score}. Specifically, $\hat{m}_t(\bfx) = -1$ if $F(\bfx) < 0$ and $\hat{m}_t(\bfx) = 1$ if $F(\bfx) \geq 0$. The score calculation becomes slower when the number of support vectors increases. We improve on this by storing the support vectors in an $R^*$-tree data structure and efficiently query $K^+$ and $K^-$ nearest positive and negative support vectors (line 1 in Alg.~\ref{alg:fastron_model}) from the $R^*$-tree to approximate $F(\bfx)$.

\begin{algorithm}[t]
\caption{Incremental Kernel Perceptron Training \cite{duong2020autonomous}}
\label{alg:fastron_model}
\footnotesize
	\begin{algorithmic}[1]
		\Require Support vectors $\Lambda^+ = \{(\bfx^+_i, \alpha^+_i)\}_i$ and $\Lambda^- = \{(\bfx_j^-, \alpha_j^-)\}_j$ stored in an $R^*$-tree; Local dataset $\mathcal{D} = \{(\bfx_l, y_l)\}$; $\xi^+, \xi^- > 0$; $N_{max}$.
		\Ensure Updated $\Lambda^+, \Lambda^-$.
		\State Query $K^+, K^-$ nearest negative and positive support vectors from an $R^*$-tree data structure.
		\For {$(\bfx_l, y_l)$ in $\mathcal{D}$}
			\State Calculate $F_l = \sum_{i = 1}^{K^+} \alpha^+_i k(\bfx^+_i,\bfx_l) - \sum_{j = 1}^{K^-} \alpha^-_j k(\bfx^-_j,\bfx_l)$
		\EndFor
		\For {$t = 1$ to $N_{max}$}
			\If{$y_lF_l > 0 \quad \forall l$} \Return{$\Lambda^+, \Lambda^-$} 
			\EndIf
			\State $m = \text{argmin}_l y_l F_l$	
			\State \Call{Weight Correction}{$F_m,y_m,\Lambda^+, \Lambda^-,\xi^+, \xi^-$} 
			\State \Call{Redundancy Removal}{$\Lambda^+, \Lambda^-, \mathcal{D}$} 
		\EndFor
		\State \Return{$\Lambda^+, \Lambda^-$}
		\Function{Weight Correction}{$F_m, y_m, \Lambda^+, \Lambda^-, \xi^+, \xi^-$}
			\State $\xi = \xi^+$ if $y_m >0$; and $\xi = \xi^-$, otherwise.
			\State Calculate $\Delta_\alpha = \xi y_m - F_m$.
			\If{$\exists(\bfx_m, \alpha_m) \in \Lambda^+ \cup \Lambda^-$}
				\State Update weights: $\alpha_m \text{+=} y_m \Delta \alpha$, $F_l \text{+=} k(\bfx_l, \bfx_m) y_m \Delta\alpha, \forall l$
			\Else
				\State Calculate $\alpha_m = y_m \Delta \alpha$
				\State Add $(\bfx_m, \alpha_m)$ to $\Lambda^+$ if $y_m > 0$ and $\Lambda^-$, otherwise.
			\EndIf
		\EndFunction
		\Function{Redundancy Removal}{$\Lambda^+, \Lambda^-, \mathcal{D}$}
			\For{$(\bfx_l, y_l) \in \mathcal{D}$} 
				\If{$\exists (\bfx_l, \alpha_l) \in \Lambda^+ \cup \Lambda^-$ and $y_l(F_l - \alpha^+_l) > 0$}
				\State Remove $(\bfx_l, \alpha_l)$ from $\Lambda^+$ or $\Lambda^-$
				\State Update $F_n \text{-=}\; k(\bfx_l, \bfx_n)\alpha^+_l, \forall (\bfx_n, \cdot) \in \mathcal{D}$
				\EndIf
			\EndFor
		\EndFunction
	\end{algorithmic}
\end{algorithm}

Motivated by the use of piecewise-linear and piecewise-polynomial trajectories in many robot motion planning and control algorithms \cite{liu2017search, de2000stabilization, franch2009control}, we derive conditions to classify lines and curves, i.e., to check if every point on the curve is free using the trained model. Checking that a curve $\bfp(t)$ is classified as free is equivalent to verifying that $F(\bfp(t)) < 0$, $\forall t \geq 0$. It is not possible to express this condition for $t$ explicitly due to the nonlinearity of $F$. In Prop.~\ref{prop:score_bounds}, we show that an accurate upper bound $\bar{F}(\bfp(t))$ on the score $F(\bfp(t))$ exists and can be used to evaluate the condition $\bar{F}(\bfp(t)) < 0$ explicitly in $t$. The upper bound provides a conservative but fairly accurate ``inflated boundary'' and allows efficient classifications of curves $\bfp(t)$, assuming a radial basis function kernel $k(\bfx, \bfx') = \eta \exp{(-\gamma \Vert \bfx - \bfx' \Vert^2)}$ is used.

\begin{proposition}[\cite{duong2020autonomous}]
\label{prop:score_bounds}
For any $(\bfx_j^-, \alpha_j^-)\in \Lambda^-$, the score $F(\bfx)$ is bounded above by $\bar{F}(\bfx)=k(\bfx, \bfx_*^+) \sum_{i = 1} ^ {M^+} \alpha^+_i \!-\! k(\bfx, \bfx_j^-)\alpha^-_j$ where $\bfx_{*}^{+}$ is the closest positive support vector to $\bfx$.
\end{proposition}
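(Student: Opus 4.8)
The plan is to split $F(\bfx)$ into its positive and negative contributions and bound each separately, exploiting the fact that the radial basis function kernel $k(\bfx,\bfx') = \eta\exp(-\gamma\Vert\bfx-\bfx'\Vert^2)$ is strictly positive and strictly decreasing in the distance $\Vert\bfx-\bfx'\Vert$, together with the positivity of all the weights $\alpha_i^+,\alpha_j^->0$.

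First I would handle the positive sum $\sum_{i=1}^{M^+}\alpha_i^+ k(\bfx_i^+,\bfx)$, seeking an upper bound. Since $\bfx_*^+$ is by definition the closest positive support vector to $\bfx$, we have $\Vert\bfx-\bfx_*^+\Vert \le \Vert\bfx-\bfx_i^+\Vert$ for every $i$. Monotonicity of the RBF kernel in the distance then gives $k(\bfx_i^+,\bfx) \le k(\bfx,\bfx_*^+)$ for all $i$, and because each $\alpha_i^+>0$, multiplying termwise and summing yields $\sum_{i=1}^{M^+}\alpha_i^+ k(\bfx_i^+,\bfx) \le k(\bfx,\bfx_*^+)\sum_{i=1}^{M^+}\alpha_i^+$, which is exactly the first term of $\bar F(\bfx)$.

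Next I would handle the negative sum $\sum_{j'=1}^{M^-}\alpha_{j'}^- k(\bfx_{j'}^-,\bfx)$, but now seeking a \emph{lower} bound, since this term enters $F$ with a minus sign. Every summand $\alpha_{j'}^- k(\bfx_{j'}^-,\bfx)$ is nonnegative because $\alpha_{j'}^->0$ and the kernel is positive, so the whole sum dominates any single term; in particular, for the arbitrary fixed index $j$ named in the statement, $\sum_{j'=1}^{M^-}\alpha_{j'}^- k(\bfx_{j'}^-,\bfx) \ge \alpha_j^- k(\bfx,\bfx_j^-)$. Negating this inequality and adding it to the positive-part bound gives $F(\bfx) \le k(\bfx,\bfx_*^+)\sum_{i=1}^{M^+}\alpha_i^+ - \alpha_j^- k(\bfx,\bfx_j^-) = \bar F(\bfx)$, completing the argument.

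There is no serious obstacle here; the only points requiring care are the opposite inequality directions for the two sums (an upper bound on the positive part against a lower bound on the negative part) and the sign bookkeeping when the negative sum is subtracted. The crucial structural properties throughout are the positivity and radial monotonicity of the RBF kernel and the positivity of all weights $\alpha_i^+,\alpha_j^->0$; these are precisely the hypotheses that make the chain of inequalities valid, and a kernel lacking either property would break the bound.
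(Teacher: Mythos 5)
Your argument is correct and is essentially the same as the paper's: the identical two-sided bounding step (upper-bounding the positive sum by $k(\bfx,\bfx_*^+)\sum_i \alpha_i^+$ via radial monotonicity of the RBF kernel, and lower-bounding the nonnegative negative sum by the single term $\alpha_j^- k(\bfx,\bfx_j^-)$) is exactly what the authors use, as seen in the analogous derivation for $G_3$ in Appendix~\ref{appendix:rvm_prob_bound_amgm}, with the proof of Proposition~\ref{prop:score_bounds} itself deferred to the cited preliminary work. No gaps.
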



To check if a line $\bfp(t)$ collides with the inflated boundary, we find the first time $t_u$ such that $\bar{F}(\bfp(t_u)) \geq 0$. This means that $\bfp(t)$ is classified as free for $t \in [0,t_u)$.
 
\begin{proposition}[\cite{duong2020autonomous}]
\label{prop:line_curve_defensive_checking} 
Consider a ray $\bfp(t) = \bfp_0 + t\bfv$, $t\geq 0$ such that $\bfp_0$ is classified as free, i.e., $\bar{F}(\bfp_0) < 0$, and $\bfv$ is constant. Let $\bfx_i^+$ and $\bfx_j^-$ be arbitrary positive and negative support vectors. Then, any point $\bfp(t)$ with $t\in [0,t_u) \subseteq [0,t_u^*)$ is free for 
\begin{eqnarray}
t_u &:=&  \min_{i \in \{1, \ldots, M^+\}} \rho (\bfp_0, \bfx_i^+, \bfx_j^-), \label{eq:line_curve_t_condition} \\
t_u^* &:=& \min_{i \in \{1, \ldots, M^+\}} \max_{j \in \{1, \ldots, M^-\}} \rho (\bfp_0, \bfx_i^+, \bfx_j^-), \label{eq:line_curve_t_condition_tigher_bound}
\end{eqnarray}
where $\beta = \frac{1}{\gamma}\left(\log (\alpha_j^-) - \log (\sum_{i = 1} ^ {M^+} \alpha_i^+)\right)$ and \\$\scaleMathLine[1.0]{\rho(\bfp_0, \bfx_i^+, \bfx_j^-) = 
\begin{cases} 
      +\infty, & \text{if}\ \bfv^T(\bfx_i^+ - \bfx_j^-) \leq 0  \\
      \frac{\beta - \Vert \bfp_0 - \bfx_j^-\Vert ^2 - \Vert \bfp_0 + \bfx_i^+ \Vert ^2}{2\bfv^T(\bfx_j^- - \bfx_i^+)}, & \text{if}\ \bfv^T(\bfx_i^+ - \bfx_j^-) > 0 
\end{cases}}$.
\end{proposition}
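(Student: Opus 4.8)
The plan is to reduce the free-classification of the entire ray to an explicit, solvable-in-$t$ condition by combining the upper bound $\bar{F}$ from Prop.~\ref{prop:score_bounds} with the monotonicity of the RBF kernel. Since $\bar{F}(\bfp(t))<0$ certifies that $\bfp(t)$ is free, it suffices to determine all $t\ge 0$ for which this holds. The obstacle is that $\bar{F}$ is built from $k(\bfx,\bfx_*^+)$, where the closest positive support vector $\bfx_*^+$ changes as $t$ sweeps along the ray, so $\bar{F}(\bfp(t))$ is not a single smooth function of $t$. I would dissolve this by noting that, for the RBF kernel, $k(\bfx,\cdot)$ is strictly decreasing in distance, so $k(\bfx,\bfx_*^+)=\max_i k(\bfx,\bfx_i^+)$ and, for a fixed negative support vector $\bfx_j^-$, the bound factorizes as
\[
\bar{F}(\bfx) = \max_{i}\left[ k(\bfx,\bfx_i^+)\sum_{l=1}^{M^+}\alpha_l^+ - k(\bfx,\bfx_j^-)\alpha_j^-\right] =: \max_i \bar{F}_{i}(\bfx).
\]
Hence $\bar{F}(\bfp(t))<0$ holds exactly when $\bar{F}_i(\bfp(t))<0$ for \emph{every} positive support vector index $i$, turning one awkward, nearest-neighbor-dependent condition into a finite intersection of conditions over a fixed index set.

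Next I would linearize each pairwise condition $\bar{F}_i(\bfp(t))<0$ in $t$. Substituting $k(\bfx,\bfx')=\eta\exp(-\gamma\Vert\bfx-\bfx'\Vert^2)$, dividing by $\eta>0$, and taking logarithms (both monotone) converts $\bar{F}_i(\bfp(t))<0$ into
\[
\Vert \bfp(t)-\bfx_j^-\Vert^2 - \Vert \bfp(t)-\bfx_i^+\Vert^2 < \beta,
\]
with $\beta=\tfrac{1}{\gamma}\big(\log\alpha_j^- - \log\sum_{l}\alpha_l^+\big)$. Inserting $\bfp(t)=\bfp_0+t\bfv$ and expanding, the quadratic terms $t^2\Vert\bfv\Vert^2$ cancel between the two squared norms, leaving the \emph{linear}-in-$t$ inequality
\[
2t\,\bfv^T(\bfx_i^+-\bfx_j^-) < \beta - \Vert\bfp_0-\bfx_j^-\Vert^2 + \Vert\bfp_0-\bfx_i^+\Vert^2 =: R_i.
\]
Evaluating at $t=0$ and invoking the hypothesis $\bar{F}(\bfp_0)<0$ (equivalently $\bar{F}_i(\bfp_0)<0$ for all $i$) shows $R_i>0$, which anchors the admissible interval at the origin.

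It then remains to solve this scalar inequality by casing on the sign of the coefficient $\bfv^T(\bfx_i^+-\bfx_j^-)$. If $\bfv^T(\bfx_i^+-\bfx_j^-)\le 0$, the left-hand side is nonpositive while $R_i>0$, so the inequality holds for all $t\ge0$ and I set $\rho(\bfp_0,\bfx_i^+,\bfx_j^-)=+\infty$. If $\bfv^T(\bfx_i^+-\bfx_j^-)>0$, dividing yields $t<R_i/\big(2\bfv^T(\bfx_i^+-\bfx_j^-)\big)=:\rho(\bfp_0,\bfx_i^+,\bfx_j^-)$, which matches the stated $\rho$ up to the sign bookkeeping in the displayed formula. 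Because free classification requires \emph{all} pairwise conditions at once, $\bar{F}(\bfp(t))<0$ for $t\in\big[0,\min_i\rho(\bfp_0,\bfx_i^+,\bfx_j^-)\big)=[0,t_u)$, establishing the first claim.

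Finally, the inclusion $[0,t_u)\subseteq[0,t_u^*)$ follows directly: Prop.~\ref{prop:score_bounds} holds for \emph{every} negative support vector, and for each fixed positive index $i$ we have $\rho(\bfp_0,\bfx_i^+,\bfx_j^-)\le\max_{j'}\rho(\bfp_0,\bfx_i^+,\bfx_{j'}^-)$, so taking the minimum over $i$ gives $t_u\le t_u^*$; the sharper horizon $t_u^*$ corresponds to selecting, per positive support vector, the most permissive negative support vector in the comparison. I expect the only genuinely delicate step to be the nearest-neighbor decomposition — correctly replacing the moving $\bfx_*^+$ by a maximum over the fixed support-vector set, and verifying the logarithm preserves the inequality — since it is precisely this reduction that makes the remaining algebra collapse to a closed-form, linear-in-$t$ test.
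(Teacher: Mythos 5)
Your argument for the first claim is correct and follows essentially the same route the paper takes for the generalized version of this result (Prop.~\ref{prop:rvm_prob_bound_lineseg}, proved in Appendix~\ref{appendix:rvm_prob_bound_lineseg}): substitute the RBF kernel into the bound of Prop.~\ref{prop:score_bounds}, take logarithms, observe that the $t^2\|\bfv\|^2$ terms cancel so the condition is linear in $t$, and handle the moving nearest positive support vector by intersecting over all $i$. Your explicit rewriting of $\bar F$ as $\max_i \bar F_i$ is in fact cleaner than the paper's terse ``$\bfx_*^+$ varies with $t$ but belongs to a finite set, so take the minimum over $i$,'' and your observation that $\bar F(\bfp_0)<0$ forces $R_i>0$ (anchoring the interval at $t=0$) is a detail the paper leaves implicit. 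Your reading of the displayed $\rho$ as containing sign typos is also right: the numerator should be $\beta-\|\bfp_0-\bfx_j^-\|^2+\|\bfp_0-\bfx_i^+\|^2$ with denominator $2\bfv^{\!\top}(\bfx_i^+-\bfx_j^-)$, consistent with the analogous $\bar\rho$ in Corollary~\ref{corollary:free_ball}.

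The one genuine gap is your treatment of $t_u^*$. The proposition (read in parallel with Corollary~\ref{corollary:free_ball}) claims that the \emph{larger} interval $[0,t_u^*)$ is itself free, not merely that $[0,t_u)\subseteq[0,t_u^*)$ as sets; your proof only establishes the inclusion. Moreover, the heuristic you offer --- ``per positive support vector, select the most permissive negative support vector'' --- does not yield a valid certificate: choosing a different $j_i$ for each $i$ does not correspond to any single upper bound $\bar F^{(j)}$ on $F$. What this style of argument actually justifies is freeness of $[0,\max_j\min_i\rho)$, since $\bfp(t)$ is certified free when \emph{there exists} a $j$ such that $t<\rho(\bfp_0,\bfx_i^+,\bfx_j^-)$ \emph{for all} $i$; by the minimax inequality this is in general smaller than the stated $\min_i\max_j\rho$. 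The paper itself is ambiguous on this point --- the closing line of Appendix~\ref{appendix:rvm_prob_bound_lineseg} derives $t_u^*=\max_j\min_i\tau$ while the statement of Prop.~\ref{prop:rvm_prob_bound_lineseg} writes $\min_i\max_j\tau$ --- so you are in good company, but a complete proof would either establish the $\min_i\max_j$ form by a different argument or settle for the $\max_j\min_i$ form.
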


For a line segment $(\bfp_A, \bfp_B)$, all points on the segment can be expressed as $\bfp(t_A) = \bfp_A + t_A\bfv_A$, $\bfv_A = \bfp_B - \bfp_A$, $0 \leq t_A \leq 1$ or $\bfp(t_B) = \bfp_B + t_B\bfv_B$, $\bfv_B = \bfp_A - \bfp_B$, $0 \leq t_B \leq 1$. Using the upper bound provided by Eq.~\eqref{eq:line_curve_t_condition} or Eq.~\eqref{eq:line_curve_t_condition_tigher_bound}, we find the free regions $[0, t_{uA})$ and $[0, t_{uB})$ starting from $\bfp_A$ and $\bfp_B$, respectively. If the free regions overlap, the segment is classified as free and vice versa.

We extend line segment classification to general curves by finding a Euclidean ball $\mathcal{B}(\bfp_0, r)$ around $\bfp_0$ whose interior is free of obstacles. 

\begin{corollary}[\cite{duong2020autonomous}]
\label{corollary:free_ball}
Let $\bfp_0 \in \mathcal{C}$ be such that $\bar{F}(\bfp_0) <0$ and let $\bfx_i^+$ and $\bfx_j^-$ be arbitrary positive and negative support vectors. Then, every point inside the Euclidean balls $\mathcal{B}(\bfp_0, r_u) \subseteq \mathcal{B}(\bfp_0, r_u^*)$ is free for:
\begin{align}
r_u &:= \min_{i \in \{1, \ldots, M^+\}} \bar{\rho}(\bfp_0, \bfx_i^+, \bfx_j^-), \label{eq:line_curve_radius}\\
r_u^* &:= \min_{i \in \{1, \ldots, M^+\}} \max_{j \in \{1, \ldots, M^-\}} \bar{\rho}(\bfp_0, \bfx_i^+, \bfx_j^-) \label{eq:line_curve_radius_star}
\end{align}
where $\bar{\rho}(\bfp_0, \bfx_i^+, \bfx_j^-) = \frac{\beta - \|\bfp_0 - \bfx_j^-\| ^2 + \|\bfp_0 - \bfx_i^+ \|^2}{2 \| \bfx_j^- - \bfx_i^+ \|}$ and $\beta = \frac{1}{\gamma} \prl{\log (\alpha_j^-) - \log (\sum_{i = 1} ^ {M^+} \alpha_i^+)}$.
\end{corollary}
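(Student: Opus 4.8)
The plan is to derive the free ball directly from the ray bound in Proposition~\ref{prop:line_curve_defensive_checking}, by viewing the open ball $\mathcal{B}(\bfp_0, r)$ as the union of the rays $\bfp(t) = \bfp_0 + t\bfv$ over all unit directions $\|\bfv\| = 1$ and $t \in [0, r)$. If along each direction the ray is certified free up to distance $t_u(\bfv)$ (resp.\ $t_u^*(\bfv)$), then the ball of radius $\inf_{\|\bfv\|=1} t_u(\bfv)$ (resp.\ $\inf_{\|\bfv\|=1} t_u^*(\bfv)$) is free. The whole argument then reduces to (i) computing the worst-case free distance over directions for a single pair $(\bfx_i^+, \bfx_j^-)$, and (ii) pushing this worst case through the $\min_i$ and $\min_i\max_j$ combinations of Proposition~\ref{prop:line_curve_defensive_checking} to recover $r_u$ and a valid $r_u^*$.

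For step (i), I would first note that, after substituting the radial basis kernel into the bound $\bar{F}$ of Proposition~\ref{prop:score_bounds} and taking logarithms, the condition $\bar{F}(\bfx) < 0$ is equivalent to the affine inequality $\|\bfx - \bfx_j^-\|^2 - \|\bfx - \bfx_i^+\|^2 < \beta$ holding simultaneously for every positive support vector $\bfx_i^+$: the quadratic $\|\bfx\|^2$ terms cancel, so each constraint is a half-space whose boundary is the hyperplane $H_{ij} = \{\bfx : \|\bfx - \bfx_j^-\|^2 - \|\bfx - \bfx_i^+\|^2 = \beta\}$. Setting $\bfx = \bfp_0 + t\bfv$ and expanding the squared norms, the per-pair crossing distance specializes to $\rho(\bfp_0, \bfx_i^+, \bfx_j^-; \bfv) = \frac{\beta - \|\bfp_0 - \bfx_j^-\|^2 + \|\bfp_0 - \bfx_i^+\|^2}{2\bfv^T(\bfx_i^+ - \bfx_j^-)}$ when $\bfv^T(\bfx_i^+ - \bfx_j^-) > 0$, and $+\infty$ otherwise. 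Since $\bfp_0$ is free the numerator is positive, so minimizing over unit directions amounts to maximizing the denominator; by Cauchy--Schwarz $\bfv^T(\bfx_i^+ - \bfx_j^-) \le \|\bfx_i^+ - \bfx_j^-\|$ with equality at $\bfv \propto \bfx_i^+ - \bfx_j^-$. Hence $\inf_{\|\bfv\|=1}\rho(\bfp_0, \bfx_i^+, \bfx_j^-; \bfv) = \bar{\rho}(\bfp_0, \bfx_i^+, \bfx_j^-)$, which is exactly the perpendicular distance from $\bfp_0$ to $H_{ij}$; the infinite-distance directions play no role in the infimum.

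For step (ii), the $r_u$ case is immediate: with $j$ fixed, $t_u(\bfv) = \min_i \rho(\bfp_0, \bfx_i^+, \bfx_j^-; \bfv)$, and since $\inf_\bfv$ and $\min_i$ commute, $\inf_\bfv t_u(\bfv) = \min_i \bar{\rho}(\bfp_0, \bfx_i^+, \bfx_j^-) = r_u$, so $\mathcal{B}(\bfp_0, r_u)$ is free. The step I expect to be the main obstacle is the $r_u^*$ case, where the order of $\inf_\bfv$, $\min_i$ and $\max_j$ matters: $\inf_\bfv$ commutes with $\min_i$ but only satisfies $\inf_\bfv \max_j \ge \max_j \inf_\bfv$. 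Fortunately this inequality points the safe way, giving $\inf_\bfv t_u^*(\bfv) = \min_i \inf_\bfv \max_j \rho(\bfp_0, \bfx_i^+, \bfx_j^-; \bfv) \ge \min_i \max_j \bar{\rho}(\bfp_0, \bfx_i^+, \bfx_j^-) = r_u^*$, so the ball of radius $\inf_\bfv t_u^*(\bfv)$, and a fortiori the possibly smaller ball $\mathcal{B}(\bfp_0, r_u^*)$, is free. Finally $r_u \le r_u^*$ follows since replacing the fixed $j$ by the per-$i$ maximizer can only enlarge each term, yielding the claimed inclusion $\mathcal{B}(\bfp_0, r_u) \subseteq \mathcal{B}(\bfp_0, r_u^*)$.
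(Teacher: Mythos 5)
Your argument is correct and follows essentially the same route as the paper's own treatment: the paper cites this corollary from the preliminary work and proves only its RVM analogue (Prop.~\ref{prop:rvm_prob_bound_ball} in Appendix~\ref{appendix:rvm_prob_bound_ball}), which likewise reduces the ball to the family of rays and applies Cauchy--Schwarz to the direction-dependent term to extract the worst-case crossing distance $\bar{\rho}$. Your explicit verification that $\inf_{\bfv}$ commutes with $\min_i$ and that $\inf_{\bfv}\max_j \geq \max_j \inf_{\bfv}$ points the safe way is a welcome addition, since the paper compresses that step into ``following the same reasoning.''
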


Consider a polynomial $\bfp(t) = \bfp_0 + \bfa_1 t + \bfa_2 t^2 + \ldots + {\bfa}_d t^d$, $t \in [0, t_f]$  from $\bfp_0$ to $\bfp_f := \bfp(t_f)$. Corollary~\ref{corollary:free_ball} shows that all points inside $\mathcal{B}(\bfp_0, r)$ are free for $r = r_u$ or $r^*_u$. If we can find the smallest positive $t_1$ such that $\|\bfp(t_1) - \bfp_0 \| = r$, then all points on the curve $\bfp(t)$ for $t \in [0, t_1)$ are free. We classify the polynomial curve by iteratively covering it by Euclidean balls. If any ball's radius is smaller than a threshold $\varepsilon$, the curve is considered colliding. Otherwise, it is considered free.


The sparse kernel-based model~\cite{duong2020autonomous} is accurate, updates efficiently from streaming range data, and evaluates curves $\bfp(t)$ for collisions without sampling. However, the model does not provide occupancy probability, which is desirable in autonomous navigation applications for distinguishing between unknown and well-observed free regions and for identifying map areas with large uncertainty. This observation motivates us to develop a sparse probabilistic model for online occupancy classification and efficient collision checking.

\section{Online Probit RVM Training}
\label{sec:online_probit_rvm}

In this section, we develop an online probit relevance vector machine (RVM) training algorithm that builds a sparse probabilistic model for online occupancy mapping from streaming range observations. 

\subsection{Relevance Vector Machine Preliminaries}
\label{subsec:sequential_rvm}


A relevance vector machine~\cite{tipping2003fast} is a sparse Bayesian approach for classification. Given a training dataset of $N$ binary-labeled samples $\mathcal{D} = (\bfX, \bfy) = \{(\bfx_l, y_l)\}_l$, where $y_l \in \{-1, 1\}$, an RVM model maintains a sparse set of relevance vectors $\bfx_m$ for $m = 1, \ldots, M$. The relevance vectors map a point $\bfx$ to a feature vector $\bfPhi_{\bfx} = [k_1(\bfx), k_2(\bfx), \ldots, k_M(\bfx)]^\top \in \mathbb{R}^M$ via a kernel function $k_m(\bfx) := k(\bfx, \bfx_m)$. The likelihood of label $y$ at point $\bfx$ is modeled by squashing a linear feature function:
\begin{equation}
\label{eq:rvm_score_fcn}
F(\bfx) := \bfPhi_{\bfx}^\top\bsym{w} + b, 
\end{equation}
with weights $\bsym{w} \in \mathbb{R}^M$ and bias $b \in \mathbb{R}$ through a  function $\sigma : \mathbb{R} \mapsto [0,1]$:
\begin{equation*}
\mathbb{P}(y = 1|\bfx, \bsym{w}) = \sigma(F(\bfx)), \mathbb{P}(y = -1|\bfx, \bsym{w}) = 1 - \sigma(F(\bfx)).
\end{equation*}
Note that Eq.~\eqref{eq:fastron_score} is a special case of~\eqref{eq:rvm_score_fcn} with $b = 0$. Examples of $\sigma$ are the logistic function $\sigma(f) := \frac{1}{1+\exp(-f)}$ and the probit function $\sigma(f) := \int_{-\infty}^f \varphi(z) dz$, where $\varphi(z) := \frac{1}{\sqrt{2\pi}}\exp(-z^2/2)$ is the standard normal probability density. The data likelihood of the whole training set is:
\begin{equation}
\label{eq:rvm_likelihood}
p(\bsym{y}| \bsym{X}, \bsym{w}) = \prod_{l = 1}^N \sigma(F(\bfx_l))^{\frac{1+y_l}{2}}\left(1-\sigma(F(\bfx_l))\right)^{\frac{1-y_l}{2}}.
\end{equation}

An RVM model imposes a Gaussian prior on each weight $w_m$ with zero mean and precision $\xi_m$ (i.e., variance $1/\xi_m$):
\begin{equation}
\label{eq:rvm_w_prior}
p(\bsym{w}|\bsym{\xi}) = (2\pi)^{M/2} \prod_{m = 1}^M \xi_m ^{1/2} \exp\left(-\frac{\xi_m w_m ^2}{2}\right).
\end{equation}
The weight posterior is obtained via Bayes' rule:
\begin{equation}
\label{eq:rvm_w_posterior}
p(\bsym{w}| \bsym{y}, \bsym{X}, \bsym{\xi}) = \frac{p(\bsym{y}| \bsym{X}, \bsym{w}) p(\bsym{w}|\bsym{\xi})}{p(\bsym{y}|\bsym{X}, \bsym{\xi})}.
\end{equation}
The precision $\bfxi$ is determined via type-II maximum likelihood estimation, i.e., by maximizing the marginal likelihood:
\begin{equation}
\label{eq:rvm_marginal_likelihood}
\calL(\bfxi) = \log p(\bsym{y}|\bsym{X}, \bsym{\xi}) = \log \int {p(\bsym{y}| \bsym{X}, \bsym{w}) p(\bsym{w}|\bsym{\xi}) d\bsym{w}}.
\end{equation}
Given a maximizer $\bfxi$, the posterior $p(\bsym{w}| \bsym{y}, \bsym{X}, \bsym{\xi})$ is generally intractable and approximated by a Gaussian distribution $p(\bsym{w}|\bfy, \bfX, \bfmu, \bfSigma)$ with mean $\bfmu$ and covariance $\bfSigma$ using Laplace approximation~\cite{mackay1992evidence}. Training consists in determining $\bfxi$, $\bfmu$, $\bfSigma$.

At test time, due to the Laplace approximation, the predictive distribution of a query point $\bfx$ becomes:
\begin{equation}
\label{eq:predictive_integral}
p(y|\bfx, \bfxi) \approx \int{p(y|\bfx, \bsym{w})p(\bsym{w}|\bfy, \bfX, \bfmu, \bfSigma)}d\bsym{w}.
\end{equation}
The usual formulation of RVM~\cite{tipping2003fast} uses a logistic function for $\sigma$, requiring additional approximations to the integral in~\eqref{eq:predictive_integral}. We emphasize that using a probit function, instead, enables a closed-form for the predictive distribution:
\begin{align}
p(y|\bfx, \bfxi) &\approx \int \sigma(y(\bfPhi_{\bfx}^\top\bsym{w} + b))p(\bsym{w}|\bfy, \bfX, \bfmu, \bfSigma)d\bsym{w} \notag \\
& = \sigma\left(\frac{y(\Phi_{\bfx}^\top\bfmu + b)}{\sqrt{1 + \Phi_{\bfx}^\top \bfSigma \Phi_{\bfx}}}\right).
\label{eq:rvm_probit_score}
\end{align}
This expression enables our results on closed-form classification of curves in Sec.~\ref{sec:poly_curve_check}. 

We review the details of RVM training and then propose an online training algorithm that handles streaming training data.

\subsubsection{Laplace approximation}
Approximation of the weight posterior $p(\bsym{w}| \bsym{y}, \bsym{X}, \bsym{\xi})$ is performed by fitting a Gaussian density function around its mode $\bfmu$, the maximizer of
\begin{equation}
\label{eq:laplace_approximation}
L(\bsym{w}) := \log(p(\bsym{y}| \bsym{X}, \bsym{w}) p(\bsym{w}|\bsym{\xi})).
\end{equation}
Substituting~\eqref{eq:rvm_likelihood} and~\eqref{eq:rvm_w_prior} in~\eqref{eq:laplace_approximation}, we can obtain the gradient and Hessian of $L(\bsym{w})$ for the probit function $\sigma$:
\begin{equation}
\nabla L(\bsym{w}) = \bfPhi^\top \bfdelta - \bfA\bsym{w}, \quad \nabla^2 L(\bsym{w}) = -\bfPhi^\top\bfB\bfPhi -\bfA,
\end{equation}
where $\bfPhi \in \mathbb{R}^{N \times M}$ is the feature matrix with entries $\bfPhi_{i,j} := k_j(\bfx_i)$, $\bfdelta \in \mathbb{R}^N$ is a vector with entries $\delta_l := \frac{\varphi(y_lF(\bfx_l))}{\sigma(y_lF(\bfx_l))} y_l$, $\bfA := \diag(\bfxi) \in \mathbb{R}^{M \times M}$, $\bfB := \diag(\bfD\bfPhi^\top\bsym{w} + b\bfdelta+ \bfD\bfdelta) \in \mathbb{R}^{N \times N}$, and $\bfD := \diag(\bfdelta) \in \mathbb{R}^{N \times N}$. The Hessian is negative semi-definite and, hence, $L(\bsym{w})$ is concave. Setting $L(\bsym{w}) = 0$, we obtain a Gaussian approximation $p(\bsym{w}|\bfy, \bfX, \bfmu, \bfSigma)$ with:
\begin{align}
\bfSigma &= (\bfPhi^\top\bfB\bfPhi + \bfA)^{-1}, \label{eq:approx_sigma}\\
\bfmu &= \bfSigma\bfPhi^\top\bfB \prl{ \bfPhi\bsym{\mu} + \bfB^{-1}\bfdelta }, \label{eq:approx_mu}
\end{align}
where $\bfmu$ is defined implicitly and is obtained via first- or second-order ascent in practice~\cite{nabney2004efficient}.

\subsubsection{Sequential RVM training}

To the determine the precision $\bsym{\xi}$ of the weight prior in~\eqref{eq:rvm_w_prior}, Tipping and Faul\cite{tipping2003fast} proposed a sequential training algorithm that starts from an empty set of relevance vectors, i.e., $\xi_l = \infty$, and incrementally introduces new vectors to maximize the marginal likelihood in~\eqref{eq:rvm_marginal_likelihood}:
\begin{equation}
\calL(\bfxi) \approx -\frac{1}{2}\prl{ N\log 2\pi + \log \det \bfC + \hat{\bsym{t}}^\top\bsym{C}^{-1}\hat{\bsym{t}} }
\end{equation}
where $\hat{\bft} := \bfPhi\bsym{\mu} + \bfB^{-1}\bfdelta$ and $\bfC := \bfB + \bfPhi\bsym{A}^{-1}\bfPhi^\top$. For each $(\bfx_l, y_l)$ in the training set $\mathcal{D}$, define $\theta_l = q_l^2 - s_l$ as follows:
\begin{equation}
s_l := \begin{cases}
\frac{\xi_l S_l}{\xi_l - S_l},& \text{if $\xi_l < \infty$} \\
S_l, &\text{else} \\
\end{cases} \; q_l := \begin{cases}
\frac{\xi_l Q_l}{\xi_l - S_l},& \text{if $\xi_l < \infty$} \\
Q_l, &\text{else} \\
\end{cases}
\end{equation}
where $S_l = \bfPhi_l^\top \bsym{C}^{-1} \bfPhi_l$, $Q_l = \bfPhi_l^\top \bsym{C}^{-1} \hat{\bsym{t}}$, and $\bfPhi_l$ is the $l$-th row of $\bfPhi$. If $\theta_l > 0$, the point $\bfx_l$ is updated (if $\xi_l < \infty$) or added (if $\xi_l = \infty$) as a relevance vector with $\xi_l = \frac{s_l^2}{q_l^2 - s_l}$. If $\theta_l \leq 0$ and $\xi_l < \infty$, the point $\bsym{x_l}$ is removed from the RVM model. These steps are shown in lines 8-12 of Alg.~\ref{alg:rvm_training_online}.

\begin{algorithm}
\caption{Online Probit RVM Training.}
\label{alg:rvm_training_online}
  \footnotesize
	\begin{algorithmic}[1]
		\Require Relevance vectors $\Lambda_{k} = \{(\bsym{x}_i^{(k)}, y_i^{(k)}, \xi_i^{(k)})\}$; training set $\mathcal{D}_{k+1} = \{(\bfx_l, y_l)\}_l$; number of nearest relevance vectors to use $K$ (optional)
		\Ensure Relevance vectors $\Lambda_{k+1} = \{(\bsym{x}_i^{(k+1)}, y_i^{(k+1)}, \xi_i^{(k+1)})\}$; weight posterior mean $\bfmu$ and covariance $\bfSigma$
		\State Initialize $\Lambda_{k+1} = \Lambda_{k}$.
		\If{$K$ is defined} $\Lambda_{local} = K$ nearest relevance vectors from $\Lambda_k$
		\Else{} $\Lambda_{local} = \Lambda_k$.
		\EndIf
		\State $\bfPhi$ = \Call{FeatureMatrix}{$\Lambda_{local}, \mathcal{D}_{k+1}$}
		\State $\xi_l = \infty$ for each $(\bfx_l, y_l)$ in $\mathcal{D}_{k+1}$
		\State $\bsym{\Sigma}, \bsym{\mu}$ = \Call{LaplaceApproximation}{$\Lambda_{local}, \mathcal{D}_{k+1}$}.
		\While{not converged and max number iterations not reached}
			\State Pick a candidate $(\bfx_m, y_m)$ from $\mathcal{D}_{k+1}$.
			\State Calculate $S_m, Q_m, s_m, q_m, \theta_m$.
			\State If $\theta_m > 0$ and $\xi_m = \infty$, add $(\bfx_m, y_m, \xi_m)$ to $\Lambda_{local}$. 
			\State If $\theta_m \leq 0$ and $\xi_m < \infty$, remove $(\bfx_m, y_m, \xi_m)$ from $\Lambda_{local}$. 
			\State If $\theta_m > 0$ and $\xi_m < \infty$, re-estimate $\xi_m = \frac{s_m^2}{q_m^2 - s_m}$ in $\Lambda_{local}$.
			\State $\bsym{\Sigma}, \bsym{\mu}$ = \Call{LaplaceApproximation}{$\Lambda_{local}, \mathcal{D}_{k+1}$}.
		\EndWhile
		\State $\Lambda_{k+1} = \Lambda_{k+1} \cup \Lambda_{local}$.
		\State $\bsym{\Sigma}, \bsym{\mu}$ = \Call{GlobalPosteriorApproximation}{$\Lambda_{k+1}$}
		\State \Return $\Lambda_{k+1}$, $\bsym{\Sigma}$, $\bsym{\mu}$
	\State
	\Function{FeatureMatrix }{$\Lambda, \mathcal{D}$}
		\State Calculate $\bfPhi_{i,j} = k(\bfx_i, \bfx_j)$ for all $\bfx_j \in \Lambda$ and all $\bfx_i \in \mathcal{D}$
		\State \Return $\bfPhi$
	\EndFunction
	\Function{LaplaceApproximation}{$\Lambda, \mathcal{D}$}
		\State Calculate $\bsym{\Sigma}$, $\bsym{\mu}$ for relevance vectors $\Lambda$ using $\mathcal{D}$ (Eq.~\eqref{eq:approx_sigma} and~\eqref{eq:approx_mu}).
		\State \Return $\bsym{\Sigma}, \bsym{\mu}$.
	\EndFunction
	\Function{GlobalPosteriorApproximation}{$\Lambda$}
		\State \Return \Call{LaplaceApproximation}{$\Lambda, \Lambda$}.
	\EndFunction
	\end{algorithmic}
\end{algorithm}

\begin{figure*}[t]
\centering
\begin{subfigure}[b]{0.33\textwidth}
        \centering
        \includegraphics[width=\textwidth]{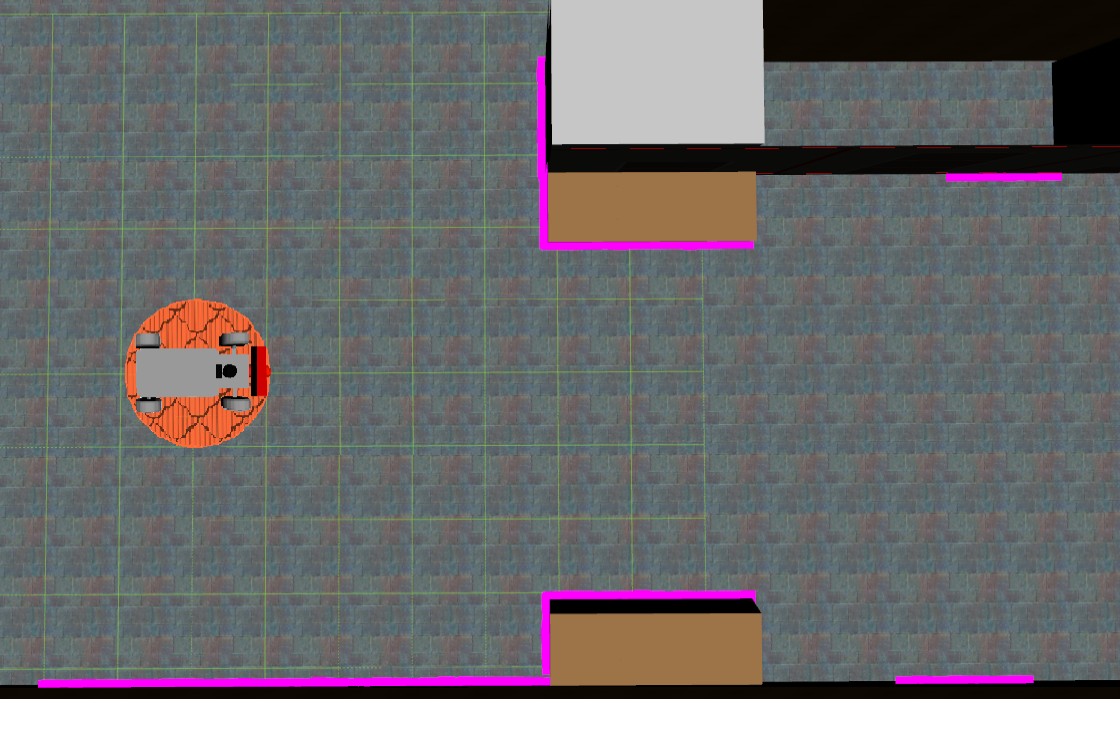}
        \caption{}
        \label{fig:sim_car}
\end{subfigure}%
\hfill%
\begin{subfigure}[b]{0.33\textwidth}
        \centering
        \includegraphics[width=\textwidth]{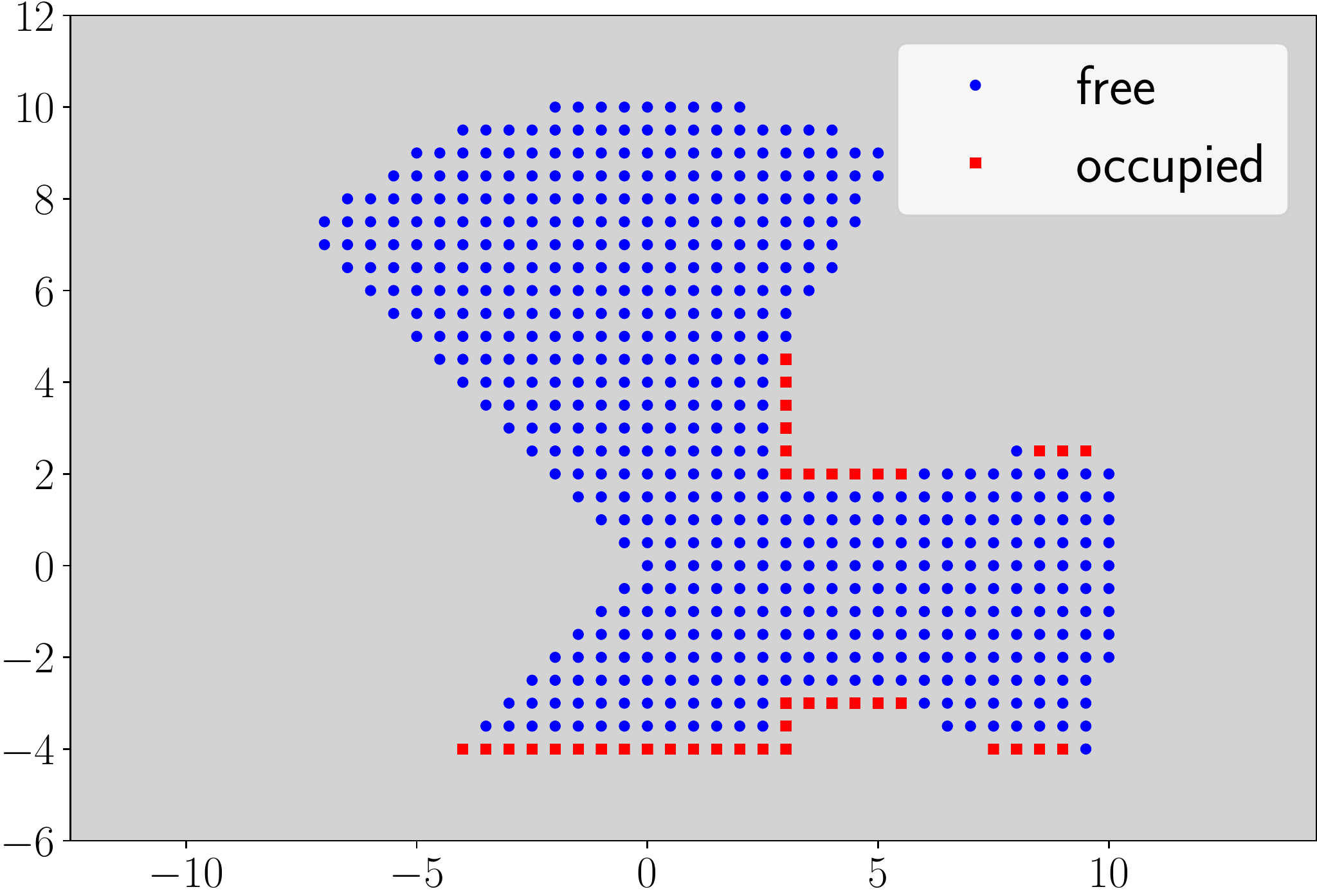}
        \caption{}
        \label{fig:laser_uninflated_ws}
\end{subfigure}%
\hfill%
\begin{subfigure}[b]{0.33\textwidth}
        \centering
        \includegraphics[width=\textwidth]{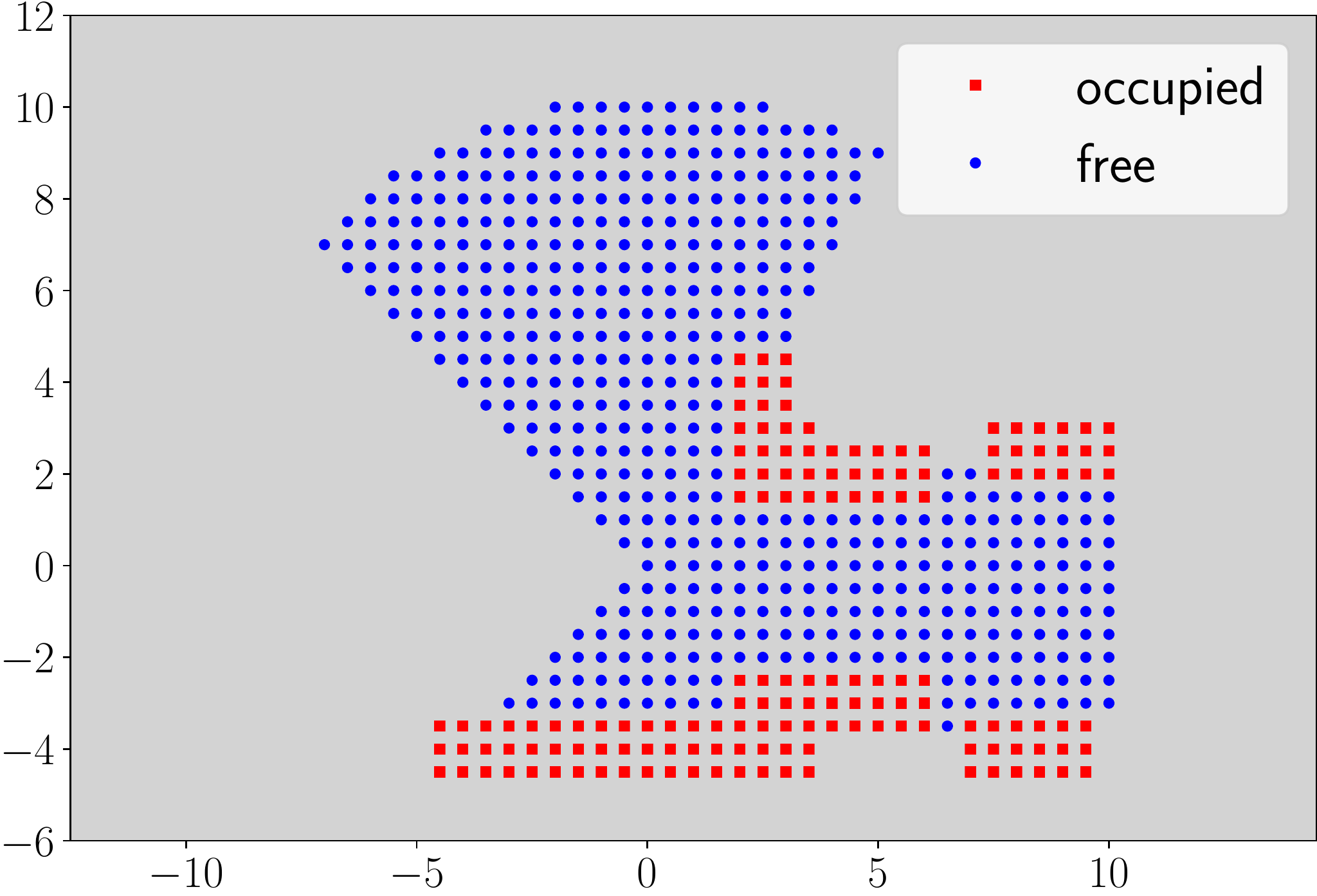}
        \caption{}
        \label{fig:laser_inflated_cs}
\end{subfigure}
\begin{subfigure}[b]{0.33\textwidth}
        \centering
        \includegraphics[width=\textwidth]{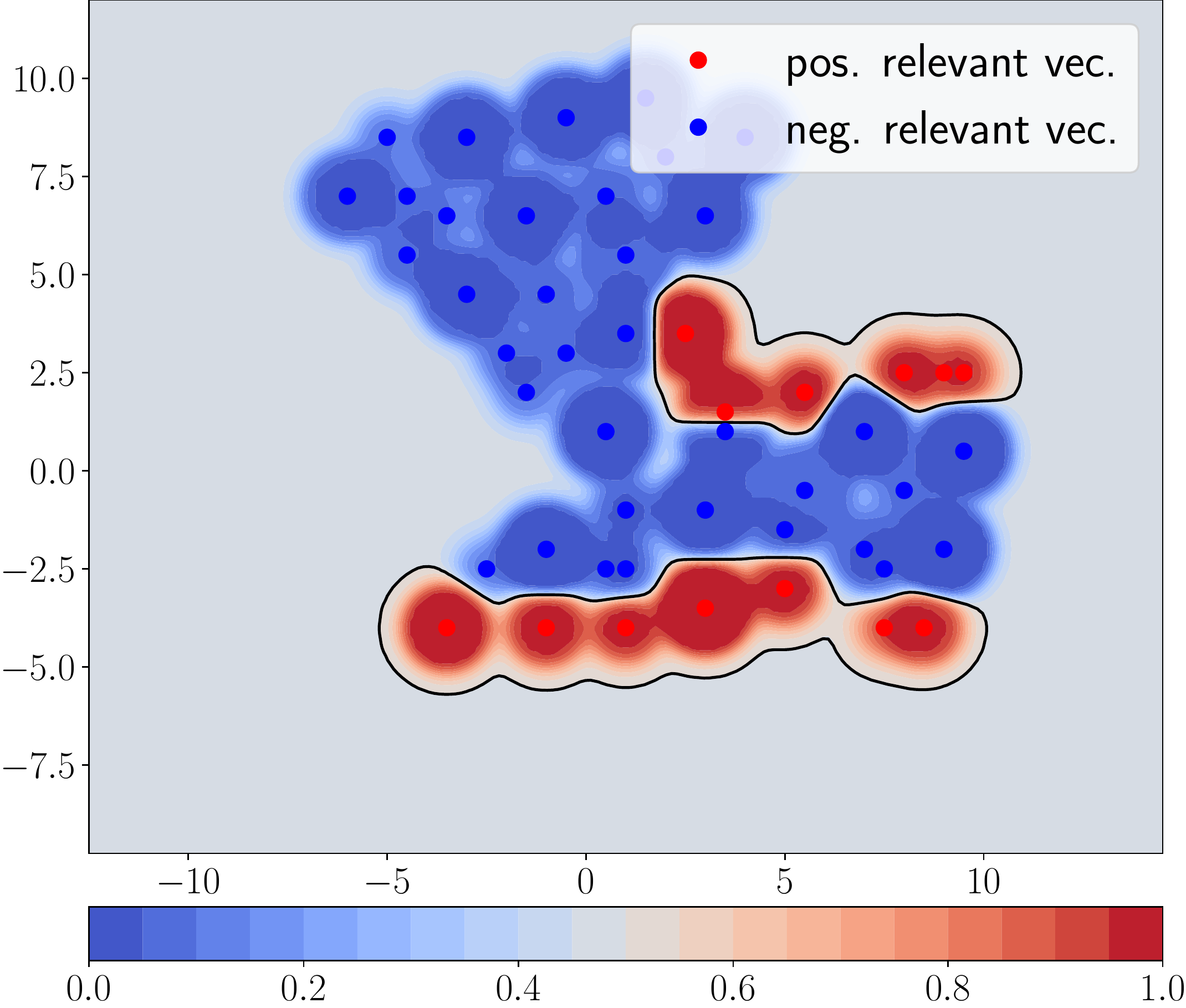}
        \caption{}
        \label{fig:trained_model}
\end{subfigure}%
\hfill%
\begin{subfigure}[b]{0.33\textwidth}
        \centering
        \includegraphics[width=\textwidth]{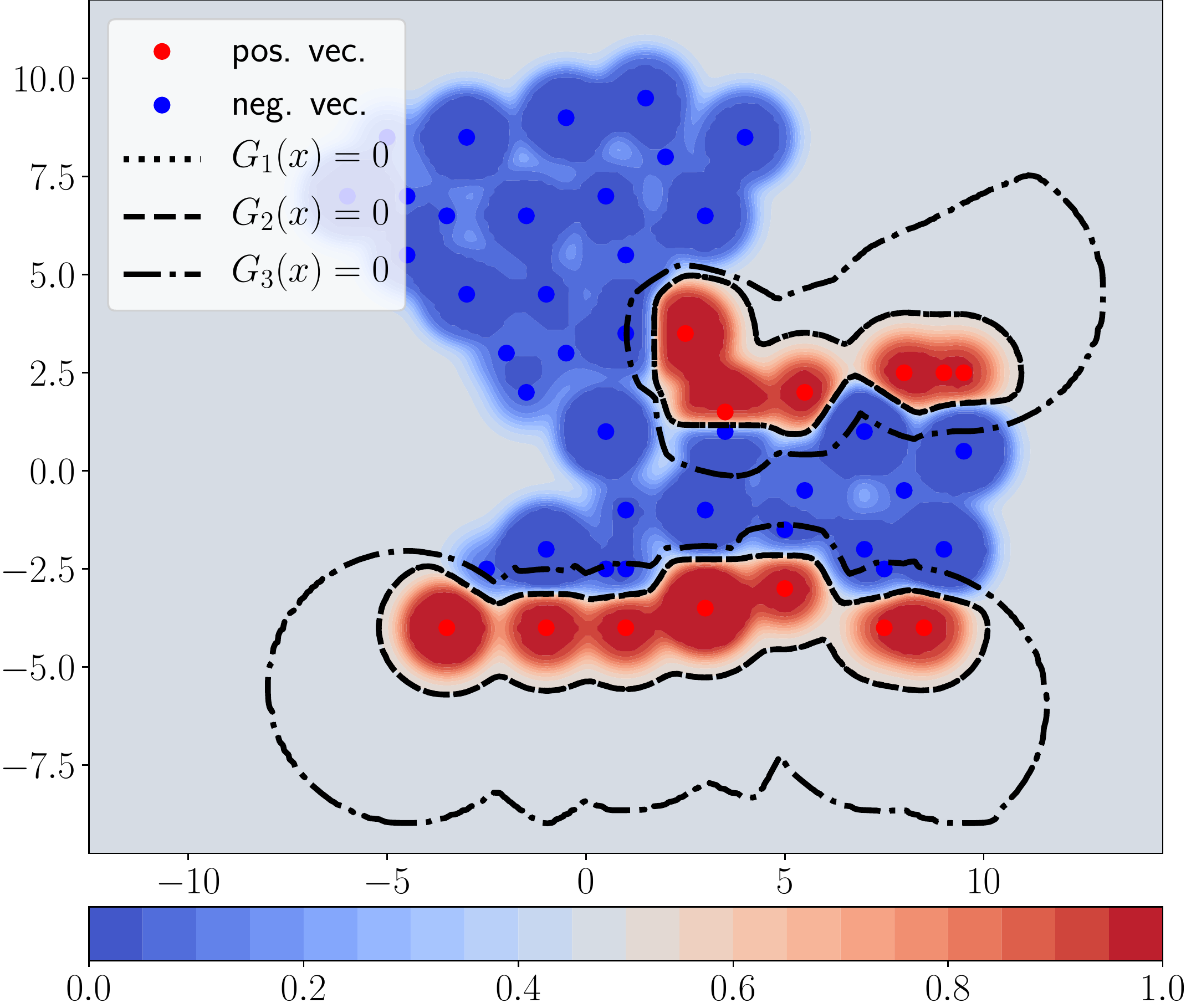}
        \caption{}
        \label{fig:rvm_model_lines}
\end{subfigure}%
\hfill%
\begin{subfigure}[b]{0.33\textwidth}
        \centering
        \includegraphics[width=\textwidth]{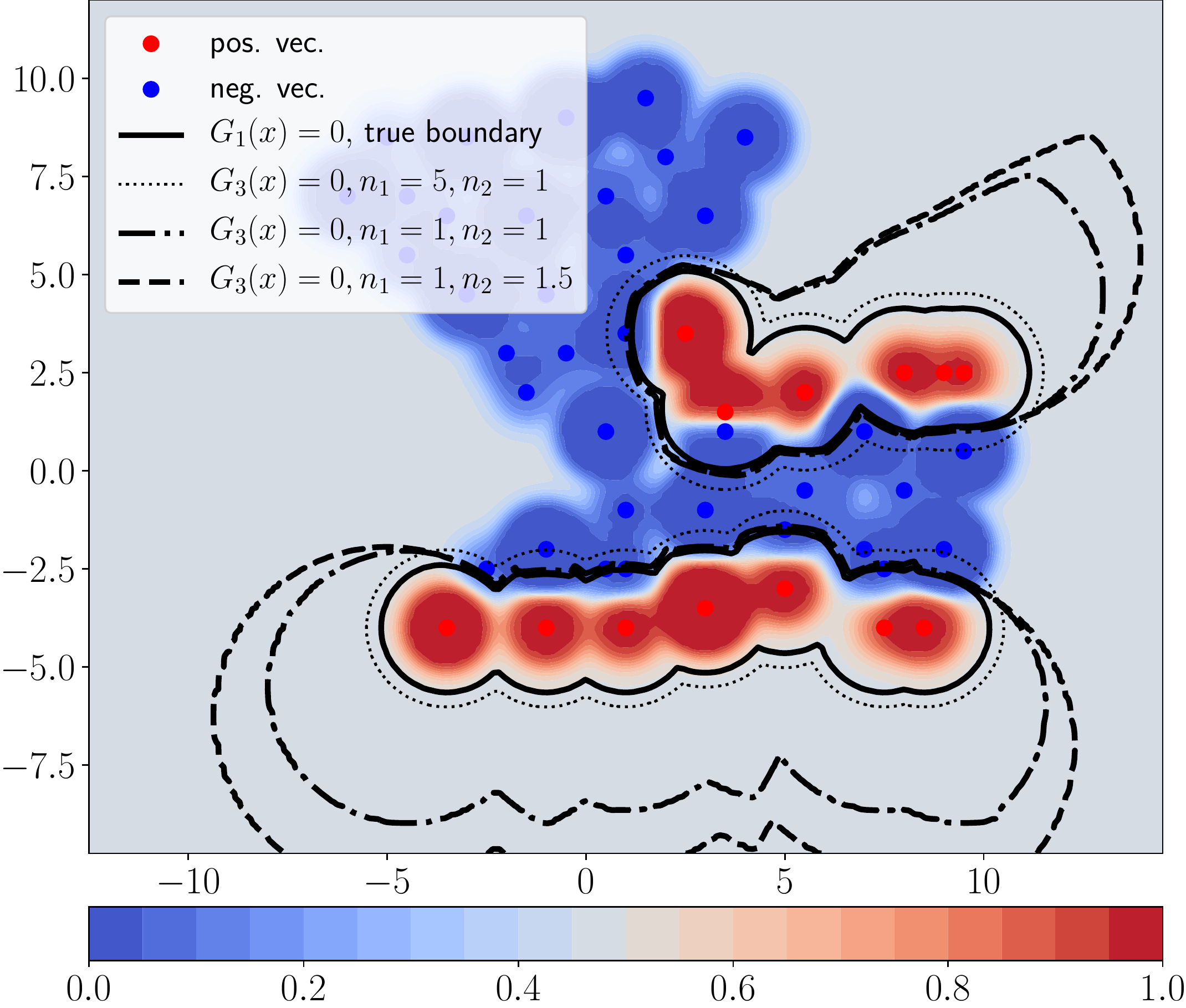}
        \caption{}
        \label{fig:rvm_upperbound_n1n2}
\end{subfigure}
\caption{Example of our mapping method: (a) a robot observing the environment via a laser scan (magenta); (b) work-space samples generated from the laser scan $\bfz_k$; (c) configuration-space samples used as a training set at time $k$; (d) exact decision boundary with bias $b = -0.05$ and classification threshold $e = -0.01$, $\bar{e} = 0.494$; (e) inflated boundaries (c.f. Sec.~\ref{sec:poly_curve_check}) generated by $G_1(\bfx), G_2(\bfx), G_3(\bfx)$ with $n_1 = n_2 = 1$; (f) inflated boundary $G_3(\bfx) = 0$ with various $n_1,n_2$.}
\label{fig:rvm_training_probit}
\end{figure*}

\subsection{Online RVM Training using Streaming Data}
\label{subsec:online_probit_rvm}

Existing techniques for RVM training assume that all data is available a priori. In this section, we develop an online RVM training algorithm that updates the set of relevance vectors $\Lambda_k = \{\bfx_i^{(k)}, y_i^{(k)}, \xi_i^{(k)})\}_i$ incrementally using streaming data. Suppose that $\Lambda_k$ has been obtained based on prior data $\calD_{0}, \ldots, \calD_{k}$. At time $k+1$, a new training set $\calD_{k+1}$ is received. The training set generation depends on the application. We construct $\calD_{k+1}$ using a lidar scan $\bfz_{k+1}$ of an unknown environment as detailed in Sec.~\ref{subsec:online_mapping}. New relevance vectors are added to $\Lambda_k$ to correctly classify the latest training set $\calD_{k+1}$ without affecting the accuracy of the classification on the prior data and maintaining the sparsity of the model.

Alg.~\ref{alg:rvm_training_online} presents our online probit RVM training approach. The algorithm starts with the existing set of relevance vectors $\Lambda_k$ and adds new relevance vectors based on the samples in $\mathcal{D}_{k+1}$ using the sequential training approach in Sec.~\ref{subsec:sequential_rvm}. Instead of using the feature matrix $\bfPhi$ (line 4) associated with all prior relevance vectors, we use a feature matrix approximation based on a local set $\Lambda_{local}$ of $K$ nearest relevance vectors (line 2). Sec.~\ref{sec:complexities} provides a discussion on the computational improvements and assumptions of the score function approximation resulting from using $\Lambda_{local}$ instead of $\Lambda_k$. For test time classification, we compute the mean $\bfmu$ and covariance $\bfSigma$ of the Laplace approximation to the weight posterior according to Eq. \eqref{eq:approx_mu} and \eqref{eq:approx_sigma}. Laplace approximation requires all data $\mathcal{D}=\cup_{i=1}^{k+1} \mathcal{D}_i$, used for training up to time $k+1$ but only the local dataset $\mathcal{D}_{k+1}$ is available. Interestingly, the set $\Lambda_{k+1}$ of relevance vectors itself globally and sparsely represents all the data used for training and, therefore, can be used for Laplace approximation (line 15). If additional computation for Laplace approximation is not feasible, one might directly store the weight mean $\bfmu$ and covariance $\bfSigma$ (line 15) over time. The memory requirements for either case are discussed in Sec. \ref{sec:complexities}.

Fig.~\ref{fig:sim_car} depicts a ground robot equipped with a lidar scanner whose goal is to build an occupancy map of the environment. Fig.~\ref{fig:laser_inflated_cs} plots the training set $\mathcal{D}_{k+1}$ generated from the lidar scan $\bfz_{k+1}$, assuming the current set of relevance vectors $\Lambda_k$ is empty. Fig.~\ref{fig:trained_model} shows the trained RVM model as a sparse set of relevance vectors, serving as a sparse probabilistic occupancy map of the environment, incrementally updated via the streaming lidar scans. A map representation is useful for autonomous navigation (Problem~\ref{problem_formulation_unknown_env}) only if it allows checking potential robot trajectories $\bfs(t)$ for collisions. We propose classification methods for points, line segments, and general curves next.

\section{RVM Classification of Points, Lines, and Curves}
\label{sec:poly_curve_check}

This section discusses classification using the predictive distribution in Eq. \eqref{eq:rvm_probit_score} and makes a connection with our preliminaries results in \cite{duong2020autonomous}. Commonly, machine learning models are only able to classify point queries but applications, such as robot trajectory planning, may requires classification of general curves. This can be done by successively checking a dense set of points, sampled along the curve. However, we show that under certain assumptions on the kernel function and the decision threshold, line and general curve classification based on the RVM decision boundary can be performed directly and efficiently, based on the closed-form of the predictive distribution in Eq.~\eqref{eq:rvm_probit_score}, without the need to sample.

\subsection{RVM Classification of Points}
\label{subsec:point_classification}

Consider a set $\Lambda$ of $M$ relevance vectors with prior weight precision $\bfxi$ and mean $\bfmu$ and covariance $\bfSigma$ of the approximate weight posterior $p(\bsym{w}|\bfy, \bfX, \bfmu, \bfSigma)$. To classify a query point $\bfx$ using the RVM model, we place a threshold $\bar{e}=\sigma(e)$ on the probability $\mathbb{P}(y = 1| \bfx, \bfxi)$ (Def.~\ref{def:threshold_classification}). Fig.~\ref{fig:trained_model} illustrates the decision boundary defined by Def.~\ref{def:threshold_classification} with $\bar{e} = \sigma(e) = 0.494$, i.e., $e = -0.01$.
\begin{definition}
\label{def:threshold_classification}
Let $\bar{e} \in [0,1]$ and $e := \sigma^{-1}(\bar{e})$. A point $\bfx$ is classified as ``-1'' (\NEW{free}) if
\begin{equation}
\label{eq:threshold_classification_cond1}
\mathbb{P}(y = 1| \bfx, \bfxi) =\sigma\left(\frac{\Phi_{\bfx}^\top\bfmu + b}{\sqrt{1 + \Phi_{\bfx}^\top \bfSigma \Phi_{\bfx}}}\right) \leq \bar{e},
\end{equation}
or, equivalently, if
\begin{equation}
\label{eq:rvm_classification_cond3}
G_1(\bfx) := \bfPhi_{\bfx}^\top\bfmu + b - e\sqrt{1 + \bfPhi_{\bfx}^\top \bfSigma \bfPhi_{\bfx}} \leq 0.
\end{equation}
\end{definition}

%


The condition in Eq. \eqref{eq:rvm_classification_cond3} can be verified for a given point but it is challenging to obtain an explicit expression in terms of $\bfx$. If, instead of a point $\bfx$, we consider a time-parameterized curve $\bfp(t)$, then Eq. \eqref{eq:rvm_classification_cond3} becomes a nonlinear programming feasibility problem in $t$. To avoid nonlinear programming, we develop a series of upper bounds for $G_1(\bfx)$ that make the condition for classifying a point as free (i.e., $y = -1$) more conservative but with a simpler dependence on $\bfx$.

\begin{restatable}{proposition}{propRvmProbBound}
\label{prop:rvm_prob_bound}
For a non-negative kernel function $k_m(\bfx) := k(\bfx, \bfx_m)$, a point $\bfx$ is classified as ``-1'' if
\begin{equation}
\label{eq:rvm_prob_bound}
G_2(\bfx) :=\! \sum_{m = 1} ^{M}(\mu_{m} - e\mathbbm{1}_{\{\textcolor{blue}{e\leq 0}\}}\sqrt{\lambda_{\max}})k_m(\bfx) + b - e \leq 0,
\end{equation}
where $\lambda_{\max} \geq 0$ is the largest eigenvalue of the covariance $\bfSigma$, $\mu_{m}$ is the $m$th element of the mean $\bfmu$, and $\mathbbm{1}_{\{\textcolor{blue}{e\leq 0}\}}$ is an indicator function which equals $1$ if $\textcolor{blue}{e\leq 0}$ and $0$, otherwise.
\end{restatable}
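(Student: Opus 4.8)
The plan is to show that $G_2$ is a pointwise upper bound on $G_1$, i.e. $G_1(\bfx)\le G_2(\bfx)$ for every $\bfx$; once this is in hand, the hypothesis $G_2(\bfx)\le 0$ immediately forces $G_1(\bfx)\le 0$, which by Def.~\ref{def:threshold_classification} (specifically Eq.~\eqref{eq:rvm_classification_cond3}) certifies that $\bfx$ is classified as ``$-1$''. Comparing the two expressions, the linear term $\bfPhi_{\bfx}^\top\bfmu = \sum_m \mu_m k_m(\bfx)$ and the bias $b$ appear identically in both, so the whole argument reduces to bounding the square-root contribution $-e\sqrt{1+\bfPhi_{\bfx}^\top\bfSigma\bfPhi_{\bfx}}$ from above by $-e\,\mathbbm{1}_{\{e\le 0\}}\sqrt{\lambda_{\max}}\sum_m k_m(\bfx) - e$.

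The treatment then splits into two cases according to the sign of $e$, which is exactly what the indicator $\mathbbm{1}_{\{e\le 0\}}$ encodes. When $e>0$, the multiplier $-e$ is negative, so I would use only the trivial lower bound $\sqrt{1+\bfPhi_{\bfx}^\top\bfSigma\bfPhi_{\bfx}}\ge 1$, valid because $\bfSigma$ is positive semidefinite and hence $\bfPhi_{\bfx}^\top\bfSigma\bfPhi_{\bfx}\ge 0$. This gives $-e\sqrt{1+\bfPhi_{\bfx}^\top\bfSigma\bfPhi_{\bfx}}\le -e$, which matches $G_2$ with the indicator equal to $0$.

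When $e\le 0$, the multiplier $-e$ is non-negative and I need an upper bound on the square root, obtained through a short chain of standard inequalities. First, $\bfPhi_{\bfx}^\top\bfSigma\bfPhi_{\bfx}\le \lambda_{\max}\|\bfPhi_{\bfx}\|_2^2$ by the Rayleigh-quotient bound for the largest eigenvalue of $\bfSigma$; next, subadditivity of the square root gives $\sqrt{1+\lambda_{\max}\|\bfPhi_{\bfx}\|_2^2}\le 1+\sqrt{\lambda_{\max}}\,\|\bfPhi_{\bfx}\|_2$; finally the norm inequality $\|\bfPhi_{\bfx}\|_2\le \|\bfPhi_{\bfx}\|_1 = \sum_m k_m(\bfx)$ collapses the bound into a linear form, and it is precisely here that the non-negativity hypothesis $k_m(\bfx)\ge 0$ is used, since it turns $\|\bfPhi_{\bfx}\|_1$ into $\sum_m k_m(\bfx)$. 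Multiplying through by $-e\ge 0$ yields $-e\sqrt{1+\bfPhi_{\bfx}^\top\bfSigma\bfPhi_{\bfx}}\le -e - e\sqrt{\lambda_{\max}}\sum_m k_m(\bfx)$, reproducing $G_2$ with the indicator equal to $1$.

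The argument is essentially a bookkeeping exercise over these inequalities; the one delicate point, and the main obstacle, is keeping the direction of each bound consistent with the sign of $-e$, because the square-root factor must be estimated from below in the case $e>0$ and from above in the case $e\le 0$. I would therefore organize the write-up around this single case split and flag that the non-negative-kernel assumption enters only in the $\ell^2$-to-$\ell^1$ step, which is the structural feature that makes $G_2$ a linear combination of the $k_m(\bfx)$ and hence amenable to the closed-form curve classification developed later in Sec.~\ref{sec:poly_curve_check}.
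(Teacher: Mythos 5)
Your proposal is correct and follows essentially the same route as the paper's proof in Appendix~\ref{appendix:rvm_prob_bound}: both arguments sandwich $\sqrt{1+\bfPhi_{\bfx}^\top\bfSigma\bfPhi_{\bfx}}$ between $1$ and $1+\sqrt{\lambda_{\max}}\sum_m k_m(\bfx)$ via the largest-eigenvalue bound, subadditivity of the square root, and the $\ell^2$-to-$\ell^1$ step enabled by the non-negative kernel, with the sign of $e$ selecting which side of the sandwich is needed. Your write-up merely makes the case split and the role of each inequality more explicit than the paper's compressed one-line version.
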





\begin{proof}
Please refer to Appendix \ref{appendix:rvm_prob_bound}.
\end{proof}

The relaxed condition in Eq.~\eqref{eq:rvm_prob_bound} adjusts the weights of the relevance vectors by an amount of $\delta\mu = -e\mathbbm{1}_{\{\textcolor{blue}{e\leq 0}\}}\lambda_{max} \geq 0$. Intuitively, this increases the effect of the positive relevance vectors, leading to a more conservative condition than Def.~\ref{def:threshold_classification}. Prop.~\ref{prop:rvm_prob_bound} also allows us to use only the largest eigenvalue $\lambda_{max}$ of $\bfSigma$ for point classification, which is easier to obtain and store than the whole covariance matrix $\bfSigma$. Methods for computing $\lambda_{max}$ are discussed in Sec.~\ref{subsec:comp_improvements}.

To simplify the notation, let $\nu_m := \mu_{m} - e\mathbbm{1}_{\{\textcolor{blue}{e\leq 0}\}}\lambda_{max}$ be the corrected relevance vector weights and split $\Lambda$ into $M^+$ positive relevance vectors $\Lambda^+ = \{(\bfx_m^+, \nu_m^+)\}$ and $M^-$ negative relevance vectors $\Lambda^- = \{(\bfx_m^-, \nu_m^-)\}$, where $\nu_m^+ = \nu_m$ if $\nu_m > 0$ and $\nu_m^- = -\nu_m$ if $\nu_m < 0$. Now, Eq. \eqref{eq:rvm_prob_bound} can be re-written as:
\begin{equation}
\label{eq:rvm_prob_bound_rewritten}
	G_2(\bfx) = \sum_{i = 1} ^{M^+}\nu_i^+k(\bfx, \bfx_i^+) - \sum_{j = 1} ^{M^-}\nu_j^-k(\bfx, \bfx_j^-) + b - e \leq 0.
\end{equation}
%
Hence, Prop.~\ref{prop:rvm_prob_bound} allows us to make an important connection between sparse kernel classification with a `hard' decision threshold (Sec.~\ref{sec:kernel_map_summary}) and its Bayesian counterpart (Sec.~\ref{subsec:online_probit_rvm}). Specifically, after the relevance vector weight correction, Eq.~\eqref{eq:rvm_prob_bound} is equivalent to the kernel perceptron score in Eq.~\eqref{eq:fastron_score} except for the bias term $b-e$.

\subsubsection{The role of the bias term}

One of the motivations for developing a Bayesian map representation is to distinguish between observed and unobserved regions in the environment. Intuitively, as a query point $\bfx$ is chosen further away from ``observed'' regions, where training data has been obtained, its correlation with existing relevance vectors, measured by $k(\bfx,\bfx_m)$, decreases. To capture and exploit this property, we assume that the kernel has a common radial basis function structure that depends only on a quadratic norm $\|\bfGamma(\bfx - \bfx_m)\|$. 

\begin{assumption}
\label{assumption:rbf}
Let $k(\bfx, \bfx_m) := \eta\exp\prl{-\|\bfGamma(\bfx - \bfx_m)\|^2}$ with parameters $\eta > 0$ and $\bfGamma \in \mathbb{R}^{d \times d}$.
\end{assumption}

In our application, the kernel parameters $\eta$ and $\bfGamma$ may be optimized offline via automatic relevance determination~\cite{neal2012bayesian} using training data from known occupancy maps. Under this assumption, the feature vector $\bfPhi_{\bfx}$ tends to $0$ as $\bfx$ goes towards unobserved regions and the occupancy probability $\mathbb{P}(y = 1|\bfx, \bfxi)$ tends to $\sigma(b)$ in Eq. \eqref{eq:threshold_classification_cond1}. Therefore, the value of $\sigma(b)$ represents the occupancy probability of points in the unknown regions. In other words, $\sigma(b)$ specifies how much we trust that unknown regions are occupied and should be a constant. For this reason, the bias $b$ is fixed in our online RVM training algorithm. \NEW{If we are optimistic about the unknown regions, the parameter $b$ can be set to a large negative number, i.e. $\sigma(b) \approx 0$, and the decision boundary shrinks towards the occupied regions. If we want the robot to be cautious about the unknown regions, the parameter $b$ can be set to a large positive number, i.e. $\sigma(b) \approx 1$, and the decision boundary expands towards the unknown regions. }A common assumption in motion planning~\cite{freespaceassumption} is to treat unknown regions as free in order to allow trajectory planning to goals in the unknown space. In the context of this paper, this means that the occupancy probability of points in unknown regions, $\sigma(b)$, should be lower than or equal to the decision threshold $\bar{e} = \sigma(e)$ in Def.~\ref{def:threshold_classification}.

\begin{assumption}
\label{assumption:threshold_b}
Assume that $e \geq b$ and, hence, $\bar{e} \geq \sigma(b)$.
\end{assumption}



\subsubsection{RVM Classification with $e = b$}
A natural choice for the occupancy probability of unknown regions, $\sigma(b)$, is to set it exactly equal to the decision threshold between free and occupied space, i.e., $e = b$. \NEW{While the sparse kernel-based map (SKM) in our preliminary work \cite{duong2020autonomous} does not have either the bias parameter $b$ or the threshold $e$ in its model, Eq.~\eqref{eq:rvm_prob_bound_rewritten} \NEW{have the same form} as Eq.~\eqref{eq:fastron_score} when $b=e$, and are exactly equivalent (i.e. $\nu_m = \mu_m, \forall m$) when $b=e=0$. Therefore, all results in Sec.~\ref{sec:kernel_map_summary} for classification of points, lines, and curves can be reused.}


\begin{corollary}
\label{corollary:rvm_fastron_connection}
If the bias $b$ in Eq.~\eqref{eq:rvm_score_fcn} is used as the decision threshold for classification in Def.~\ref{def:threshold_classification}, i.e., $e = b$, then, according to Prop.~\ref{prop:rvm_prob_bound} and Eq.~\eqref{eq:rvm_prob_bound_rewritten}, $\bfx$ is classified as ``-1'' if:
\begin{equation}
\label{eq:rvm_prob_bound_be_equal}
  \sum_{i = 1} ^{M^+}\nu_i^+k(\bfx, \bfx_i^+) - \sum_{j = 1} ^{M^-}\nu_j^-k(\bfx, \bfx_j^-) \leq 0.
\end{equation}
Hence, Prop.~\ref{prop:line_curve_defensive_checking} and Corollary~\ref{corollary:free_ball} hold for line and curve classification using a Relevance Vector Machine model.
\end{corollary}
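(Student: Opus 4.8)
The plan is to reduce the RVM classification condition with $e = b$ directly to the kernel perceptron score condition of Eq.~\eqref{eq:fastron_score}, after which Prop.~\ref{prop:line_curve_defensive_checking} and Corollary~\ref{corollary:free_ball} apply with essentially no new work. First I would start from the conservative point-classification condition $G_2(\bfx) \leq 0$ established in Prop.~\ref{prop:rvm_prob_bound}, written in the split form of Eq.~\eqref{eq:rvm_prob_bound_rewritten}, namely $\sum_{i}\nu_i^+ k(\bfx,\bfx_i^+) - \sum_j \nu_j^- k(\bfx,\bfx_j^-) + b - e \leq 0$. Imposing the hypothesis $e = b$ makes the bias contribution $b - e$ vanish, so the inequality collapses exactly to $\sum_{i}\nu_i^+ k(\bfx,\bfx_i^+) - \sum_j \nu_j^- k(\bfx,\bfx_j^-) \leq 0$, which is the claimed classification rule for ``$-1$''.

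Next I would observe that this is algebraically identical to the score condition $F(\bfx) \leq 0$ of Eq.~\eqref{eq:fastron_score} under the identification $\alpha_i^+ \mapsto \nu_i^+$ and $\alpha_j^- \mapsto \nu_j^-$, and verify that the sign conventions match: by the construction preceding Eq.~\eqref{eq:rvm_prob_bound_rewritten}, the corrected weights satisfy $\nu_i^+ > 0$ and $\nu_j^- > 0$, which is precisely the positivity requirement $\alpha_i^+, \alpha_j^- > 0$ underpinning the preliminary score. Since Prop.~\ref{prop:score_bounds}, Prop.~\ref{prop:line_curve_defensive_checking}, and Corollary~\ref{corollary:free_ball} were derived purely from the structure of $F$ as a positively weighted difference of RBF kernels, they transfer verbatim once $\nu$ is substituted for $\alpha$; in particular the upper bound $\bar F$ and the free-ray and free-ball radii $t_u, t_u^*, r_u, r_u^*$ carry over with the same substitution, so lines and curves can be checked exactly as in Sec.~\ref{sec:kernel_map_summary}.

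The one point requiring care, and what I expect to be the only genuine obstacle, is the kernel parametrization. The preliminary results assume the isotropic kernel $k(\bfx,\bfx') = \eta \exp(-\gamma\|\bfx-\bfx'\|^2)$, whereas the RVM development uses the anisotropic kernel of Assumption~\ref{assumption:rbf} with a general matrix $\bfGamma$. In the isotropic special case $\bfGamma = \sqrt{\gamma}\,\bfI$ the two forms coincide and the transfer is immediate. For general $\bfGamma$ I would apply the change of variables $\bfx \mapsto \bfGamma\bfx$, under which $\|\bfGamma(\bfx-\bfx_m)\|^2$ becomes an ordinary squared Euclidean distance and the kernel reduces to the isotropic form; line segments map to line segments, preserving the ray classification of Prop.~\ref{prop:line_curve_defensive_checking}, while Euclidean balls map to ellipsoids, so Corollary~\ref{corollary:free_ball} yields a free ellipsoid in the transformed coordinates (reducing to a free ball exactly when $\bfGamma$ is a scalar multiple of $\bfI$). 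Apart from this normalization step, the argument is a direct term-by-term matching of the two score expressions.
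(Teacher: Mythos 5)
Your proposal is correct and matches the paper's own (essentially proof-free) justification: setting $e=b$ kills the $b-e$ term in Eq.~\eqref{eq:rvm_prob_bound_rewritten}, and the resulting expression has the same form as Eq.~\eqref{eq:fastron_score} with $\nu_i^+,\nu_j^->0$ playing the role of $\alpha_i^+,\alpha_j^->0$, so the preliminary results transfer directly. Your extra remark about the anisotropic kernel $\bfGamma$ is a legitimate refinement the paper glosses over here (it only resolves the general-$\bfGamma$ case later, in Prop.~\ref{prop:rvm_prob_bound_lineseg} and Prop.~\ref{prop:rvm_prob_bound_ball}, where balls indeed become ellipsoids), but it does not change the substance of the argument.
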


\subsubsection{RVM Classification with $e \geq b$}
For a general decision threshold, $e \geq b$, and a kernel function $k_m(\bfx)$ satisfying Assumption~\ref{assumption:rbf}, we develop an explicit condition for classifying a point $\bfx$ as free.


\begin{restatable}{proposition}{propRvmProbBoundAMGM}
\label{prop:rvm_prob_bound_amgm}
For integers $n_1, n_2 \geq 1$, define $\rho(a,b) := (n_1+n_2){ \left(\frac{a}{n_1} \right)^{\frac{n_1}{n_1+n_2}} \left(\frac{b}{n_2}\right)^{\frac{n_2}{n_1+n_2}}}$. A point $\bfx$ is classified as ``-1'' if
\begin{equation}
\scaleMathLine[0.89]{G_3(\bfx) \!:=\! \biggl(\sum_{i = 1} ^{M^+}\nu_i^+\!\biggr) k(\bfx,\bfx_*^+) - \rho(e-b,\nu_j^- k(\bfx,\bfx_j^-)) \leq 0,}
\end{equation}
where $\bfx^+_*$ is the closest positive relevance vector to $\bfx$ and $\bfx_j^-$ is any negative relevance vector.
\end{restatable}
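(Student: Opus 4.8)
The plan is to show that $G_3$ is a pointwise upper bound for the quantity on the left of Eq.~\eqref{eq:rvm_prob_bound_rewritten}, so that $G_3(\bfx)\le 0$ forces $G_2(\bfx)\le 0$; the classification claim then follows immediately from Prop.~\ref{prop:rvm_prob_bound}. Writing $G_2(\bfx)=\sum_{i}\nu_i^+ k(\bfx,\bfx_i^+)-\sum_{j}\nu_j^- k(\bfx,\bfx_j^-)+(b-e)$, I would bound the positive sum, the negative sum, and the constant $b-e$ separately, and then assemble the three estimates into $G_2(\bfx)\le G_3(\bfx)$.

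For the positive part, I would reuse the argument behind Prop.~\ref{prop:score_bounds}: under the RBF structure of Assumption~\ref{assumption:rbf}, $k(\bfx,\cdot)$ is a decreasing function of the quadratic distance, so the closest positive relevance vector $\bfx_*^+$ satisfies $k(\bfx,\bfx_*^+)\ge k(\bfx,\bfx_i^+)$ for every $i$. Since $\nu_i^+>0$, this gives $\sum_i \nu_i^+ k(\bfx,\bfx_i^+)\le\big(\sum_i \nu_i^+\big)k(\bfx,\bfx_*^+)$, which is exactly the positive term of $G_3$. For the negative part, because each $\nu_j^->0$ and the kernel is non-negative, dropping all negative relevance vectors except the chosen $\bfx_j^-$ only increases $G_2$, i.e. $-\sum_j \nu_j^- k(\bfx,\bfx_j^-)\le -\nu_j^- k(\bfx,\bfx_j^-)$; this is what lets $G_3$ retain a single negative vector, as in Prop.~\ref{prop:score_bounds}.

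The crux is then merging the leftover term $-\nu_j^- k(\bfx,\bfx_j^-)$ with the constant $(b-e)$ into the single expression $-\rho(e-b,\nu_j^- k(\bfx,\bfx_j^-))$. Setting $a:=e-b$ and $c:=\nu_j^- k(\bfx,\bfx_j^-)$, both non-negative by Assumption~\ref{assumption:threshold_b} and by $\nu_j^->0,\ k\ge 0$, the required inequality $-\rho(a,c)\ge-(a+c)$ is equivalent to $\rho(a,c)\le a+c$. I expect this to be the main (and essentially the only) nontrivial step, and it is exactly the weighted arithmetic--geometric mean inequality with weights $\tfrac{n_1}{n_1+n_2},\tfrac{n_2}{n_1+n_2}$ applied to $\tfrac{a}{n_1},\tfrac{c}{n_2}$: indeed $\big(\tfrac{a}{n_1}\big)^{\frac{n_1}{n_1+n_2}}\big(\tfrac{c}{n_2}\big)^{\frac{n_2}{n_1+n_2}}\le \tfrac{n_1}{n_1+n_2}\tfrac{a}{n_1}+\tfrac{n_2}{n_1+n_2}\tfrac{c}{n_2}=\tfrac{a+c}{n_1+n_2}$, and multiplying by $n_1+n_2$ yields $\rho(a,c)\le a+c$. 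The definition of $\rho$ is manufactured precisely so that this AM--GM bound appears in the right geometric form; note that the integer exponents $n_1,n_2\ge 1$ play no role in the validity of the inequality (any positive exponents work) but are what later makes $G_3(\bfp(t))=0$ solvable in closed form for RBF kernels. Combining the three bounds gives $G_2(\bfx)\le G_3(\bfx)$, completing the argument, and I would underline that Assumption~\ref{assumption:threshold_b} ($e\ge b$) is exactly what guarantees $a\ge 0$, so the fractional powers in $\rho$ are well defined.
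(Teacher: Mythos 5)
Your argument is correct and is essentially identical to the paper's proof: bound the positive sum via the closest positive relevance vector, drop all but one negative relevance vector, and absorb the remaining term together with $e-b$ via the weighted AM--GM inequality to obtain $G_2(\bfx)\le G_3(\bfx)$, then invoke Prop.~\ref{prop:rvm_prob_bound}. Your added remarks on the role of Assumption~\ref{assumption:threshold_b} and the inessential integrality of $n_1,n_2$ are accurate and consistent with the paper.
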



\begin{proof} 
Please refer to Appendix \ref{appendix:rvm_prob_bound_amgm}. 
%
\end{proof}


Fig.~\ref{fig:rvm_model_lines} illustrates the exact RVM decision boundary from Eq.~\eqref{eq:rvm_classification_cond3}, $G_1(\bfx) = 0$, and the boundaries $G_2(\bfx) = 0$ and $G_3(\bfx) = 0$ resulting from the upper bounds in Prop.~\ref{prop:rvm_prob_bound} and Prop.~\ref{prop:rvm_prob_bound_amgm}. Note that the boundary generated by $G_2(\bfx)$ is very close to the true boundary from $G_1(\bfx)$, \NEW{empirically showing that the bound $G_2(\bfx)$ is tight}. The upper bound $G_3(\bfx)$ provides a conservative ``inflated boundary'', whose accuracy can be controlled via the integers $n_1, n_2$ in Prop.~\ref{prop:rvm_prob_bound_amgm}. Note that $G_3(\bfx)$ is inaccurate mainly in the unknown regions because the Arithmetic Mean-Geometric Mean inequality used in Prop.~\ref{prop:rvm_prob_bound_amgm}'s proof (Appendix \ref{appendix:rvm_prob_bound_amgm}) effectively replaces the kernel function $k(\bfx,\bfx^-_j)$ by a slower decaying one $k(\bfx,\bfx^-_j)^{\frac{n_2}{n_1+n_2}}$. This suits the intuition that unknown regions should be categorized as free more cautiously. Fig.~\ref{fig:rvm_upperbound_n1n2} shows that increasing the ratio $n_2/n_1$ makes the ``inflated boundary" closer to the true decision boundary in the unknown regions but slightly looser in the well-observed regions and vice versa. Next, based on Prop.~\ref{prop:rvm_prob_bound_amgm}, we develop conditions for classification of lines and curves when $e \geq b$ without the need for sampling.

\begin{figure*}[t]
\centering
\begin{subfigure}[b]{0.33\textwidth}
        \centering
        \includegraphics[width=\textwidth]{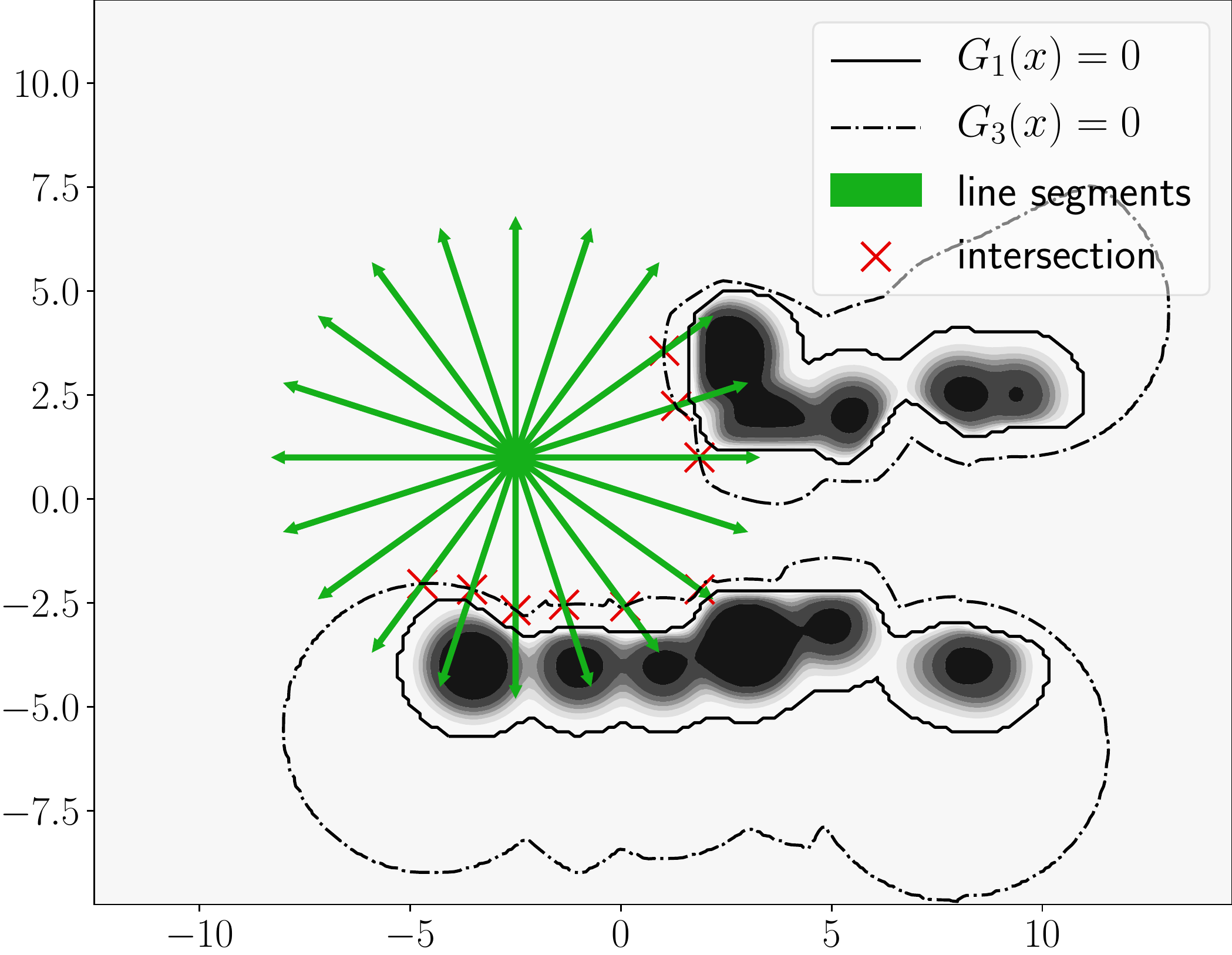}
        \caption{Checking line segments.}
        \label{fig:checking_line_example}
\end{subfigure}%
\hfill
\begin{subfigure}[b]{0.33\textwidth}
        \centering
        \includegraphics[width=\textwidth]{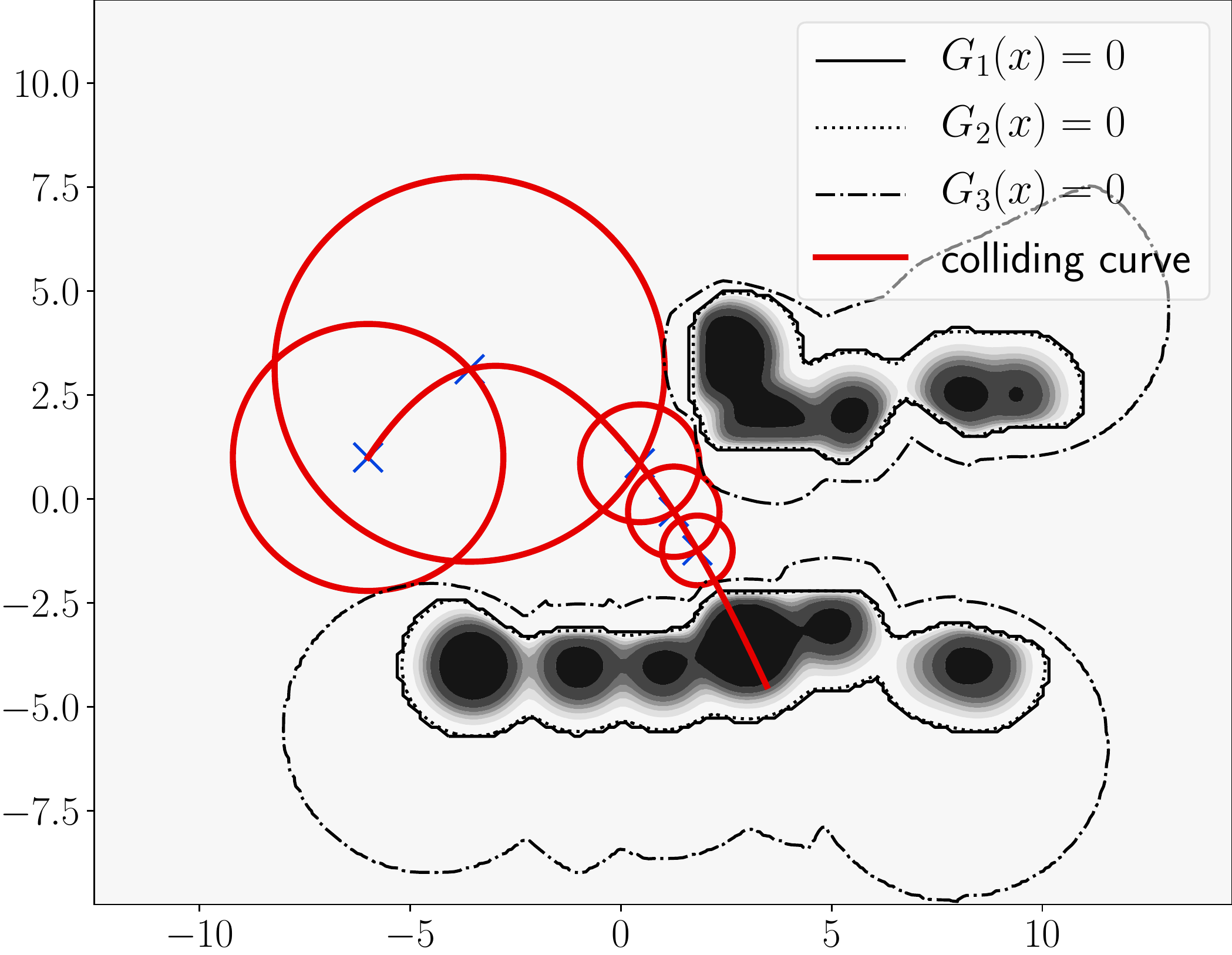}
        \caption{Checking a colliding curve.}
        \label{fig:checking_curve_example_colliding}
\end{subfigure}%
\hfill
\begin{subfigure}[b]{0.33\textwidth}
        \centering
        \includegraphics[width=\textwidth]{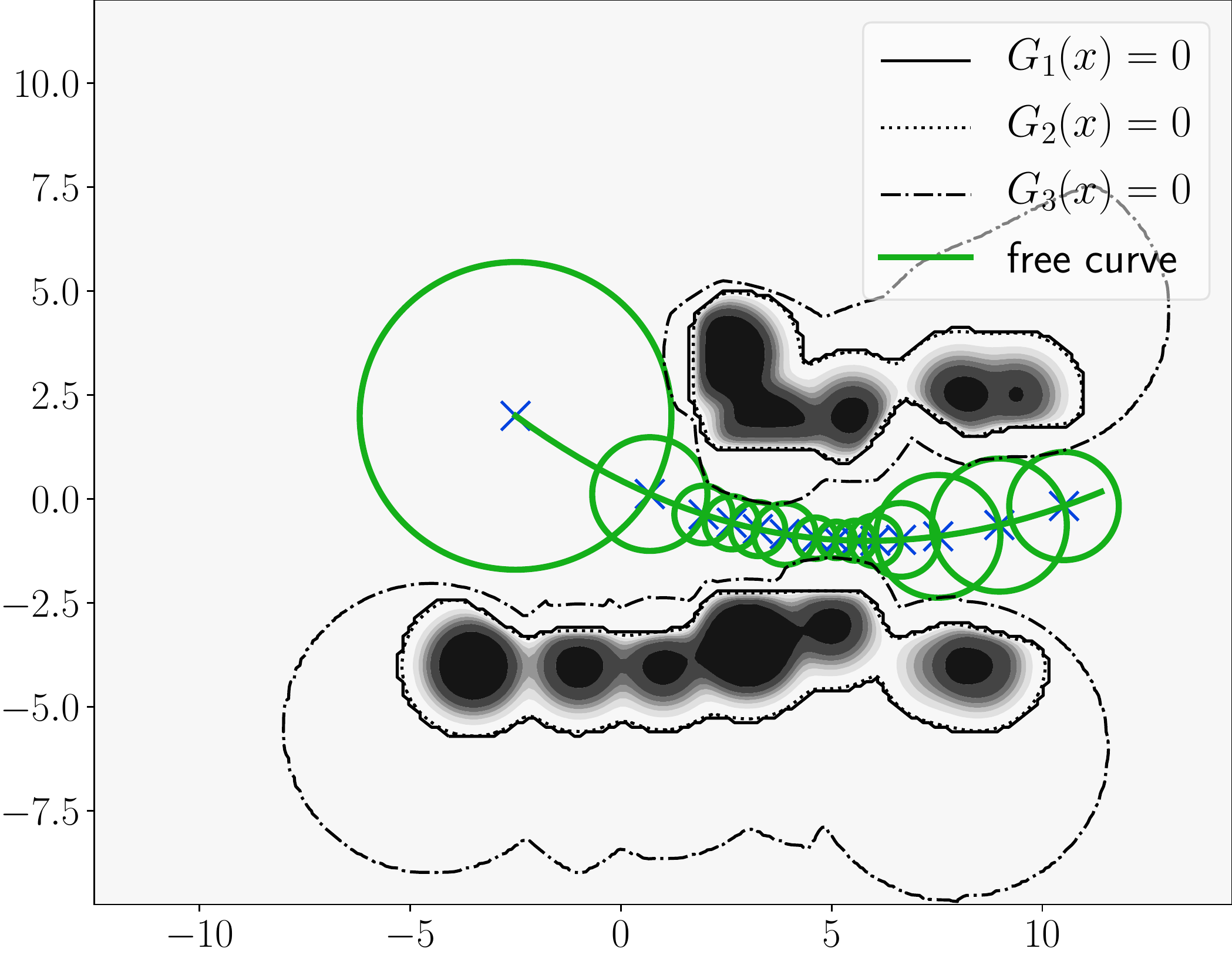}
        \caption{Checking a free curve.}
        \label{fig:checking_curve_example_free}
\end{subfigure}%
\caption{Illustration of our classification algorithms for the trained RVM model in Fig. \ref{fig:rvm_training_probit} with $b = -0.05, e = -0.01$, and $n_1 = n_2 = 1$.}
\label{fig:collision_checking_illustration}
\end{figure*}

\subsection{RVM Classification of Lines}
Consider a linear trajectory described by a ray $\bfp(t) = \bfp_0 + t\bfv$, $t\geq 0$ such that $\bfp_0$ is obstacle-free according to Prop.~\ref{prop:rvm_prob_bound_amgm}, i.e., $G_3(\bfp_0) \leq 0$, and $\bfv$ is a constant. To check if $\bfp(t)$ collides with the inflated boundary $G_3(\bfx) = 0$, we find a time $t_u$ such that any point $\bfp(t)$ is classified free for $t \in [0,t_u)$.


\begin{restatable}{proposition}{RvmProbBoundLineSeg}
\label{prop:rvm_prob_bound_lineseg}
Consider a ray $\bfp(t) = \bfp_0 + t\bfv$, $t\geq 0$. Let $\bfx_i^+$ and $\bfx_j^-$ be arbitrary positive and negative relevance vectors. Then, any point $\bfp(t)$ with $t\in [0,t_u) \subseteq [0,t_u^*)$ is free for:
\begin{eqnarray}
t_u &:=& \min_{i = 1, \ldots, M^+} {\tau(\bfp_0, \bfx^+_i, \bfx_j^-)} \label{eq:rvm_tu}\\
t_u^* &:=& \min_{i =1, \ldots, M^+}\max_{j = 1, \ldots, M^-} {\tau(\bfp_0, \bfx^+_i, \bfx_j^-)}, \label{eq:rvm_tu_star}
\end{eqnarray}
where  $\tau(\bfp_0, \bfx^+_i, \bfx_j^-)$
 \begin{eqnarray}
&=&\begin{cases} 
      +\infty, & \text{if } V(t, \bfx^+_i,\bfx^-_j)  \text{ has less than 2 roots} \\
      +\infty, & \text{if } V(t, \bfx^+_i,\bfx^-_j) \text{ has 2 roots $t_1 < t_2 \leq 0$} \\
      t_1 & \text{if } V(t, \bfx^+_i,\bfx^-_j)  \text{ has 2 roots $0 \leq t_1 < t_2$} \\
      0 & \text{if } V(t, \bfx^+_i,\bfx^-_j)  \text{ has 2 roots $t_1 \leq 0 \leq t_2$} 
\end{cases}. \nonumber
\end{eqnarray}
and $V(t, \bfx^+_i,\bfx^-_j) = at^2 + b(\bfx_i^+, \bfx_j^-)t+ c(\bfx_i^+, \bfx_j^-)$ with
\small
\begin{eqnarray}
a &:=& -n_1\Vert{\bf\Gamma}\bfv\Vert^2, \nonumber \\
b(\bfx_i^+, \bfx_j^-)\!&:=&\!-2\bfv^\top {\bf\Gamma}^\top {\bf\Gamma}(n_1\bfp_0 - (n_1 + n_2)\bfx^+_i + n_2\bfx^-_j), \nonumber \\
c(\bfx_i^+, \bfx_j^-)\!&:=&\!-(n_1\! +\! n_2)\Vert{\bf\Gamma}(\bfp_0- \bfx^+_i)\Vert^2\! +\! n_2\Vert{\bf\Gamma}(\bfp_0- \bfx^-_j)\Vert^2 \nonumber \\
&&\!-\!(n_1 + n_2)\log\frac{\rho(e-b,\nu_j^-)}{\eta^{\frac{n_1}{n_1+n_2}}\sum_{i = 1} ^{M^+}\nu_i^+}.\nonumber
\end{eqnarray}
\normalsize
\end{restatable}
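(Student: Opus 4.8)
The plan is to convert the point‑classification certificate of Prop.~\ref{prop:rvm_prob_bound_amgm} into a closed‑form polynomial condition in $t$ along the ray. Recall that Prop.~\ref{prop:rvm_prob_bound_amgm} declares $\bfp(t)$ free as soon as there exists a negative relevance vector $\bfx_j^-$ with $G_3(\bfp(t))\le 0$, where $G_3$ pairs the \emph{closest} positive relevance vector $\bfx_*^+$ with $\bfx_j^-$. So the whole problem reduces to describing, for each ordered pair $(\bfx_i^+,\bfx_j^-)$, the set of times where $\bigl(\sum_{l}\nu_l^+\bigr)k(\bfp(t),\bfx_i^+)\le\rho\bigl(e-b,\nu_j^-k(\bfp(t),\bfx_j^-)\bigr)$, and then stitching these pairwise conditions together via the min/max in \eqref{eq:rvm_tu}--\eqref{eq:rvm_tu_star}.

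First I would take logarithms of both (strictly positive) sides of this pairwise inequality and substitute the RBF kernel of Assumption~\ref{assumption:rbf}, $k(\bfx,\bfx_m)=\eta\exp(-\|\bfGamma(\bfx-\bfx_m)\|^2)$. Since $\log\rho(e-b,\nu_j^-k(\bfp(t),\bfx_j^-))$ is affine in $\log k(\bfp(t),\bfx_j^-)$ with weight $\tfrac{n_2}{n_1+n_2}$, multiplying through by $n_1+n_2$ makes the kernel arguments contribute $-(n_1+n_2)\|\bfGamma(\bfp(t)-\bfx_i^+)\|^2+n_2\|\bfGamma(\bfp(t)-\bfx_j^-)\|^2$, while the remaining $\rho$, $\eta$ and $\sum_i\nu_i^+$ factors combine into a constant. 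Substituting $\bfp(t)=\bfp_0+t\bfv$ and expanding each squared norm $\|\bfGamma(\bfp_0-\bfx)+t\bfGamma\bfv\|^2$ yields a scalar quadratic $V(t,\bfx_i^+,\bfx_j^-)=at^2+b(\bfx_i^+,\bfx_j^-)t+c(\bfx_i^+,\bfx_j^-)$; matching coefficients of $t^2,t^1,t^0$ should reproduce exactly the stated $a$, $b(\cdot)$, $c(\cdot)$ (in particular the linear term collapses to $-2\bfv^\top\bfGamma^\top\bfGamma(n_1\bfp_0-(n_1+n_2)\bfx_i^+ + n_2\bfx_j^-)$, and the $\rho/\eta/\sum\nu_i^+$ constant lands in the log term of $c$). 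The structural fact that drives everything is $a=-n_1\|\bfGamma\bfv\|^2\le 0$, so $V$ is a downward parabola and the pairwise condition $V\le 0$ holds precisely \emph{outside} the interval between its two real roots.

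Next I would read $\tau$ off this parabola using the standing hypothesis that $\bfp_0$ is free, i.e. $G_3(\bfp_0)\le 0$, which forces $V(0,\bfx_i^+,\bfx_j^-)=c(\bfx_i^+,\bfx_j^-)\le 0$. With $a\le 0$ and $V(0)\le 0$ the four cases drop out: fewer than two real roots (or two roots with $t_2\le 0$) means $V\le 0$ for all $t\ge 0$, so the pair never invalidates freeness and $\tau=+\infty$; if $0\le t_1<t_2$ the first violation is at $t_1$, so $\tau=t_1$; the straddling case $t_1\le 0\le t_2$ can only arise at the boundary (a strict interior zero would give $V(0)>0$), and is handled conservatively by $\tau=0$. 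For a \emph{fixed} negative $\bfx_j^-$ I would then observe that $G_3(\bfp(t))\le 0$ is equivalent to $V(t,\bfx_i^+,\bfx_j^-)\le 0$ holding for \emph{every} positive $\bfx_i^+$, because $G_3$ uses the closest positive and $k(\bfp(t),\bfx_*^+)=\max_i k(\bfp(t),\bfx_i^+)$ with $\sum_l\nu_l^+$ a common positive factor; intersecting the first free sub‑intervals $[0,\tau(\bfp_0,\bfx_i^+,\bfx_j^-))$ over $i$ gives freeness on $[0,t_u)$ with $t_u$ as in \eqref{eq:rvm_tu}.

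The step I expect to be the main obstacle is the tighter bound $t_u^*$ in \eqref{eq:rvm_tu_star}, because it lets the certifying negative vector vary with the positive one, which at first glance conflicts with the ``there exists a single $\bfx_j^-$'' quantifier in Prop.~\ref{prop:rvm_prob_bound_amgm}. The resolution I would use is that $G_3$ only ever involves the positive vector closest to the query point: fix $t<t_u^*=\min_i\max_j\tau(\bfp_0,\bfx_i^+,\bfx_j^-)$ and let $i^\ast$ index the positive relevance vector closest to $\bfp(t)$; then $t<\max_j\tau(\bfp_0,\bfx_{i^\ast}^+,\bfx_j^-)$, so some $j^\ast$ gives $t<\tau(\bfp_0,\bfx_{i^\ast}^+,\bfx_{j^\ast}^-)$ and hence $V(t,\bfx_{i^\ast}^+,\bfx_{j^\ast}^-)\le 0$. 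Because $\bfx_{i^\ast}^+$ attains $\max_i k(\bfp(t),\bfx_i^+)$, this single $\bfx_{j^\ast}^-$ simultaneously dominates every positive vector at time $t$, so $G_3(\bfp(t))\le 0$ and $\bfp(t)$ is free; the inclusion $[0,t_u)\subseteq[0,t_u^*)$ then follows from the max--min inequality applied to $\tau$. I would be careful that this argument invokes the \emph{instantaneous} ``closest dominates all'' implication rather than the conservative first‑crossing interval, so that a possible re‑entry of the non‑closest parabolas into $\{V\le 0\}$ does not break the claim.
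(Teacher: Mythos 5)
Your proof is correct and follows essentially the same route as the paper's: substitute the RBF kernel of Assumption~\ref{assumption:rbf} into the $G_3\le 0$ certificate of Prop.~\ref{prop:rvm_prob_bound_amgm}, take logarithms and scale by $n_1+n_2$ to obtain the concave quadratic $V(t,\bfx_i^+,\bfx_j^-)$ (your coefficient matching reproduces $a$, $b(\cdot)$, $c(\cdot)$ exactly), read $\tau$ off the root configuration of a downward parabola, and take the minimum over positive relevance vectors because the closest positive vector $\bfx_*^+$ varies along the ray but ranges over a finite set. The one place you genuinely depart from --- and improve on --- the paper is the bound $t_u^*$ in \eqref{eq:rvm_tu_star}: the paper's appendix merely notes that the certifying negative vector is arbitrary and concludes with $\max_j\min_i\tau$, which by the max--min inequality is in general \emph{smaller} than the $\min_i\max_j\tau$ claimed in the statement, so the appendix as written does not fully justify the proposition. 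Your pointwise argument --- fix $t<t_u^*$, let $i^\ast$ index the positive vector closest to $\bfp(t)$, extract $j^\ast$ from $t<\max_j\tau(\bfp_0,\bfx_{i^\ast}^+,\bfx_j^-)$, and observe that $V(t,\bfx_{i^\ast}^+,\bfx_{j^\ast}^-)\le 0$ already gives $G_3(\bfp(t))\le 0$ because $k(\bfp(t),\bfx_{i^\ast}^+)=\max_i k(\bfp(t),\bfx_i^+)$ --- correctly establishes the stronger stated bound. One small nitpick: $G_3(\bfp_0)\le 0$ forces $c(\bfx_i^+,\bfx_j^-)\le 0$ only for the particular negative vector certifying freeness of $\bfp_0$; for other pairs $c$ may be positive, but your $\tau=0$ case absorbs this harmlessly, so nothing breaks.
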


\begin{proof}
Please refer to Appendix \ref{appendix:rvm_prob_bound_lineseg}.
\end{proof}

For a line segment $(\bfp_A, \bfp_B)$, all points on the segment can be expressed as $\bfp(t_A) = \bfp_A + t_A\bfv_A$, $\bfv_A = \bfp_B - \bfp_A$, $0 \leq t_A \leq 1$ or $\bfp(t_B) = \bfp_B + t_B\bfv_B$, $\bfv_B = \bfp_A - \bfp_B$, $0 \leq t_B \leq 1$. Using the upper bound $t_{uA}$ on $t_A$ provided by Eq.~\eqref{eq:rvm_tu} or Eq.~\eqref{eq:rvm_tu_star}, we find the free region on $(\bfp_A, \bfp_B)$ starting from $\bfp_A$. Likewise, we calculate $t_{uB}$ which specifies the free region from $\bfp_B$. If $t_{uA} + t_{uB} > 1$, the entire line segment is free, otherwise the segment is considered colliding. The proposed approach is summarized in Alg.~\ref{alg:rvm_collision_checking_line} and illustrated in Fig.~\ref{fig:checking_line_example} for the trained RVM model in Fig.~\ref{fig:rvm_training_probit}.

\begin{algorithm}[t]
\caption{RVM Line Classification}
\label{alg:rvm_collision_checking_line}
\footnotesize
	\begin{algorithmic}[1]	  
		\Require Line segment $(\bfp_A, \bfp_B)$; relevance vectors $\Lambda = \{(\bfx_i, y_i, \bfxi_i)\}$; weight posterior mean $\bfmu$ and max covariance eigenvalue $\lambda_{max}$
		\State $\bfv_A = \bfp_B - \bfp_A$, $\bfv_B = \bfp_A - \bfp_B$
		\State Calculate $t_{uA}$ and $t_{uB}$ using Eq.~\eqref{eq:rvm_tu} or Eq.~\eqref{eq:rvm_tu_star}.
    \If{$t_{uA} + t_{uB} > 1$} \Return True (Free)
		\Else \;\Return False (Colliding)
		\EndIf
	\end{algorithmic}
\end{algorithm}

\subsection{RVM Classification of Curves}
Instead of a constant velocity $\bfv$ representing the direction of motion, we can define a general curve $\bfp(t)$ by considering a time-varying term $\bfv(t)$. We extend the collision checking conditions in Prop.~\ref{prop:rvm_prob_bound_lineseg} by finding an ellipsoid $\mathcal{E}(\bfp_0, r) := \{\bfx: \Vert\bf\Gamma (\bfx - \bfp_0)\Vert \leq r\}$ around $\bfp_0$ whose interior is free of obstacles, \NEW{where $\Gamma$ is the kernel parameter defined in Assumption \ref{assumption:rbf}. This specific form of the ellipsoid leads a closed-conditions as shown in the Prop.~\ref{prop:rvm_prob_bound_ball}.} 


\begin{restatable}{proposition}{RvmProbBoundEucBall}
\label{prop:rvm_prob_bound_ball}
Let $\bfp_0$ be such that $G_3(\bfp_0) < 0$ and let $\bfx_i^+$ and $\bfx_j^-$ be arbitrary positive and negative support vectors. Then, every point inside the ellipsoids $\mathcal{E}(\bfp_0, r_u) \subseteq \mathcal{E}(\bfp_0, r_u^*)$ is free for:
\begin{eqnarray}
r_u &=& \min_{i = 1, \ldots, M^+} {r(\bfp_0, \bfx^+_i, \bfx_j^-)} \label{eq:rvm_ru} \\
r_u^* &=& \min_{i =1, \ldots, M^+}\max_{j = 1, \ldots, M^-} {r(\bfp_0, \bfx^+_i, \bfx_j^-)}. \label{eq:rvm_ru_star}
\end{eqnarray}
 where $r(\bfp_0, \bfx^+_i, \bfx_j^-)$
 \begin{eqnarray}
&=&\begin{cases} 
      +\infty, & \text{if } \bar{V}(t, \bfx^+_i,\bfx^-_j)  \text{ has less than 2 roots} \\
      +\infty, & \text{if } \bar{V}(t, \bfx^+_i,\bfx^-_j) \text{ has 2 roots $t_1 < t_2 \leq 0$} \\
      t_1 & \text{if } \bar{V}(t, \bfx^+_i,\bfx^-_j)  \text{ has 2 roots $0 \leq t_1 < t_2$} \\
      0 & \text{if }\bar{V}(t, \bfx^+_i,\bfx^-_j)  \text{ has 2 roots $t_1 \leq 0 \leq t_2$} 
\end{cases}, \nonumber
\end{eqnarray}
and $\bar{V}(t, \bfx^+_i,\bfx^-_j) =  \bar{a}t^2 + \bar{b}(\bfx_i^+, \bfx_j^-)t + \bar{c}(\bfx_i^+, \bfx_j^-)$ with
\small
\begin{eqnarray}
\bar{a} &:=& -n_1, \nonumber \\
\bar{b}(\bfx_i^+, \bfx_j^-)\!&:=&\!2\Vert {\bf\Gamma}(n_1\bfp_0 - (n_1 + n_2)\bfx^+_i + n_2\bfx^-_j)\Vert, \nonumber \\
\bar{c}(\bfx_i^+, \bfx_j^-)\!&:=&\!c(\bfx_i^+, \bfx_j^-).\nonumber
\end{eqnarray}
\normalsize
\end{restatable}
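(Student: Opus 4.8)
The plan is to reduce this ellipsoid result to the same per-pair quadratic condition that underlies the line classification in Prop.~\ref{prop:rvm_prob_bound_lineseg}, replacing the one-dimensional ray parameter by the radial coordinate of the ellipsoid. First I would fix an arbitrary positive relevance vector $\bfx_i^+$ and negative relevance vector $\bfx_j^-$ and work with the per-pair score $G_3^{(i,j)}(\bfx) := \bigl(\sum_{k=1}^{M^+}\nu_k^+\bigr) k(\bfx,\bfx_i^+) - \rho\bigl(e-b,\,\nu_j^- k(\bfx,\bfx_j^-)\bigr)$. Exactly as in the derivation of Prop.~\ref{prop:rvm_prob_bound_amgm}, substituting the radial basis kernel of Assumption~\ref{assumption:rbf} and taking logarithms turns the free condition $G_3^{(i,j)}(\bfx)\le 0$ into the affine-in-squared-norm inequality $W(\bfx) \le (n_1+n_2)\log\frac{\rho(e-b,\nu_j^-)}{\eta^{n_1/(n_1+n_2)}\sum_{k}\nu_k^+}$, where $W(\bfx) := -(n_1+n_2)\Vert\bfGamma(\bfx-\bfx_i^+)\Vert^2 + n_2\Vert\bfGamma(\bfx-\bfx_j^-)\Vert^2$. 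Since $\bfx_*^+$ is the closest positive relevance vector, $k(\bfx,\bfx_i^+)\le k(\bfx,\bfx_*^+)$ for every $i$, so $G_3(\bfx)=\max_i G_3^{(i,j)}(\bfx)$ and $G_3(\bfx)\le 0$ holds precisely when $G_3^{(i,j)}(\bfx)\le 0$ for all $i$ with a common $j$; this is what lets me certify $G_3$ through the family of per-pair conditions.

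The key new step is to maximize $W$ over a shell of the ellipsoid rather than along a ray. Writing $\bfu := \bfGamma(\bfx-\bfp_0)$ and expanding both squared norms gives $W(\bfx) = -n_1\Vert\bfu\Vert^2 - 2\bfu^\top\bfGamma\bfw_{ij} + \bigl(-(n_1+n_2)\Vert\bfGamma(\bfp_0-\bfx_i^+)\Vert^2 + n_2\Vert\bfGamma(\bfp_0-\bfx_j^-)\Vert^2\bigr)$, with $\bfw_{ij} := n_1\bfp_0 - (n_1+n_2)\bfx_i^+ + n_2\bfx_j^-$. On the shell $\Vert\bfGamma(\bfx-\bfp_0)\Vert = \Vert\bfu\Vert = t$ the only $\bfx$-dependent part is the linear term, and the Cauchy--Schwarz bound $-2\bfu^\top\bfGamma\bfw_{ij}\le 2t\Vert\bfGamma\bfw_{ij}\Vert$ is attained by $\bfu^\star = -t\,\bfGamma\bfw_{ij}/\Vert\bfGamma\bfw_{ij}\Vert$, which lies in the range of $\bfGamma$ and is therefore realizable by some $\bfx$ on the shell. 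Hence $\max_{\Vert\bfGamma(\bfx-\bfp_0)\Vert = t} W(\bfx) = -n_1 t^2 + 2\Vert\bfGamma\bfw_{ij}\Vert\, t + \text{const}$, and subtracting the log-constant above reproduces precisely $\bar V(t,\bfx_i^+,\bfx_j^-) = \bar a t^2 + \bar b(\bfx_i^+,\bfx_j^-) t + \bar c(\bfx_i^+,\bfx_j^-)$ with the stated coefficients $\bar a = -n_1$, $\bar b = 2\Vert\bfGamma\bfw_{ij}\Vert$, and $\bar c = c(\bfx_i^+,\bfx_j^-)$. By construction, every point of the solid ellipsoid out to radius $t$ satisfies the per-pair free condition if and only if $\bar V(s,\bfx_i^+,\bfx_j^-)\le 0$ for all $s\in[0,t]$.

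It then remains to read off the admissible radius from the sign of $\bar V$, which is identical to the root analysis in Prop.~\ref{prop:rvm_prob_bound_lineseg}. Because $\bar a = -n_1 < 0$, $\bar V(\cdot)$ is a downward parabola, and the hypothesis $G_3(\bfp_0) < 0$ together with the closest-vector inequality gives $G_3^{(i,j)}(\bfp_0)\le 0$, i.e. $\bar V(0) = \bar c \le 0$, so $t = 0$ lies in the free set. The connected component of $\{t \ge 0 : \bar V(t)\le 0\}$ containing the origin is therefore all of $[0,\infty)$ when $\bar V$ has fewer than two real roots or two nonpositive roots, and $[0,t_1)$ when the roots satisfy $0\le t_1 < t_2$, with the degenerate case $t_1\le 0\le t_2$ forcing radius $0$; this is exactly the definition of $r(\bfp_0,\bfx_i^+,\bfx_j^-)$. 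Thus the ellipsoid of radius $r(\bfp_0,\bfx_i^+,\bfx_j^-)$ is free for the pair $(i,j)$.

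Finally I would aggregate over relevance vectors. For a fixed $j$, the whole-model condition $G_3(\bfx)\le 0$ requires the per-pair condition for every positive vector $\bfx_i^+$, so the guaranteed free radius is $r_u = \min_i r(\bfp_0,\bfx_i^+,\bfx_j^-)$, recovering Eq.~\eqref{eq:rvm_ru}. The tighter bound $r_u^*$ of Eq.~\eqref{eq:rvm_ru_star} additionally exploits that $\bfx_j^-$ may be chosen freely: for each $i$ the least conservative certificate uses the $j$ maximizing $r$, giving $\min_i\max_j r$, and since $\max_j r \ge r$ for the fixed $j$ we obtain $r_u \le r_u^*$ and hence $\mathcal{E}(\bfp_0,r_u)\subseteq\mathcal{E}(\bfp_0,r_u^*)$. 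I expect the main obstacle to be the shell maximization in the second paragraph: one must argue that restricting to each radial shell (rather than only the outer boundary of the ellipsoid) is necessary, since the downward parabola $\bar V$ can re-enter the unsafe region, and verify carefully that the Cauchy--Schwarz optimum is attained within the range of $\bfGamma$ so that the bound is exact rather than merely an over-approximation.
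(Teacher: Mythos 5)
Your core argument is essentially the paper's own proof, just phrased differently: the paper normalizes each ray direction so that $\Vert\bfGamma\bfv\Vert=1$ and applies Cauchy--Schwarz to the cross term of the per-ray quadratic $V(t,\bfx_i^+,\bfx_j^-)$ from Prop.~\ref{prop:rvm_prob_bound_lineseg}, which removes the dependence on $\bfv$ and yields exactly your $\bar V$; your ``maximize $W$ over the shell $\Vert\bfGamma(\bfx-\bfp_0)\Vert=t$'' computation is the same inequality viewed all at once rather than ray by ray. Your extra observation that the Cauchy--Schwarz optimum is attained within the range of $\bfGamma$ (so the shell bound is exact) is correct but not needed for the sufficiency direction the proposition asserts. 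The reduction to the per-pair conditions via $G_3(\bfx)=\max_i G_3^{(i,j)}(\bfx)$, the identification of the coefficients $\bar a,\bar b,\bar c$, and the root analysis of the downward parabola all check out.

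The one genuine wrinkle is the aggregation step for $r_u^*$. You state early on that $G_3(\bfx)\le 0$ requires $G_3^{(i,j)}(\bfx)\le 0$ \emph{for all $i$ with a common $j$}; under that framing, letting $j$ vary with $i$ yields only ``$\forall i\,\exists j$,'' which does not imply ``$\exists j\,\forall i$,'' so your own setup justifies $\max_j\min_i r$ rather than the stated $\min_i\max_j r$ (the two differ in general, with $\max_j\min_i\le\min_i\max_j$). The $\min_i\max_j$ form is nevertheless correct, but for a reason you do not articulate: freeness only needs $G_2(\bfx)\le 0$, and the chain of bounds in Prop.~\ref{prop:rvm_prob_bound_amgm} gives $G_2(\bfx)\le G_3^{(i^*(\bfx),j)}(\bfx)$ for the \emph{closest} positive relevance vector $i^*(\bfx)$ and \emph{every} $j$ simultaneously; hence it suffices that for the (unknown) index $i^*$ there exists some $j$ certifying the pair condition, and quantifying over all candidate $i$ legitimately allows a different $j$ for each $i$. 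You should make that explicit. (For what it is worth, the paper's own appendix is no more careful here --- it derives the $\max_j\min_i$ form while the proposition statement uses $\min_i\max_j$ --- so this is a gap worth closing in either write-up.)
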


\begin{proof}
Please refer to Appendix \ref{appendix:rvm_prob_bound_ball}.
\end{proof}

Consider a general time-parameterized curve $\bfp(t)$, $t \in [0, t_f]$ from $\bfp_0 := \bfp(0)$ to $\bfp_f := \bfp(t_f)$. Prop.~\ref{prop:rvm_prob_bound_ball} shows that all points inside the ellipsoid $\mathcal{E}(\bfp_0, r)$ are free for $r = r_u \leq r^*_u$. If we can find the smallest positive $t_1$ such that
\begin{equation}
\label{eq:curve_safety}
\|\bfGamma(\bfp(t_1) - \bfp_0) \| = r,
\end{equation}
then all points on the curve $\bfp(t)$ for $t \in [0, t_1)$ are free. This is equivalent to finding the smallest positive solution of Eq. \eqref{eq:curve_safety}. We perform curve classification by iteratively covering the curve by free ellipsoids. If the value of $r$ is smaller than a threshold $\varepsilon$, the curve is considered colliding. Otherwise, it is considered free. The classification process for curves is shown in Alg.~\ref{alg:rvm_collision_checking_curve} and illustrated in Fig. \ref{fig:checking_curve_example_colliding} and \ref{fig:checking_curve_example_free} for the trained RVM model in Fig. \ref{fig:rvm_training_probit} for a colliding curve and a free curve, respectively. 


\begin{algorithm}[t]
\caption{RVM Curve Classification}
\label{alg:rvm_collision_checking_curve}
  \footnotesize
	\begin{algorithmic}
		\Require Curve $\bfp(t)$, $t \in [0,t_f]$; threshold $\varepsilon$; relevance vectors $\Lambda = \{(\bfx_i, y_i, \bfxi_i)\}$; weight posterior mean $\bfmu$ and max covariance eigenvalue $\lambda_{max}$
		\While{True}
			\State Calculate $r_{k}$ using Eq. \eqref{eq:rvm_ru} or Eq. \eqref{eq:rvm_ru_star}.
			\If{$r_k < \varepsilon$} \Return False (Colliding)
			\EndIf
			\State Solve $\Vert \bf\Gamma(\bfp(t) - \bfp(t_k)) \Vert = r_k$ for $t_{k+1} \geq t_{k}$
			\If{$t_{k+1} \geq t_f$}
				\Return True (Free)
			\EndIf
		\EndWhile
	\end{algorithmic}
\end{algorithm}

In Prop.~\ref{prop:rvm_prob_bound_lineseg} and \ref{prop:rvm_prob_bound_ball}, calculating $t_u$ and $r_u$ takes $O(M)$ time, while the computational complexity of calculating $t_u^*$ and $r_u^*$ are $O(M^2)$, where $M= M^+ + M^-$. If the line segments or curves are limited to the neighborhood of the starting point $\bfp_0$, the bound $t_u$ and $r_u$ can reasonably approximate $t^*_u$ and $r^*_u$, respectively, if $\bfx_j^-$ is chosen as the negative support vector, closest to $\bfp_0$. Calculation of $t_u$ and $r_u$ in Prop.~\ref{prop:rvm_prob_bound_lineseg} and \ref{prop:rvm_prob_bound_ball} is efficient in the sense that it has the same complexity as classifying a point, yet it can classify an entire line segment for $t \in [0, t_u)$ and an entire ellipsoid $\mathcal{E}(\bfp_0, r_u)$, respectively.


\section{Computational and Storage Improvements}
\label{sec:complexities}
\subsection{Computational Improvements}
\label{subsec:comp_improvements}

In the context of autonomous navigation, as a robot explores new regions of its environment, the number of relevance vectors required to represent the obstacle boundaries increases. Since the score function in Eq.~\eqref{eq:rvm_score_fcn} depends on all relevance vectors, the training time (Alg.~\ref{alg:rvm_training_online}) and the classification time (Def.~\ref{def:threshold_classification} for points, Alg.~\ref{alg:rvm_collision_checking_line} for lines, and Alg.~\ref{alg:rvm_collision_checking_curve} for curves) increase as well. We propose an approximation to the score function $F(\bfx)$ for the radial basis kernel in Assumption \ref{assumption:rbf}. Since $k(\bfx,\bfx_m)$ approaches zero rapidly as the distance between $\bfx$ and $\bfx_m$ increases, the value of $F(\bfx)$ is not affected significantly by relevance vectors far from $\bfx$. We use $R^*$-tree data structures constructed from the relevance vectors $\Lambda^+$, $\Lambda^-$ to allow efficient lookup of the nearest $K^+$ and $K^-$ positive and negative relevance vectors. Approximating the score function $F(\bfx)$ using the nearest $K^+$ and $K^-$ relevance vectors improves its computational complexity from $O(M)$ to $O(\log M)$. Similarly, to classify a point $\bfx$, the $M$-dimensional feature vector $\bfPhi_x$, may be approximated by a $K$-dimensional one using the $K$ relevance vectors closest to $\bfx$. Classification of a line segment or a curve in Prop.~\ref{prop:rvm_prob_bound_lineseg} and \ref{prop:rvm_prob_bound_ball} can be approximated by using the $K^+$ and $K^-$ nearest positive and negative relevance vectors. The computational complexities of Eq.~\eqref{eq:rvm_tu}, \eqref{eq:rvm_tu_star}, \eqref{eq:rvm_ru}, and \eqref{eq:rvm_ru_star} improve from $O(M)$ and $O(M^2)$ to $O(\log M)$.



The line and curve classification algorithms depend on Prop.~\ref{prop:rvm_prob_bound} which requires the largest eigenvalue $\lambda_{max}$ of the weight posterior covariance matrix $\bfSigma$. Obtaining $\lambda_{max}$ from $\bfSigma$ can be expensive as the number of relevance vectors grows. Under Assumption \ref{assumption:rbf}, the entries in the feature matrix $\bfPhi$ for relevance vectors that are far from each other go to $0$ quickly and can be set to zero, e.g., using a cut-off threshold for the kernel values or only keeping the kernel values for the $K$ nearest relevance vectors. This leads to a sparse matrix $\bfPhi$ and, in turn, the inverse covariance matrix $\bfSigma^{-1}$ in Eq. \eqref{eq:approx_sigma} is sparse and its smallest eigenvalue $1/\lambda_{max}$ can be approximated efficiently (e.g.~\cite{lehoucq1998arpack}).

\subsection{Storage Improvements}
\label{subsec:storage_req}

Alg. \ref{alg:rvm_training_online} returns a set of $M$ relevance vectors $\bsym{x_m}$ with labels $y_m$ and weight prior precision $\xi_m$. This set represents the RVM model parameters and its memory requirements are linear in $M$. However, the predictive distribution in Eq.~\eqref{eq:rvm_probit_score} needs to be obtained via Laplace approximation (Eq.~\eqref{eq:approx_mu} and \eqref{eq:approx_sigma}) when the RVM model is used for classification. If additional computation for Laplace approximation is not feasible during test time, the weight posterior mean $\bfmu$ and covariance $\bfSigma$ may be stored also but $\bfSigma$ requires $O(M^2)$ storage. Fortunately, the approximate decision boundary $G_2(\bfx) = 0$ in Prop.~\ref{prop:rvm_prob_bound} used for point, line, and curve classification only requires the largest eigenvalue $\lambda_{max}$ of $\bfSigma$. Hence, only the value of $\lambda_{max}$ needs to be stored in addition to the relevance vectors $\bsym{x}_m$, labels $y_m$, and weight mean $\mu_m$. In this case, line 15 in Alg.~\ref{alg:rvm_training_online} should return the weight posterior mean $\bfmu$ and $\lambda_{max}$ instead of $\bfmu$ and $\bfSigma$.


\section{Application to Occupancy Mapping and Autonomous Navigation}

\subsection{Online Mapping}
\label{subsec:online_mapping}

We consider a robot placed in an unknown environment at time $t_k$ as illustrated in Fig.~\ref{fig:robot_unkenv_scan}. It is equipped with a lidar scanner measuring distances to nearby obstacles. Samples generated from the lidar range scan $\bfz_k$ are shown in Fig.~\ref{fig:laser_uninflated_ws}. Since the robot body is bounded by a sphere of radius $r$, each laser ray end point in configuration space becomes a ball-shaped obstacle, while the robot body becomes a point. To generate local training data, the occupied and free C-space areas observed by the lidar are sampled (e.g., on a regular grid). As shown in Fig.~\ref{fig:laser_inflated_cs}, this generates a set $\bar{\mathcal{D}}_k$ of points with label ``1'' (occupied) in the ball-shaped occupied areas and with label ``-1'' (free) between the robot position and each laser end point. To accelerate training, only the difference between two consecutive local datasets $\mathcal{D}_k=\bar{\mathcal{D}}_k\setminus\bar{\mathcal{D}}_{k-1}$ is used in our online RVM training algorithm (Alg.~\ref{alg:rvm_training_online}). Storing the sets of relevance vectors $\Lambda_k$ over time requires significantly less memory than storing the training data $\cup_k \mathcal{D}_k$. The occupancy of a query point $\bfx$ can be estimated from the relevance vectors by evaluating the function $G_1(\bfx)$ in Eq.~\eqref{eq:rvm_classification_cond3}. Specifically, $\hat{m}_k(\bfx) = -1$ if $G_1(\bfx) \leq 0$ and $\hat{m}_k(\bfx) = 1$ if $G_1(\bfx) > 0$. Fig.~\ref{fig:trained_model} illustrates the boundaries generated by Alg.~\ref{alg:rvm_training_online}.

\subsection{Autonomous Navigation}
\label{subsec:auto_nav}


Finally, we present a complete online mapping and navigation approach that solves Problem~\ref{problem_formulation_unknown_env}. Given the sparse Bayesian kernel-based map $\hat{m}_{k}$ proposed in Sec.~\ref{subsec:online_mapping}, a motion planning algorithm such as $A^*$~\cite{Russell_AI_Modern_Approach} or $RRT^*$ \cite{karaman2010incremental} may be used with our collision-checking algorithms to generate a path that solves the autonomous navigation problem (Alg.~\ref{alg:get_successors}). The robot follows the path for some time and updates the map estimate $\hat{m}_{k+1}$ with new observations. Using the updated map, the robot re-plans the path and follows the new path instead. This process is repeated until the goal is reached or a time limit is exceeded (Alg.~\ref{alg:auto_nav_rvm}).

The use of the ``inflated boundary" $G_3(\bfx) = 0$ from Prop.~\ref{prop:rvm_prob_bound_amgm} for collision checking might block the motion planning task if it is not tight enough in certain regions of the environment (e.g., unobserved regions as discussed in Sec.~\ref{subsec:point_classification}). For such regions, a different ratio of $n_2/n_1$ can be used in Prop.~\ref{prop:rvm_prob_bound_amgm} to achieve a tighter bound $G_3(\bfx)$. Increasing the decision threshold $\bar{e}$ (Def.~\ref{def:threshold_classification}) can also improve the accuracy of $G_3(\bfx)$ if a trade-off with robot safety is allowed. Another resort is to use sampling-based collision checking, selecting points along the curve $\bfp(t)$ and using Def.~\ref{def:threshold_classification}.



\begin{algorithm}[t]
\caption{\textsc{GetSuccessors} and \textsc{ObstacleFree} subroutines in $A^*$\cite{Russell_AI_Modern_Approach} and $RRT^*$ \cite{karaman2010incremental}, respectively}
\label{alg:get_successors}
  \footnotesize
	\begin{algorithmic}
		\Require Current position $\bfp_{k}$; set of relevance vectors $\Lambda = \{(\bfx_i, y_i, \bfxi_i)\}$ with posterior weight mean $\bfmu$ and covariance $\bfSigma$; set $\mathcal{N}(\bfp_{k})$ of potential reference trajectories $\bfp(t-t_k)$ with $\bfp(t_k) = \bfp_k$.
		\Ensure Set of collision-free trajectories $S$.
		\State $S \gets \emptyset$;.
		\For{$\bfp'$ in $\mathcal{N}(\bfp_{k})$}
			\If{$\bfp'$ is a line \textbf{and} \Call{CheckLine}{$\bfp_{k}, \bfp',\Lambda$}} \Comment{Alg.~\ref{alg:rvm_collision_checking_line}}
			  \State $S \leftarrow S \cup \{\bfp'\}$
			\EndIf
			\If{$\bfp'$ is a curve \textbf{and} \Call{CheckCurve}{$\bfp_{k}, \bfp',\Lambda$}} \Comment{Alg.~\ref{alg:rvm_collision_checking_curve}}
				\State $S \leftarrow S \cup \{\bfp'\}$
			\EndIf
		\EndFor
		\State \Return $S$
	\end{algorithmic}
\end{algorithm}

We consider robots with two different motion models. In simulation, we use a first-order fully actuated robot, $\dot{\bfp} = \bfv$, where the state $\bfs$ is the robot position $\bfp \in [0,1]^3$, with piecewise-constant velocity $\bfv(t) \equiv \bfv_k \in \calV$ for $t \in [t_k,\;t_{k+1})$, leading to piecewise-linear trajectories:
\begin{equation}
\label{eq:fas_trajectory}
\bfp(t) = \bfp_k + (t-t_k)\bfv_k, \qquad t \in [t_k,\;t_{k+1}),
\end{equation}
where $\bfp_k := \bfp(t_k)$. In this case, the classification algorithm for line segments (Alg.~\ref{alg:rvm_collision_checking_line}) is used during motion planning.

In the real experiments, we consider a ground wheeled Ackermann-drive robot with dynamics model:
\begin{equation}
\label{eq:car_dynamics}
\dot{\bfp} = v \begin{bmatrix} \cos(\theta)\\\sin(\theta) \end{bmatrix},\qquad \dot{\theta} = \frac{v}{\ell}\tan \phi,
\end{equation}
where the state $\bfs$ consists of the position $\bfp \in \mathbb{R}^2$ and orientation $\theta \in \mathbb{R}$, the control input \NEW{$\bfu$} consists of the linear velocity $v \in \mathbb{R}$ and the steering angle $\phi \in \mathbb{R}$, and $\ell$ is the distance between the front and back wheels. The nonlinear car dynamics can be transformed into a 2nd-order fully actuated system $\ddot{\bfp} = \bfa$ via feedback linearization~\cite{de2000stabilization, franch2009control}. 
Using piecewise-constant acceleration $\bfa(t) \equiv \bfa_k \in \calA$ for $t \in [t_k,\;t_{k+1})$ leads to piecewise-polynomial trajectories:
\begin{equation}
\label{eq:car_trajectory}
\bfp(t) = \bfp_k + (t-t_k) v_k\begin{bmatrix} \cos(\theta_k)\\\sin(\theta_k) \end{bmatrix} + \frac{(t-t_k)^2}{2}\bfa_k,
\end{equation}
where $\bfp_k := \bfp(t_k)$, $\theta_k := \theta(t_k)$, $v_k := v(t_k)$. In our experiments, the input set $\calA$ is finite and the classification algorithm for curves (Alg.~\ref{alg:rvm_collision_checking_curve}) is used to get successor nodes in an $A^*$ motion planning algorithm.


\begin{algorithm}[t]
\caption{Autonomous Mapping and Navigation with a Sparse Bayesian Kernel-based Map}
\label{alg:auto_nav_rvm}
  \footnotesize
	\begin{algorithmic}[1]
		\Require Initial state $\bfs_0 \in \bar{\mathcal{S}}$; goal region $\mathcal{G}$; prior relevance vectors $\Lambda_0$.
		\For{$k = 0, 1, \ldots$}
		  \If{$\bfs_k \in \calG$} \textbf{break} \EndIf
			\State $\bfz_k\gets$ new range sensor observation
			\State $\bf\mathcal{D}_k \leftarrow $ Training Data Generation$(\bfz_k, \bfs_k)$ \Comment{Sec. \ref{subsec:online_mapping}}
			\State $\Lambda_{k+1} \leftarrow $ Online RVM Training$(\Lambda_k, \mathcal{D}_k)$ \Comment{Alg.~\ref{alg:rvm_training_online}}
			\State Path Planning$(\Lambda_{k+1}, \bfs_k, \mathcal{G})$ \Comment{Alg.~\ref{alg:get_successors}}
			\State Move to the first state $\bfs_{k+1}$ along the path 
		\EndFor
	\end{algorithmic}
\end{algorithm}


\section{Experimental Results}
\label{sec:experimental_results}

\begin{figure*}[t]
\centering
\begin{subfigure}[b]{0.327\textwidth}
        \centering
        \includegraphics[width=\textwidth]{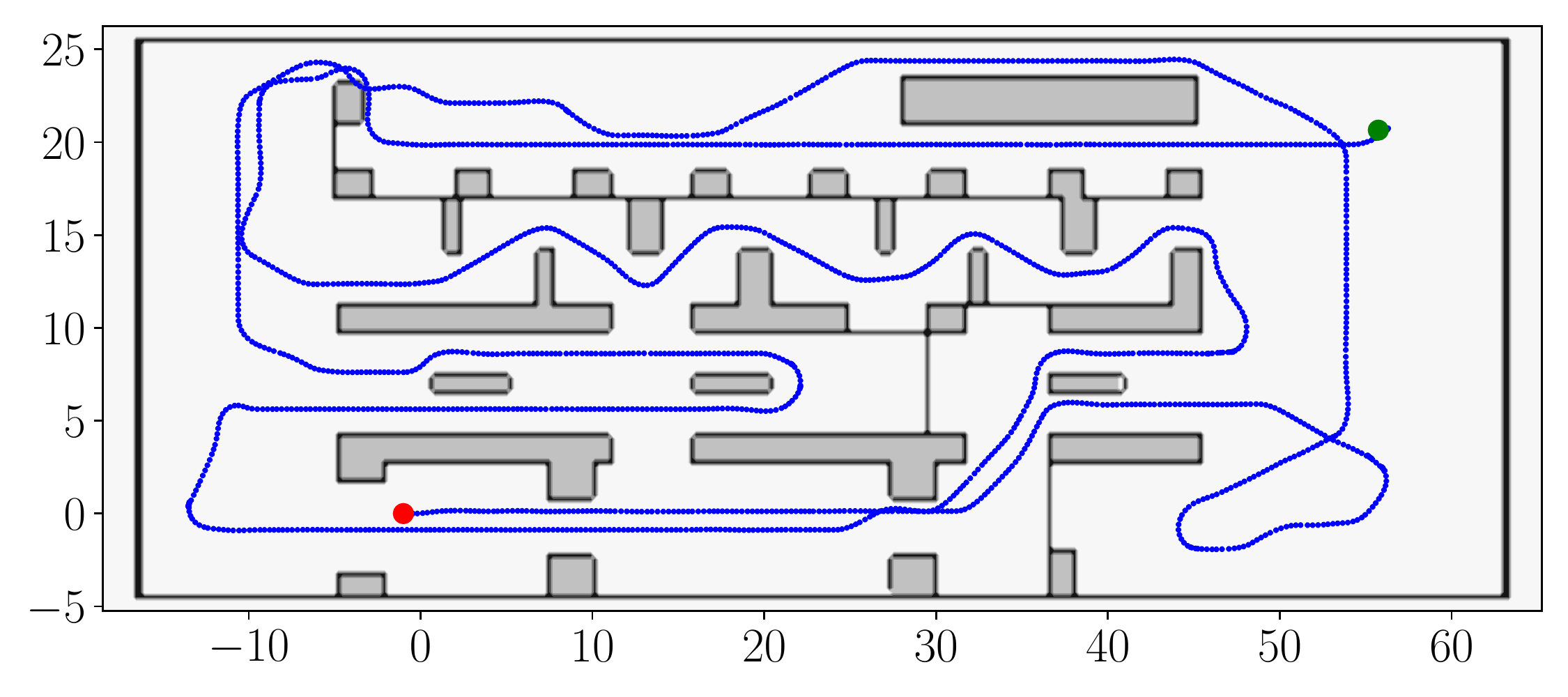}
        \caption{Ground truth map and robot trajectory.}
        \label{fig:gt_sim_map}
\end{subfigure}%
\hfill
\begin{subfigure}[b]{0.327\textwidth}
        \centering
        \includegraphics[width=\textwidth]{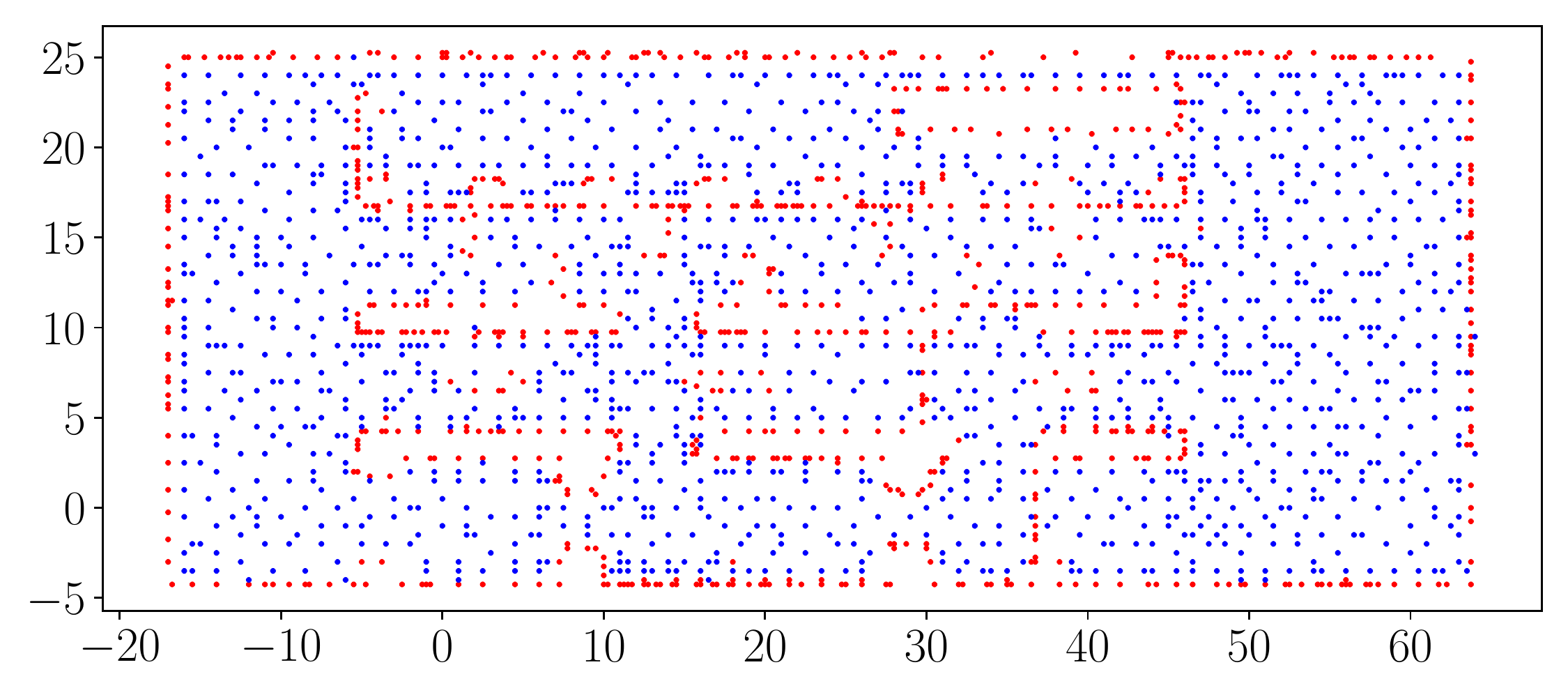}
        \caption{Set of $2141$ relevance vectors.}
        \label{fig:rvs_sim_map}
\end{subfigure}%
\hfill
\begin{subfigure}[b]{0.346\textwidth}
        \centering
        \includegraphics[trim=0 0 100 0,clip,width=\textwidth]{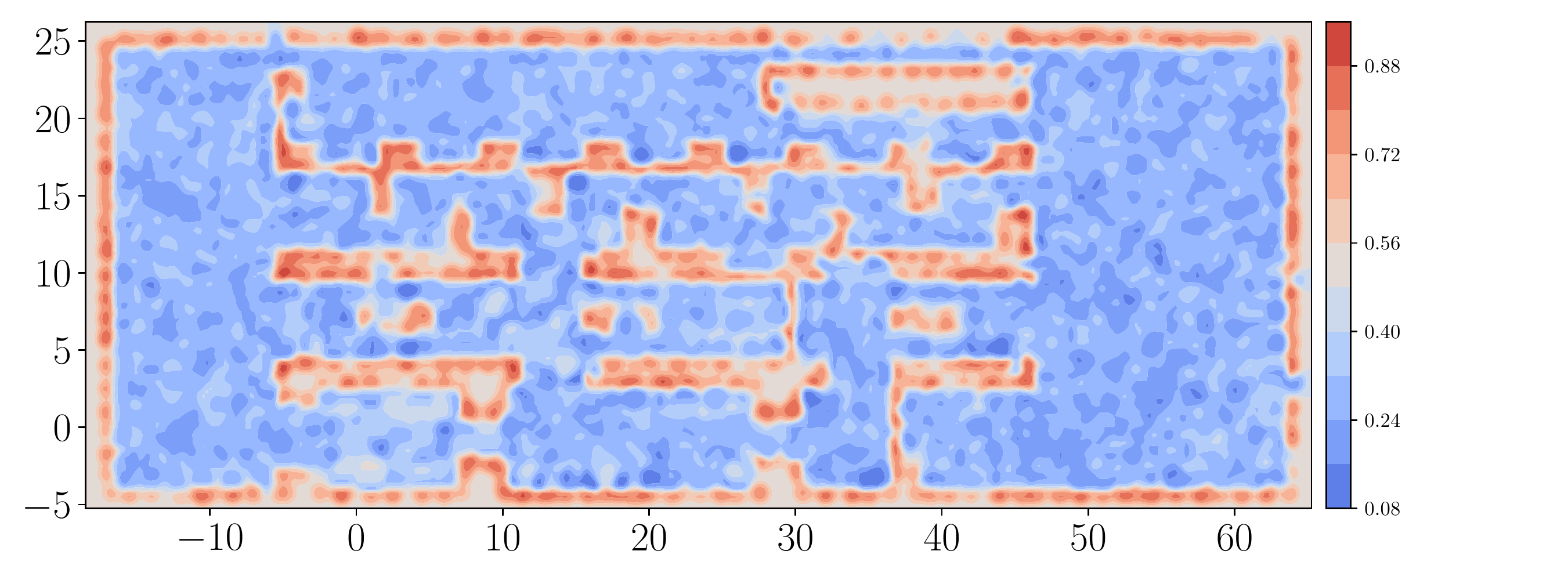}
        \caption{The final SBKM map.}
        \label{fig:sim_rvmmap}
\end{subfigure}%
%


\caption{Sparse map representation (with $\eta = 1, \Gamma = \sqrt{\gamma}I, \gamma = 3.0$) built from local streaming laser scans along the robot trajectory.}
\label{fig:sim_map}
\end{figure*}

\begin{table*}[t]
\caption{Comparison among our sparse Bayesian kernel-based map (SBKM), our sparse kernel-based map (SKM) \cite{duong2020autonomous}, OctoMap (OM)~\cite{octomap}, \NEW{and sequential Bayesian Hilbert map (SBHM)~\cite{senanayake2017bayesian}}. An RBF kernel with $\eta = 1, \Gamma = \sqrt{\gamma}I$ was used for SBHM map and our SBKM and SKM maps. $^\dagger$The storage requirements for SBKM are calculated for two storing approaches mentioned in Sec. \ref{sec:complexities}: 1) with Laplace approximation at test time, i.e., storing the relevance vectors' location with their label and precision $\bfxi$; 2) without Laplace approximation at test time, i.e., storing the relevance vectors' location with their label and weight's mean $\bf\mu$, and the largest eigenvalue of the covariance matrix $\lambda_{max}$. As we only need an extra float to store $\lambda_{max}$, both approaches offer similar storage requirements. \NEW{$^\ddag$ The storage requirements for SBHM are calculated as if the hinge points are stored using our storing approach.}}
\label{table:accuracy_column}
\centering

\begin{tabular}{ ccccccc } 
Methods  & Kernel param. $\gamma$ & Threshold $\bar{e}$ & Accuracy & Recall & Vectors/Nodes & Storage\\
\hline
\hline
SBKM & 1.0 & 0.5 & 97.8\% & 97.9\% & 1115 & 9kB$^\dagger$ \\
SBKM & 2.0 & 0.5 & 99.0\% & 99.3\% & 1642 & 13kB$^\dagger$ \\
SBKM & 3.0 & 0.45 & 99.2\% & 99.7\% & 2141 & 17kB$^\dagger$ \\
SBKM & 3.0 & 0.5 & 99.3\% & 99.4\% & 2141 & 17kB$^\dagger$ \\
SBKM & 3.0 & 0.55 & 99.5\% & 98.7\% & 2141 & 17kB$^\dagger$ \\
\hline
SKM & 3.0 & - & 99.9\% & 99.0\% & 2463 & 20kB \\

\NEW{SKM} & \NEW{2.0} & \NEW{-} & \NEW{99.8\%} & \NEW{98.3\%} & \NEW{2613} & \NEW{21kB} \\
\NEW{SKM} & \NEW{1.0} & \NEW{-} & \NEW{99.8\%} & \NEW{98.5\%} & \NEW{3064} & \NEW{25kB} \\
\hline
OM & - & 0.5 & 99.9\% & 99.7\% & 12432 non-leafs \& 34756 leafs & 25kB(binary)/236kB (full) \\
\hline
\NEW{SBHM} & \NEW{1.0} & \NEW{0.5} & \NEW{97.0\%} & \NEW{98.0\%} & \NEW{1156} & \NEW{9kB$^\ddag$} \\
\NEW{SBHM} & \NEW{2.0} & \NEW{0.5} & \NEW{99.0\%} & \NEW{99.4\%} & \NEW{1676} & \NEW{13kB$^\ddag$} \\
\NEW{SBHM} & \NEW{3.0} & \NEW{0.45} & \NEW{98.6\%} & \NEW{99.0\%} & \NEW{2205} & \NEW{17kB$^\ddag$} \\
\NEW{SBHM} & \NEW{3.0} & \NEW{0.5} & \NEW{99.0\%} & \NEW{98.6\%} & \NEW{2205} & \NEW{17kB$^\ddag$} \\
\NEW{SBHM} & \NEW{3.0} & \NEW{0.55} & \NEW{99.5\%} & \NEW{98.2\%} & \NEW{2205} & \NEW{17kB$^\ddag$}
\end{tabular}
\end{table*}


This section presents an evaluation of our autonomous mapping and navigation method using a fully actuated robot~\eqref{eq:fas_trajectory} in a simulated environment (Sec.~\ref{subsec:compare_simulated_warehouse}), the Intel Research Lab dataset \cite{Radish} (Sec. \ref{subsec:compare_intel}), and a car-like robot (Fig.~\ref{fig:robot_unkenv_scan}) with Ackermann-drive dynamics~\eqref{eq:car_dynamics} in real experiments (Sec. \ref{subsec:real_experiments}). \NEW{We examined the obstacle boundary with respect to the bias parameter $b$ and the threshold $e$ in Sec. \ref{subsec:boundary_conservativeness} and carried out an active mapping experiment using our map uncertainty in Sec. \ref{subsec:active_mapping}}. We used a radial basis function (RBF) kernel with parameters $\eta = 1$ and $\Gamma = \sqrt{\gamma} \bfI$. \NEW{The bias parameter $b$ is set to $-0.05$ in Sec. \ref{subsec:compare_simulated_warehouse}, Sec. \ref{subsec:real_experiments} and Sec. \ref{subsec:active_mapping}, and $0.0$ in Sec. \ref{subsec:compare_intel}}. Timing results are reported from an Intel i9 3.1 GHz CPU with 32GB RAM.


\begin{figure}[t]
\centering
\begin{subfigure}[b]{0.225\textwidth}
        \centering
        \includegraphics[width=\textwidth]{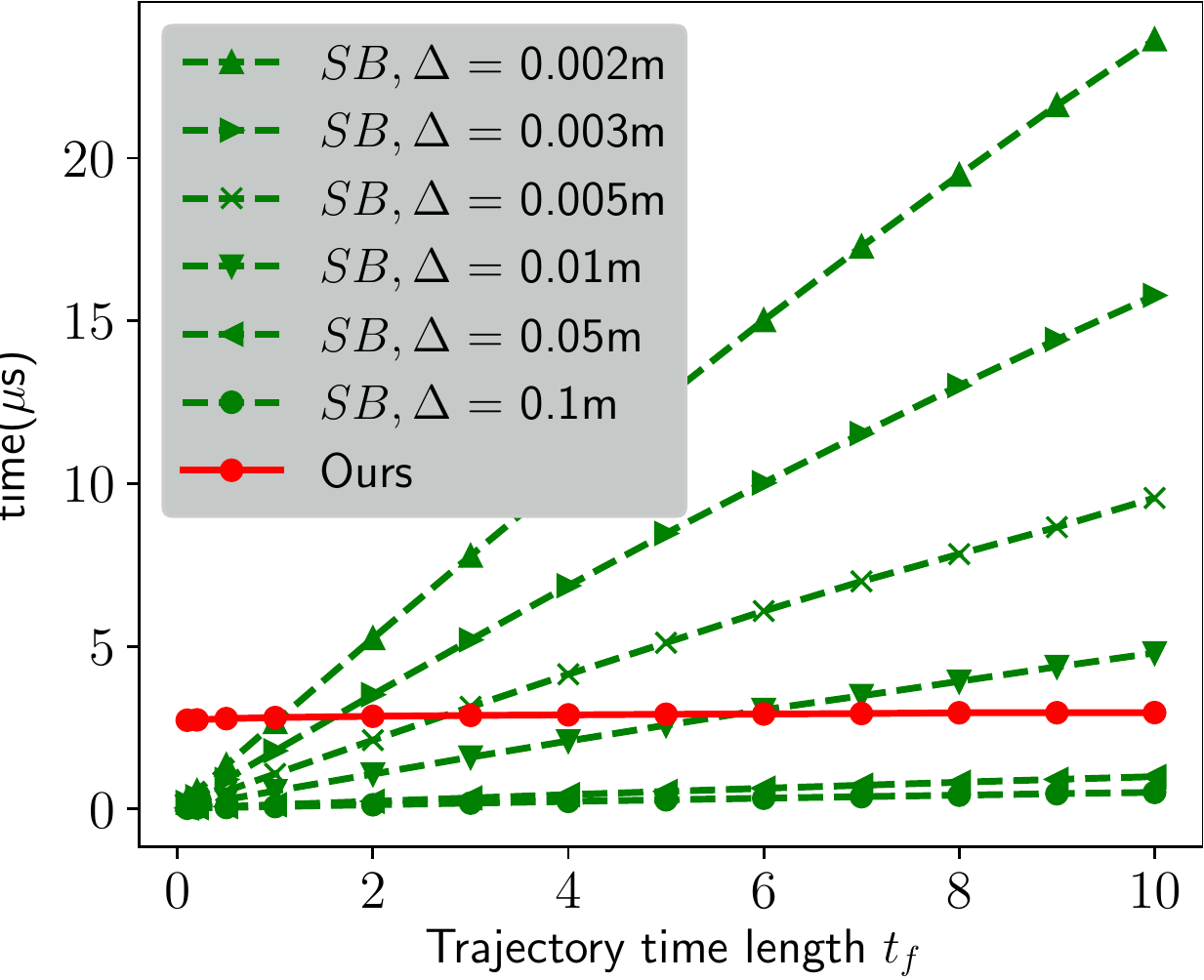}
        \caption{}
        \label{fig:checking_line_comparison}
\end{subfigure}%
\hfill
\begin{subfigure}[b]{0.225\textwidth}
        \centering
        \includegraphics[width=\textwidth]{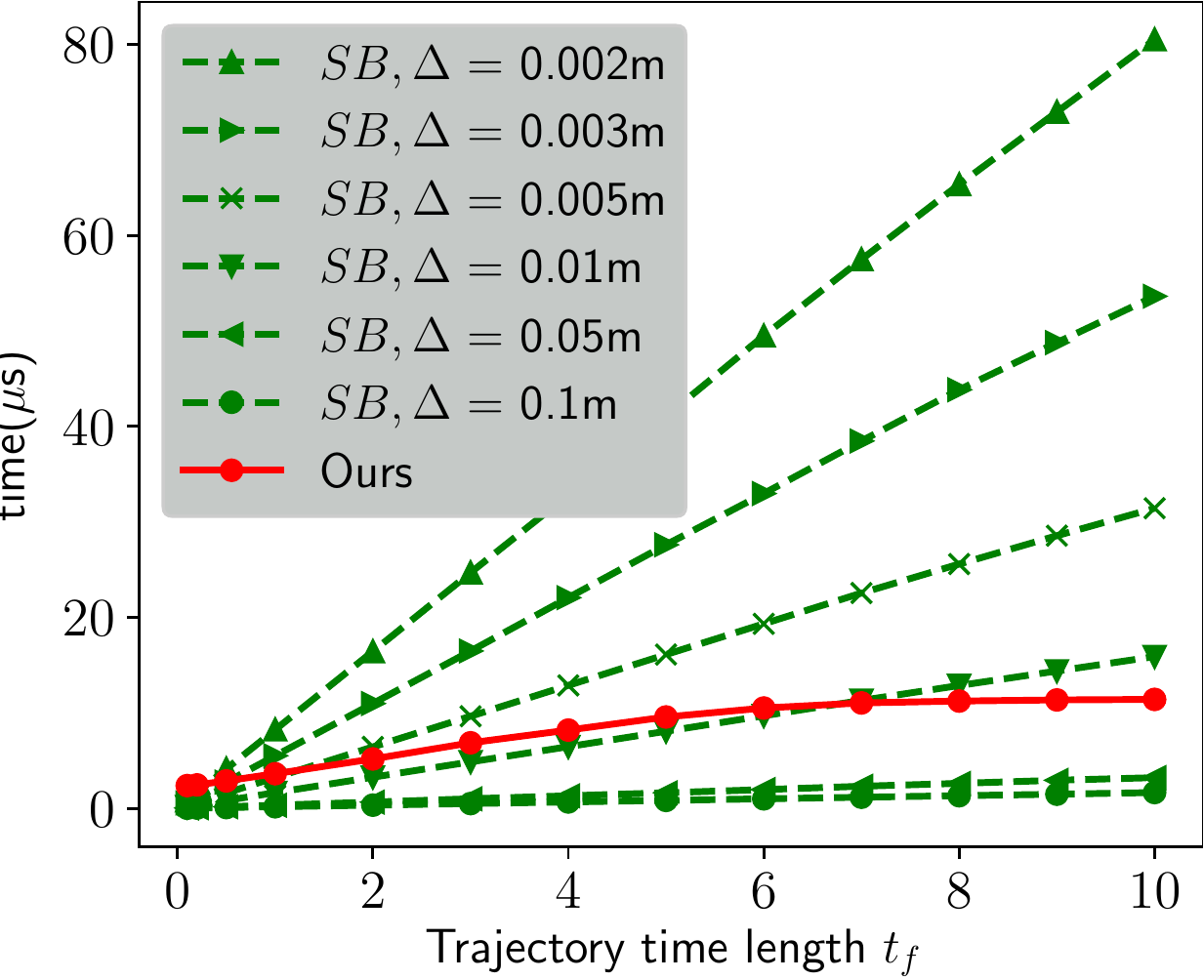}
        \caption{}
        \label{fig:checking_curve_comparison}
\end{subfigure}%
%


\caption{Collision checking time comparison between our methods and sampling-based (SB) ones with different sampling interval $\Delta$ for (a) line segments $\bfp(t) = \bfp_0 + \bfv t$ and (b) $2^{nd}$-order polynomial curves $\bfp(t) = \bfp_0 + \bfv t + \bfa t^2$ for $t \in [0,t_f]$  with various values of $t_f$.}

\label{fig:collision_checking_comparison}
\end{figure}

\begin{figure*}[t]
\centering
\begin{subfigure}[b]{0.245\textwidth}
        \centering
        \includegraphics[width=\textwidth]{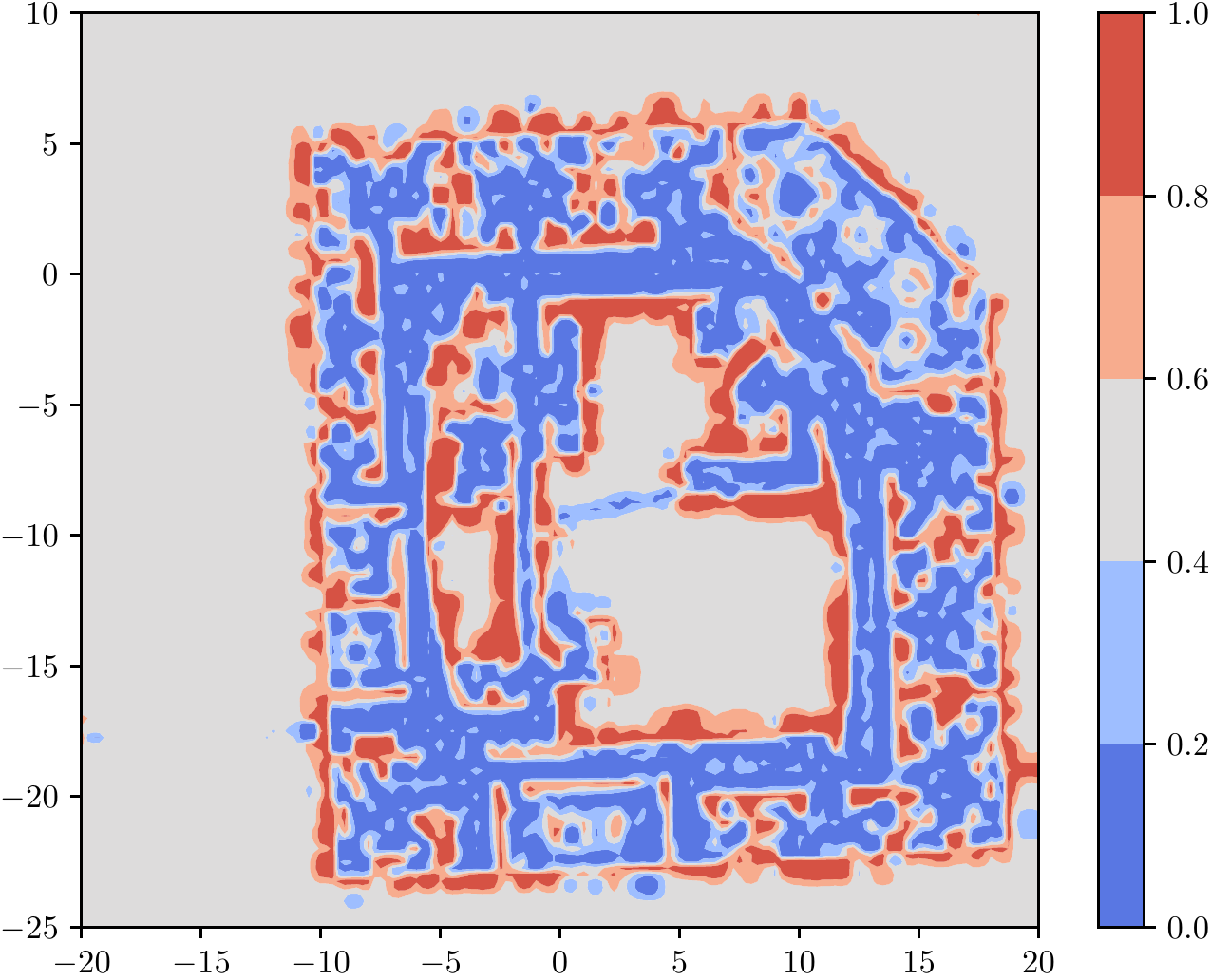}
        \caption{SBHM map.}
        \label{fig:sbhm_intel}
\end{subfigure}
\hfill
\begin{subfigure}[b]{0.245\textwidth}
        \centering
        \includegraphics[width=\textwidth]{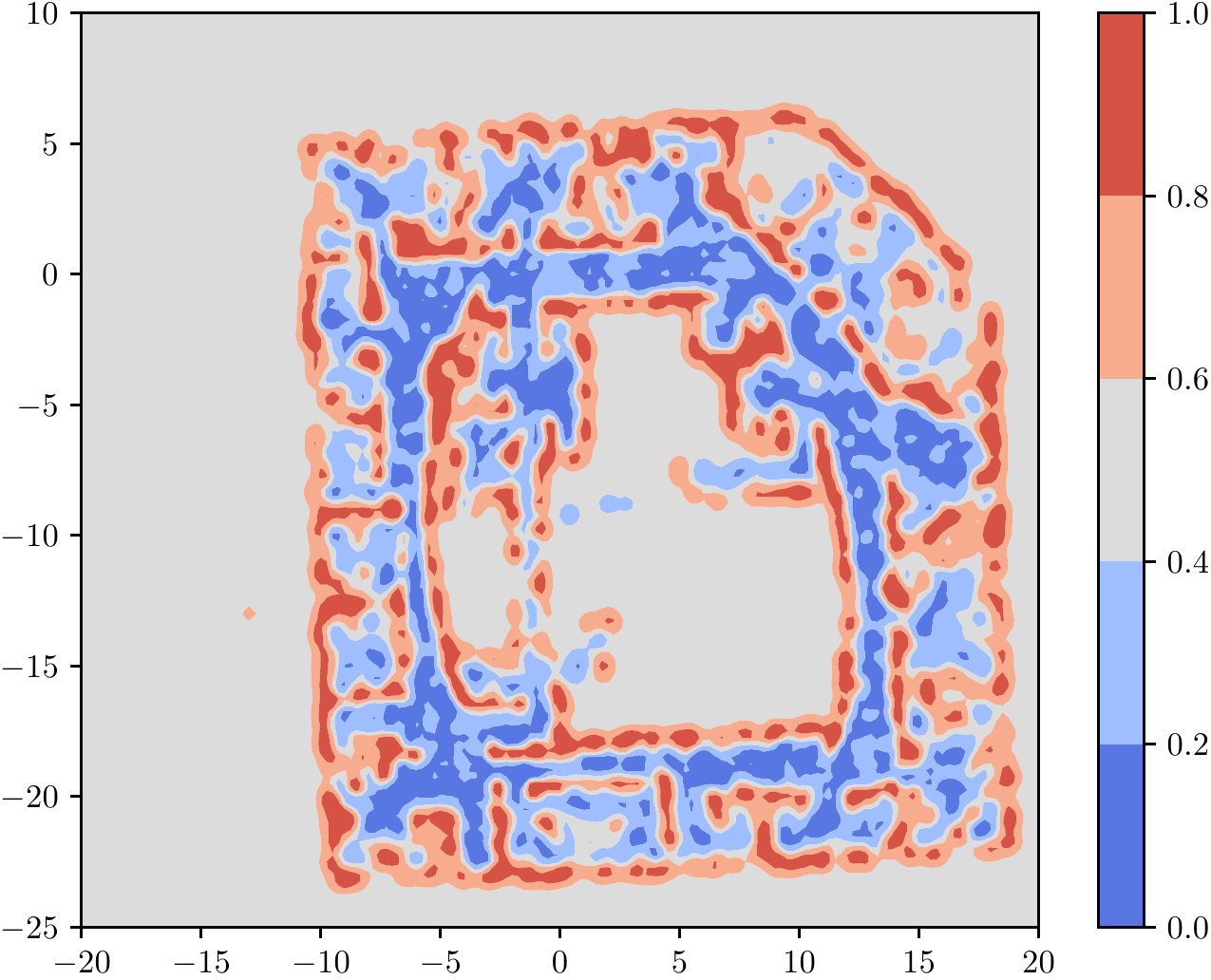}
        \caption{Our SBKM map.}
        \label{fig:rvmmap_intel}
\end{subfigure}%
\hfill
\begin{subfigure}[b]{0.245\textwidth}
        \centering
        \includegraphics[width=\textwidth]{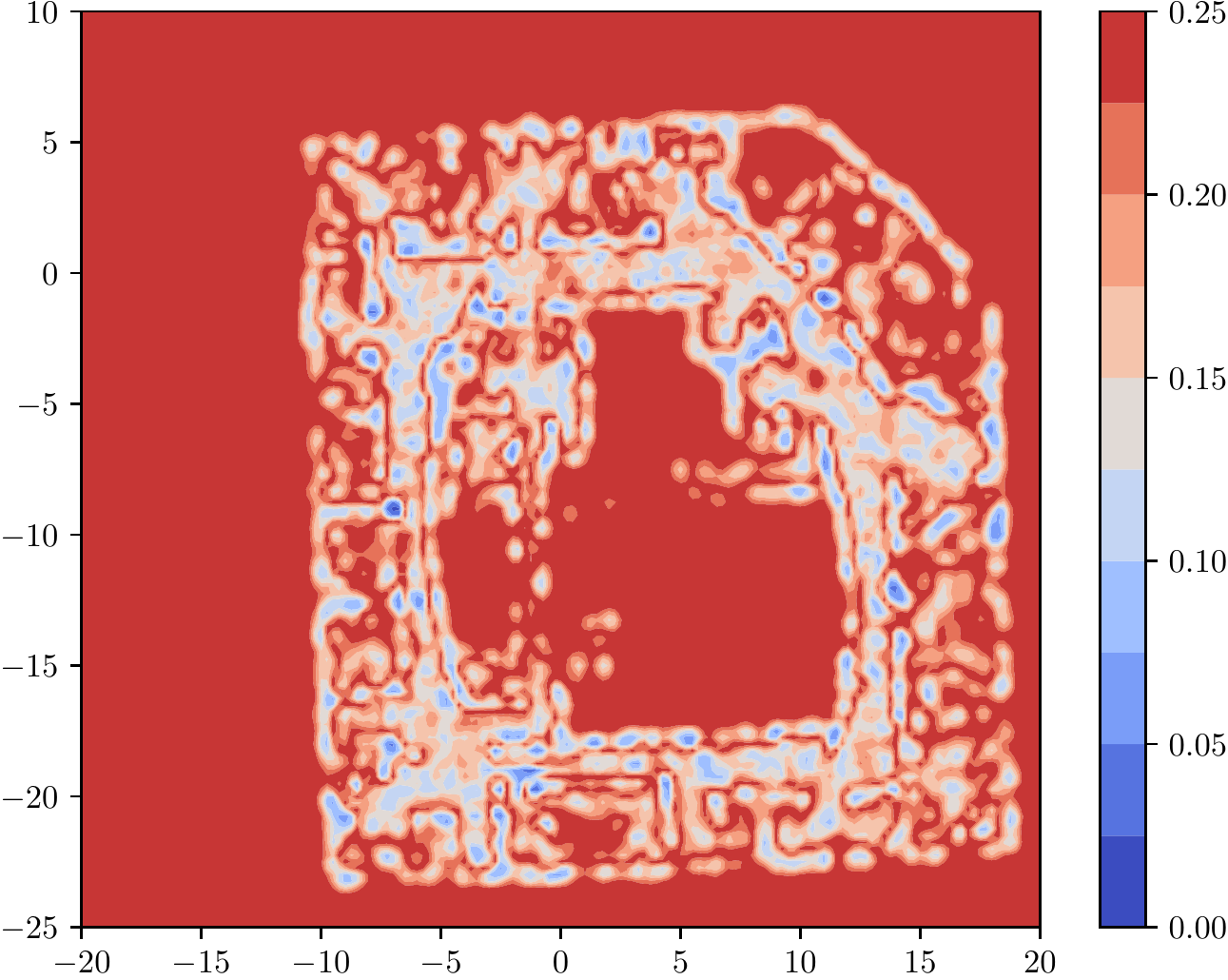}
        \caption{\NEW{Our SBKM map's variance.}}
        \label{fig:rvmmap_intel_variance}
\end{subfigure}%
\hfill
\begin{subfigure}[b]{0.25\textwidth}
        \centering
        \includegraphics[width=\textwidth]{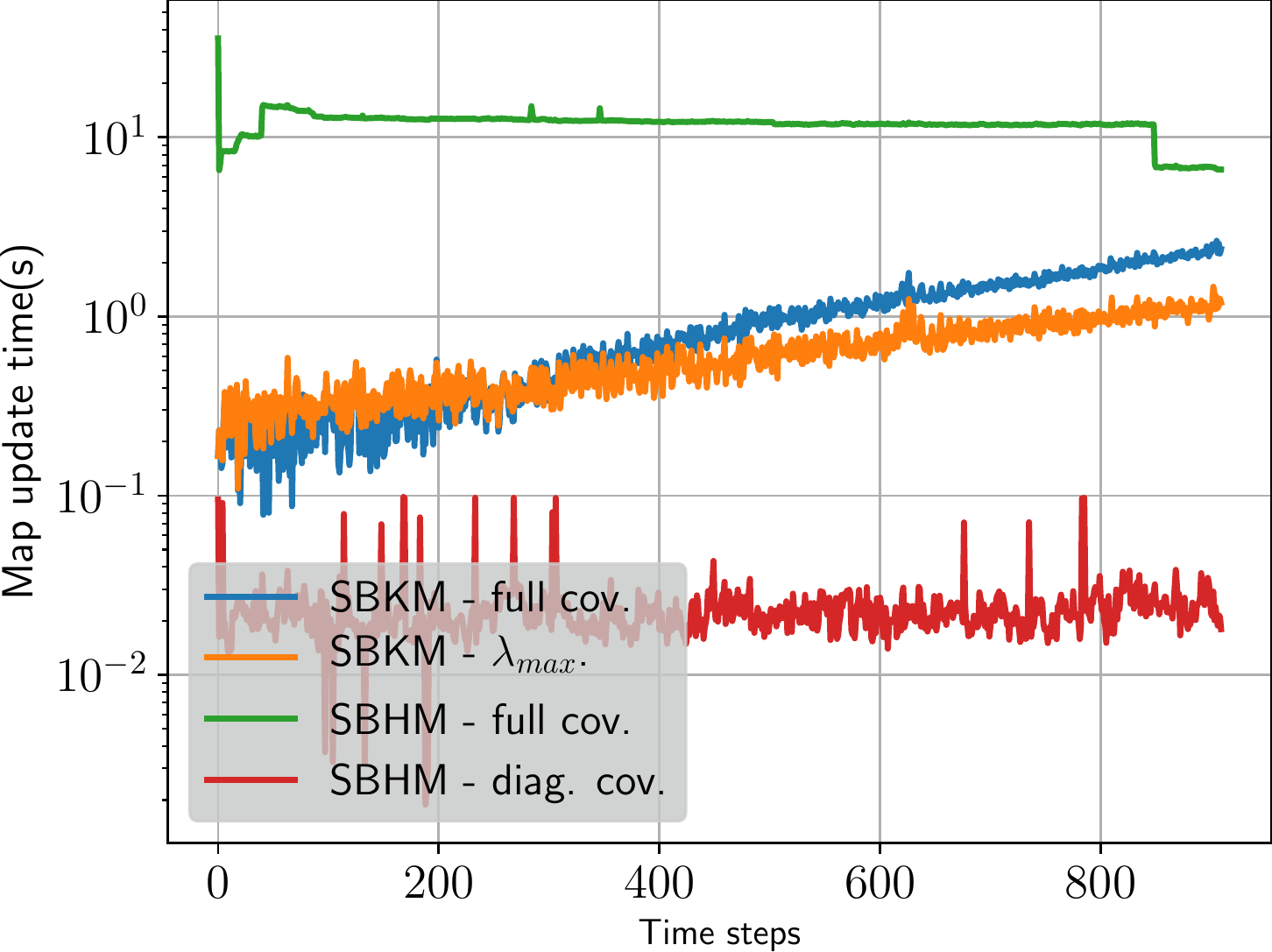}
        \caption{\NEW{Map update time per step.}}
        \label{fig:update_time_intel}
\end{subfigure}%
\caption{Comparison between sparse Bayesian kernel-based map (SBKM) and Sequential Bayesian Hilbert Map (SBHM) \cite{senanayake2017bayesian} with $\eta = 1$, $\Gamma = \sqrt{6.71}I$ built online using lidar scans from the Intel Research Lab dataset~\cite{Radish}.}
\label{fig:intel_map}
\end{figure*}

\subsection{\NEW{Comparison with binary map representation}}
\label{subsec:compare_simulated_warehouse}

In this section, we compared the accuracy, the recall and storage requirements of our sparse Bayesian kernel-based map (SBKM) with those of the non-Bayesian sparse kernel-based map (SKM) from our preliminary work~\cite{duong2020autonomous}, the popular occupancy mapping algorithm OctoMap~\cite{octomap}, \NEW{and the sequential Bayesian Hilbert map (SBHM)~\cite{senanayake2017bayesian}. Since SKM only provides binary maps, binary maps are used to calculate accuracy and recall.} As the ground-truth map (Fig.~\ref{fig:gt_sim_map}) represents the work space instead of C-space, a point robot ($r=0$) was used for an accurate comparison. Lidar scans were simulated along the robot trajectory shown in Fig.~\ref{fig:gt_sim_map} and used to build our sparse Bayesian kernel-based map (SBKM), the non-Bayesian sparse kernel-based map (SKM) \cite{duong2020autonomous}, OctoMap, \NEW{and sequential Bayesian Hilbert map (SBHM)}. An $R^*$-tree approximation of the score $F(\bfx)$ was used with $K^++K^- = 200$ nearest support vectors around the robot location $\bfp_k$ for map updating and with $K^+ + K^- = 10$ nearest support vectors for collision checking. OctoMap's resolution was set to $0.25m$ to match that of the grid used to sample our training data from. \NEW{As SBHM depends on a grid of hinge points to generate feature vectors, we chose the grid's resolution so that the number of hinge points is similar to our SBKM's number of relevance vectors for a fair comparison.}


Table~\ref{table:accuracy_column} compares the accuracy, the recall and the storage requirements of our SBKM maps, our SKM maps, \NEW{SBHM maps \cite{senanayake2017bayesian}} and OctoMap's binary and probabilistic maps. The SBKM map and its sparse set of relevance vectors are shown in Fig.~\ref{fig:sim_map}. To calculate map accuracy, we used different thresholds $\bar{e}$ to generate binary versions of our map and compare with the ground truth. The ground truth map was sampled on a grid with the same resolution $0.25m$ and the accuracy was calculated as the number of correct predictions divided by the total number of samples. Note that the interior (gray regions in the ground-truth map) of the obstacles were considered occupied for our map - since it was surrounded by positive relevance vectors - but were considered free in SKM and OctoMap maps.

Table~\ref{table:accuracy_column} shows that SBKM (with threshold $\bar{e} = 0.5$), SKM, OctoMap's binary map, \NEW{and SBHM (with threshold $\bar{e} = 0.5$) led to a similar accuracy of $\sim99\%$ ($\gamma = 2.0 \& 3.0$) and $\sim 97\%$ ($\gamma = 1.0$) and a similar recall of $\sim 99\%$ ($\gamma = 2.0 \& 3.0$) and $\sim 98\%$ ($\gamma = 1.0$). SKM has $\sim 0.5\%$ higher accuracy than SBKM since the support vectors lie around the obstacle boundary, leading to sharper decision boundaries.} As we decreased the decision threshold $\bar{e}$, the accuracy decreased, as more free cells were classified as ``occupied", while the recall increased, as more occupied cells were classified as ``occupied", and vice versa. When the parameter $\gamma$ decreased, the support of the kernel expanded, leading to fewer relevance vectors, i.e., less storage but lower accuracy and recalls. This illustrates the trade-off between storage gains and accuracy when the details of the obstacles' boundaries can be reduced via a lower value of $\gamma$ to achieve higher compression rate.

We compared the storage requirements for our SBKM and SKM representations and OctoMap. OctoMap's binary map required a compressed octree with $12432$ non-leaf nodes with $2$ bytes per node, leading to a storage requirement of $\sim25 kB$. Its fully probabilistic map required to store $47188$ leaf and non-leaf nodes with $5$ bytes per node, leading to a storage requirement of $\sim236kB$. As the space consumption depends on the computer architecture and how the relevance vector information is compressed, we provide only a rough estimate of storage requirements for our maps. For the SKM map, each support vector required $8$ bytes, including an integer for the support vector's location on the underlying grid and a float for its weight. As a result, $\sim20kB$ were needed to store the $2463$ resulting support vectors for $\gamma = 3.0$. As discussed in Sec. \ref{subsec:storage_req}, the SBKM map could be stored in two ways: 1) the relevance vectors' location, their label and their weight prior precision if Laplace approximation was allowed at test time; 2) the relevance vectors' location, their label, their weight mean and the largest eigenvalue $\lambda_{max}$ of the covariance matrix $\Sigma$ if Laplace approximation was not allowed at test time and our collision checking methods were used. The former stored an integer representing a relevance vector's location on the underlying grid and a float representing its weight prior's precision and its label (using the float sign). This required $8$ bytes on a $32$-bit architecture per relevance vector. Our SBKM map with $\Gamma = \sqrt{3.0}I$ contained $2141$ relevance vectors, leading to storage requirements of $\sim17kB$. The latter also needed $17kB$ to store the relevance vectors' location, their weight's mean and label. Besides, an extra float ($4$ bytes) is needed to store $\lambda_{max}$ leading to a similar total storage requirement of $17kB$. \NEW{These requirements were $32\%$ and $15\%$ better than those of OctoMap and our (non-Bayesian) SKM, respectively. To achieve a sparse Bayesian map representation, more computation is needed, leading to slower map update for SBKM compared to SKM and OctoMap. Since SBHM and SBKM are both Bayesian online mapping methods and share similar settings, we compared their map update time in Sec. \ref{subsec:compare_intel}. As $\gamma$ decreases, the number of relevance vectors of SBKM decreased while the number of support vectors of SKM increased. This is because the relevance vectors spread out in the environment while the support vectors were placed on both sides of the obstacle boundaries. Therefore, as $\gamma$ decreased, a relevance/support vector represented more space leading to fewer relevance vectors in the SBKM models and more support vectors, maintaining a sharp decision boundary, for the SKM models.}

We also compared the average collision checking time over one million random line segments $\bfp(t) = \bfp_0 + \bfv t$ and one million random second order polynomial curves $\bfp(t) = \bfp_0 + \bfv t + \bfa t^2$ for $t \in [0,t_f]$ using our complete method (Alg. \ref{alg:rvm_collision_checking_line} with Eq. \eqref{eq:rvm_tu_star} for line segments, Alg. \ref{alg:rvm_collision_checking_curve} with Eq. \eqref{eq:rvm_ru_star} for curves,  $K^+ + K^- = 10$ for score approximation, \NEW{and $e = -0.01$ for occupancy threshold}) and sampling-based methods with different sampling resolutions using the ground truth map. Fig.~\ref{fig:checking_line_comparison} and \ref{fig:checking_curve_comparison} show that the time for sampling-based collision checking increased as the time length $t_f$ increased or the sampling resolution decreased. Meanwhile, our method's time was stable at $\sim 3\mu s$ for checking line segments and at $\sim 11 \mu s$ for checking second-order polynomial curves suggesting our collision checking algorithms' suitability for real-time applications. 

%
%

\subsection{\NEW{Comparision with probabilistic map representations}}
\label{subsec:compare_intel}

%

\begin{table}[t]
\caption{Comparison between our sparse Bayesian kernel-based map (SBKM),  Sequential Bayesian Hilbert Map (SBHM)~\cite{senanayake2017bayesian} (with full and \NEW{diagonal covariance matrices}), and \NEW{OctoMap} on the Intel Research Lab dataset~\cite{Radish}. An RBK kernel with $\eta = 1, \Gamma = \sqrt{\gamma}I$ are used for SBHM and our SBKM maps. The metrics are the area under the receiver operating characteristic curve (AUC) and the negative log-likelihood loss (NLL).}
\label{table:compare_intel}
\centering
\begin{tabular}{ ccccccc } 
Methods  & SBHM & \NEW{SBHM} & SBKM & SBKM & \NEW{OM}\\
\hline
\hline
 $\gamma$ & $6.71$ & \NEW{$6.71$} & $6.71$ & $6.71$ & -\\
$\Sigma$  & full & \NEW{diag.} & full & $\lambda_{max}$ only & \NEW{-}\\
 AUC &  0.98&\NEW{0.98}& 0.96 & 0.95 & \NEW{0.95} \\
 NLL &  0.24&\NEW{0.24}&0.36 & 0.52 & \NEW{0.27}\\
 Feature dim. &  5600& \NEW{5600}& 3492 & 3492 & \NEW{-} \\
 Avg. time/scan & 11.8s & \NEW{0.03s} & 0.76s & 0.43s & \NEW{-}  \\
\end{tabular}
\end{table}

\begin{figure*}[t]
\centering

\begin{subfigure}[b]{0.32\textwidth}
        \centering
        \includegraphics[width=\textwidth]{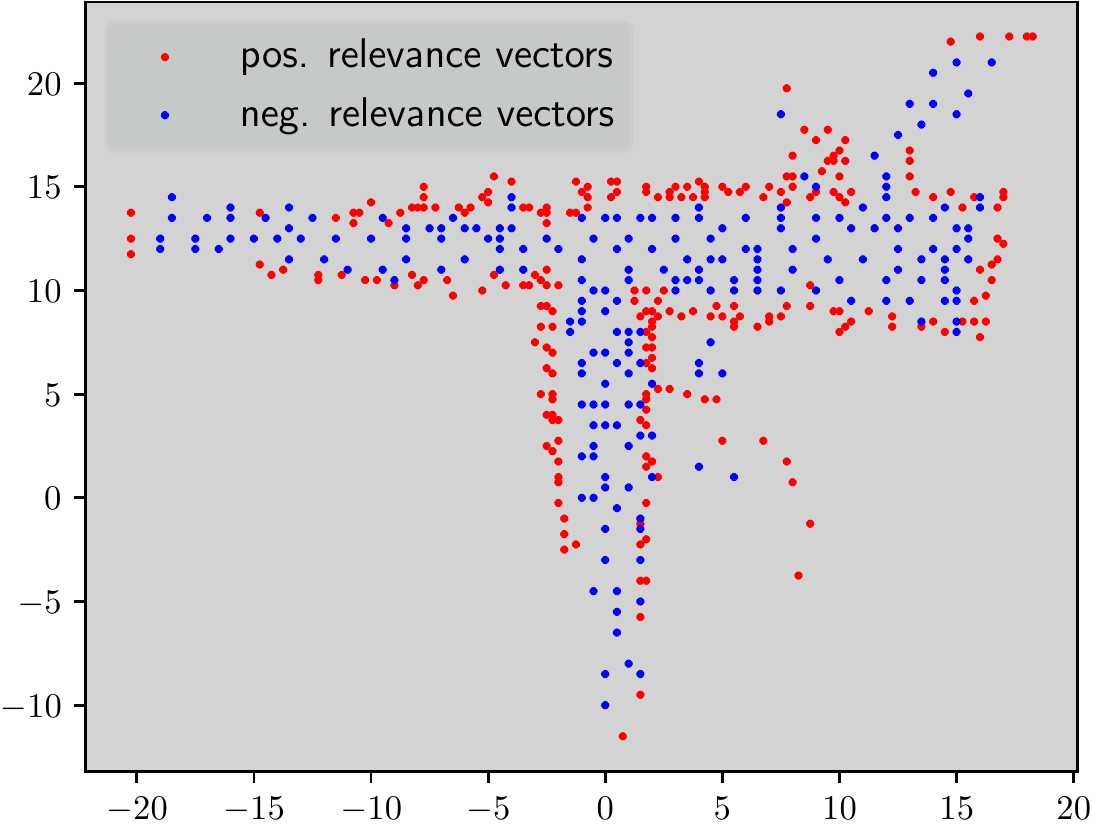}
                \caption{The final $453$ relevance vectors.}
        \label{fig:realexp_rvs}
\end{subfigure}%
\hfill
\begin{subfigure}[b]{0.357\textwidth}
        \centering
        \includegraphics[width=\textwidth]{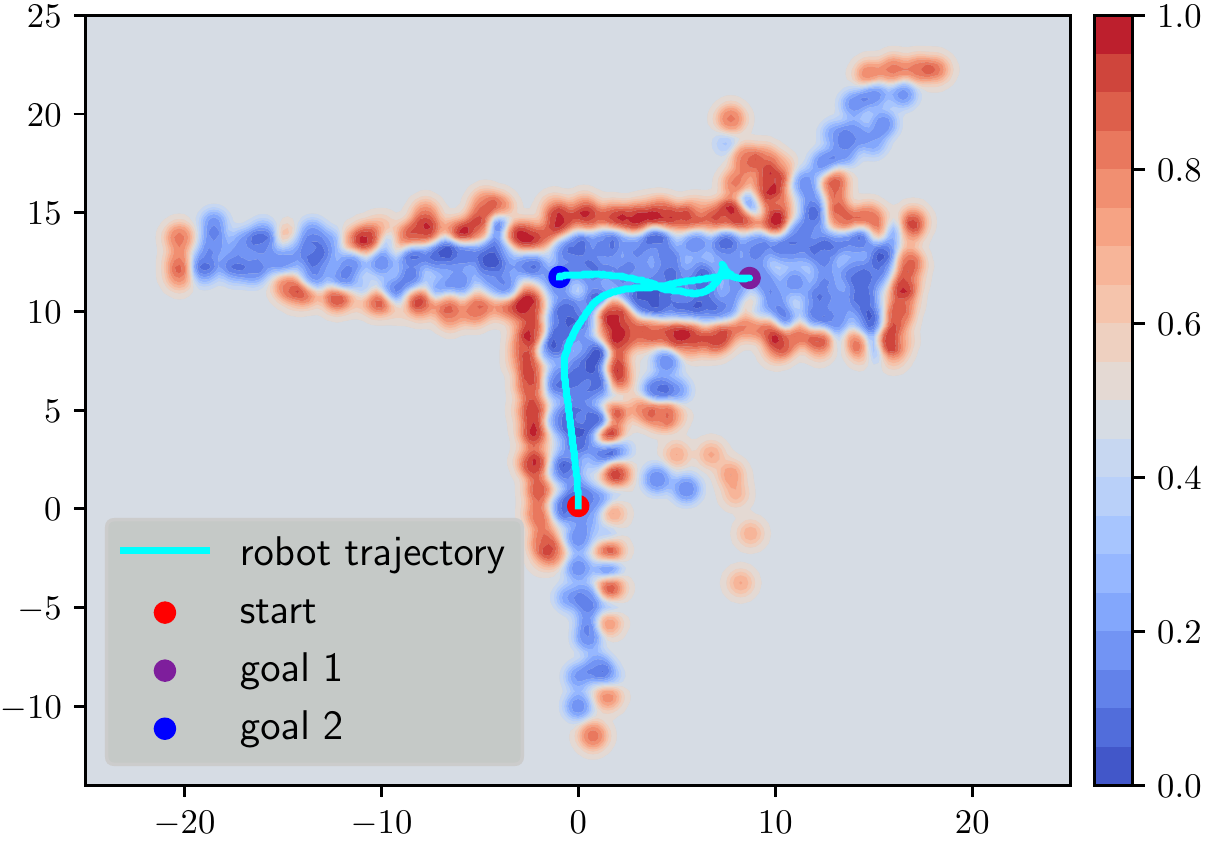}
                \caption{The final probabilistic map.}
        \label{fig:realexp_probmap}
\end{subfigure}%
\hfill
\begin{minipage}[b]{0.3\linewidth}
 \begin{subfigure}[b]{\textwidth}
        \centering
        \includegraphics[width=\textwidth]{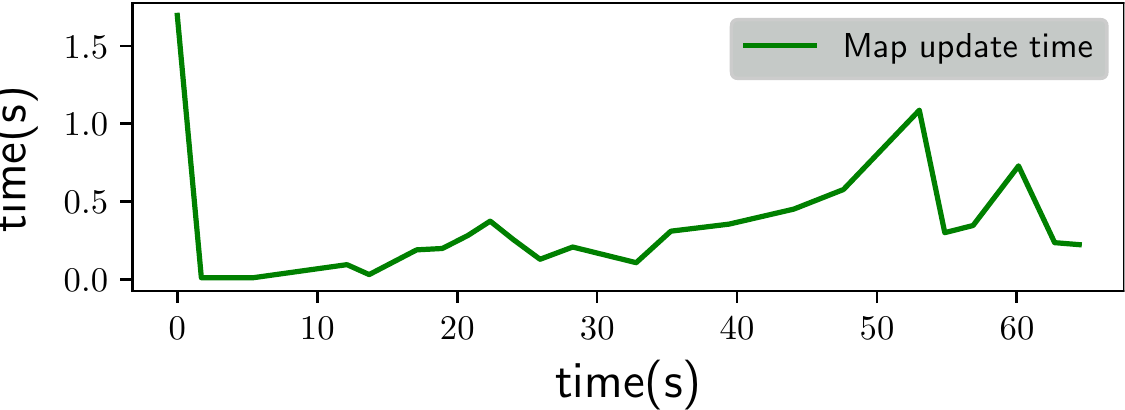}
                \caption{Map update time.}
        \label{fig:realexp_mapupdate}
\end{subfigure}%

    \begin{subfigure}[b]{\textwidth}
        \centering
        \includegraphics[width=\textwidth]{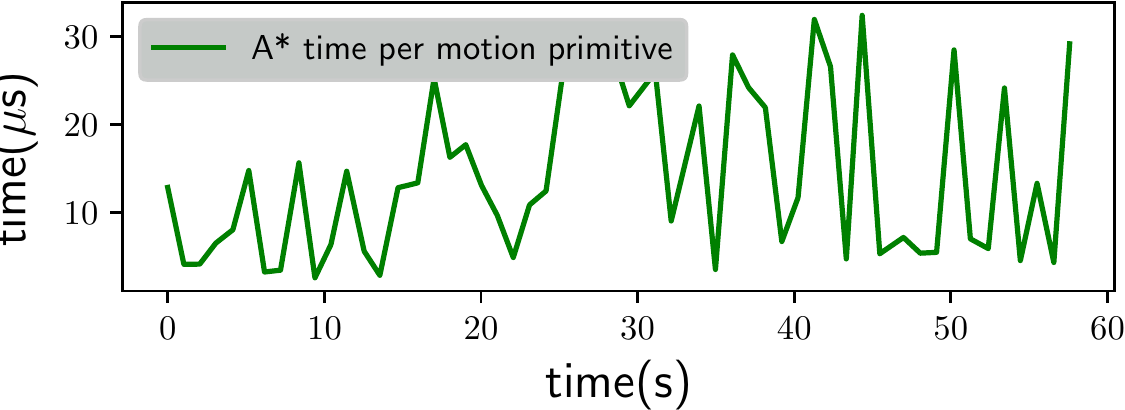}
                \caption{Planning time per motion primitive.}
        \label{fig:realexp_planning_time}
\end{subfigure}%
  \end{minipage} 
\caption{Real experiment with an autonomous Racecar robot navigating in an unknown hallway environment: (a) The final $453$ relevance vectors; (b) The final probabilistic map; (c) Map update time; (d) Planning time per motion primitive. Please refer to our website \url{https://thaipduong.github.io/sbkm} for the experiment video.}
\label{fig:realexp}
\end{figure*}

In this section, we compared our sparse Bayesian kernel-based map (SBKM) approach with \NEW{other probablistic occupancy maps: OctoMap~\cite{octomap} and} Sequential Bayesian Hilbert Map (SBHM)~\cite{senanayake2017bayesian}, a similar approach for Bayesian probabilistic mapping method from streaming local observation. We tried our best to match the parameters for a fair comparison, e.g. using the same kernel parameter $\Gamma = \sqrt{\gamma}\bfI$ with $\gamma = 6.71$ as provided by SBHM code \cite{senanayake2017bayesian}. The Intel Research Lab dataset~\cite{Radish} was used with both methods to build the map of the environment in an online manner. Our online training data (Sec.~\ref{subsec:online_probit_rvm}) were generated from a grid with resolution $0.2m$. Fig. \ref{fig:sbhm_intel} and  \ref{fig:rvmmap_intel} show similar final maps from the SBHM approach and our SBKM method, respectively. \NEW{Fig. \ref{fig:rvmmap_intel_variance} plots the our SBKM map's variance, distinguishing between known (low variance) and unknown (high variance) regions}. The dataset was split into a training set ($90\%$) and a test set ($10\%$). \NEW{Since we only considered probabilistic occupancy maps in this section,} the metrics for comparison were the area under the receiver operating characteristic curve (AUC) and the negative log-likelihood loss (NLL) of a point $\bfx$, defined as $NLL(y|\bfx, \bfxi) = -\log{p(y| \bfx, \bfxi)}$, 
where $y \in \{-1,1\}$ is the true label and $p(y| \bfx, \bfxi)$ is the predictive distribution in Eq. \eqref{eq:rvm_probit_score}. The AUC score and NLL loss were calculated over the test set. 

Table \ref{table:compare_intel} presents the metrics for both mapping methods. The SBHM map used a fixed grid of $5600$ hinged points with resolution $0.5m$ \NEW{for feature vector calculation}. Meanwhile, our map incrementally learned a sparse set of relevance vectors from the training dataset, not requiring a set of fixed hinged points which is hard to predetermine for unknown environments. Our final map's AUC score and NLL loss were slightly worse than those of SBHM with full covariance matrix while maintaining $\sim 35\%$ fewer points to represent the environment and having faster map updates with less than $1s$ per scan, on average, as shown in Table \ref{table:compare_intel} and Fig. \ref{fig:update_time_intel}. Our training algorithm incrementally built the set of relevance vectors and only updated the weights of the local vectors due to the use of $K$ nearest relevance vectors in Alg. \ref{alg:rvm_training_online}. Consequently, it did not have a fixed global set of points to optimize over as done by SBHM, leading to suboptimality in trade-off for sparseness. Note that both our map and the SBHM map estimated the mean $\bfmu$ and the full covariance matrix $\bfSigma$ of the weights' posterior for test time. If our collision checking algorithms are used for planning, only the largest eigenvalue $\lambda_{max}$ of $\bfSigma$ is needed and can be calculated efficiently using the sparsified inverse covariance matrix as shown in Sec. \ref{subsec:storage_req}. In this case, Table \ref{table:compare_intel} shows that our map update time was reduced by half to about $\sim 0.43s$ per scan (Table \ref{table:compare_intel} and Fig. \ref{fig:update_time_intel}) while offering similar AUC score to that of our SBKM map with full covariance matrix. The higher NLL loss was due to the upper bound used in Prop. \ref{prop:rvm_prob_bound} for point classification instead of the true occupancy probability. \NEW{Meanwhile, Octomap's AUC score was lower than ours and SBHM's as the default maximum ($0.97$) and minimum ($0.12$) values of the occupancy probability were used. The feature dimension and the average update time were not compared as OctoMap is not a kernel-based method as SBHM and our SBKM. A variant of SBHM that only uses diagonal covariance matrix updated the map $25$ times faster than our SBKM method with full covariance matrix. While SBKM can be improved using a diagonal covariance matrix, we leave this investigation for future work.}

\subsection{\NEW{Decision Boundary's Conservativeness}}
\label{subsec:boundary_conservativeness}

\begin{figure}[t]
\centering

\begin{subfigure}[b]{0.45\textwidth}
        \centering
        \includegraphics[width=\textwidth]{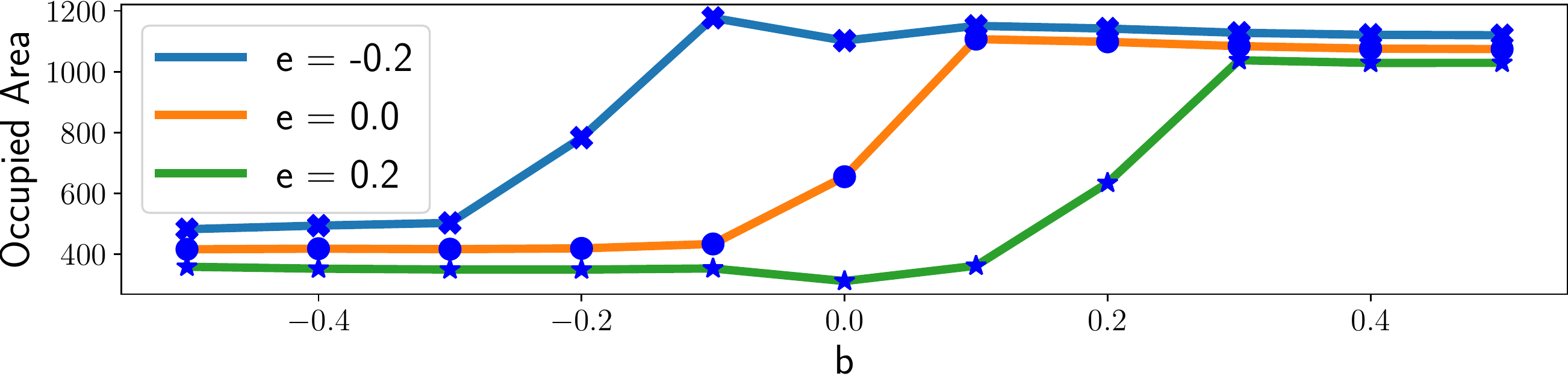}
        \caption{\NEW{The occupied area (with different values of $e$) versus the bias $b$.}}
        \label{fig:occupied_area_vs_b_e}
\end{subfigure}%

\begin{subfigure}[b]{0.45\textwidth}
        \centering
        \includegraphics[width=\textwidth]{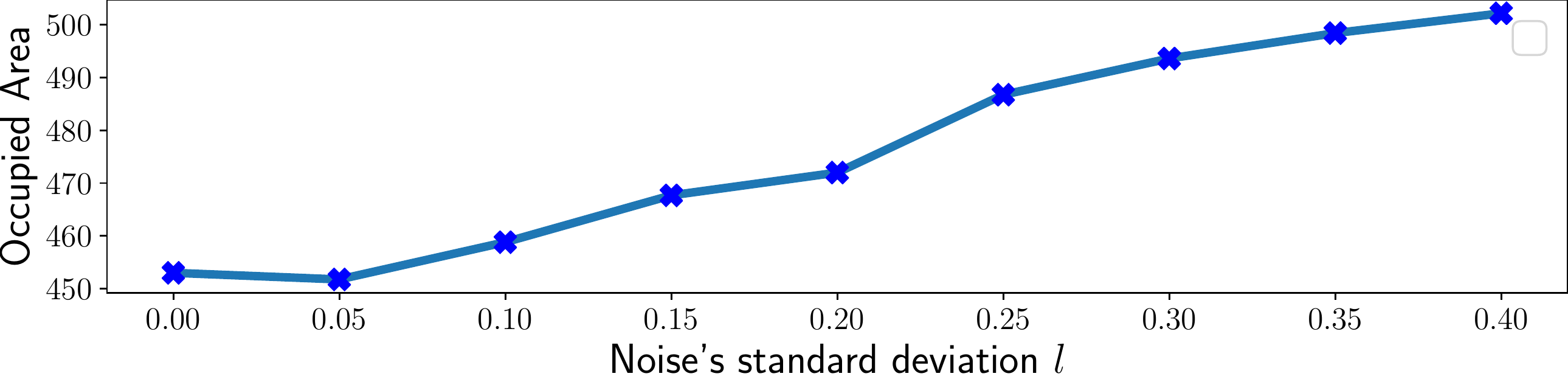}
        \caption{\NEW{The occupied area (with $b = -0.01, e = 0.0$) versus the noise's standard deviation.}}
        \label{fig:occupied_area_vs_noise}
\end{subfigure}%
\caption{\NEW{Occupied area versus the bias $b$ and the threshold $e$ (a) and versus noise level (b).}}
\end{figure}

\begin{figure*}[t]
\centering
\begin{subfigure}[b]{0.25\textwidth}
        \centering
        \includegraphics[width=\textwidth]{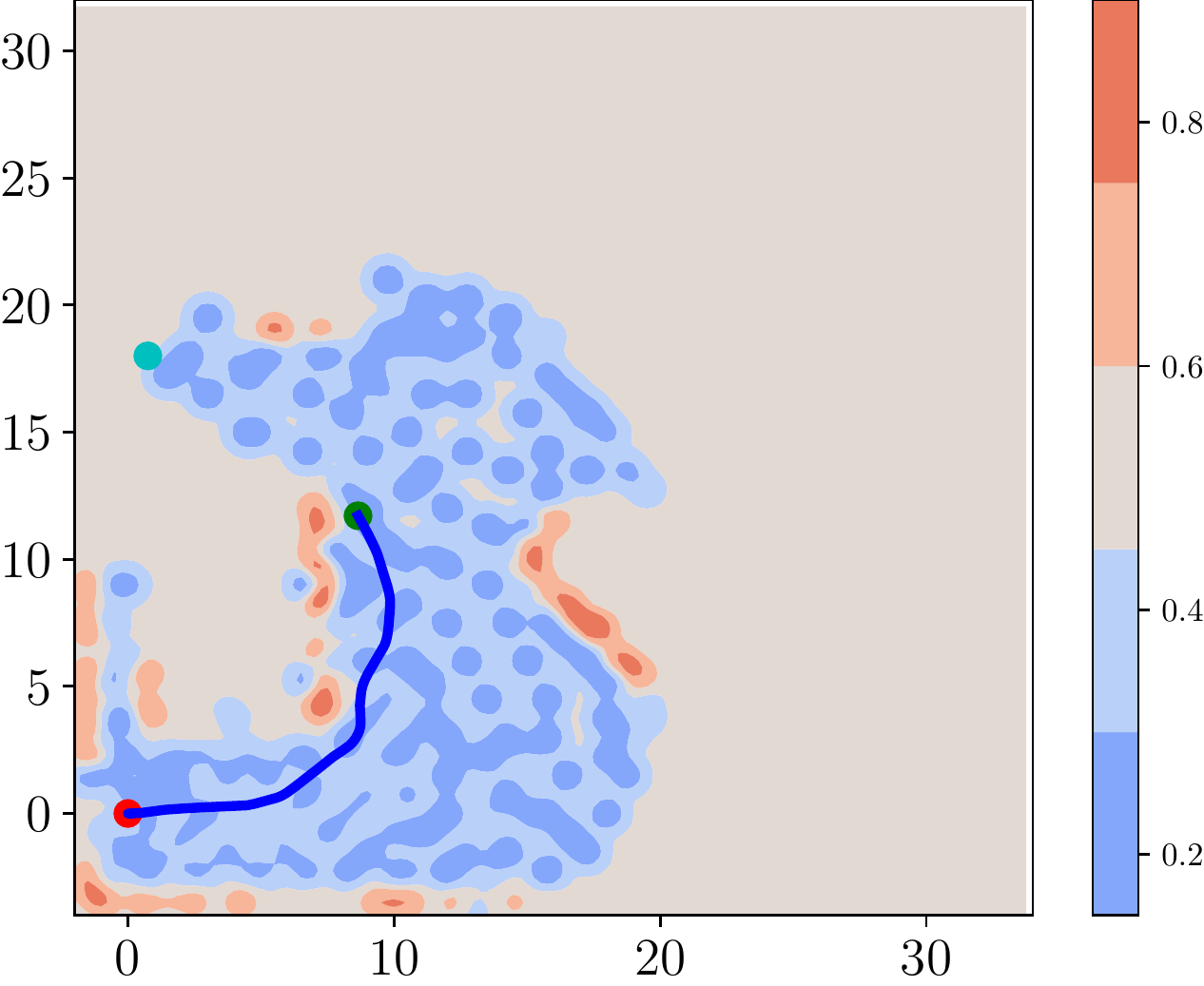}
        \caption{\NEW{$t=76s$}}
        \label{fig:active_map1}
\end{subfigure}%
\hfill
\begin{subfigure}[b]{0.25\textwidth}
        \centering
        \includegraphics[width=\textwidth]{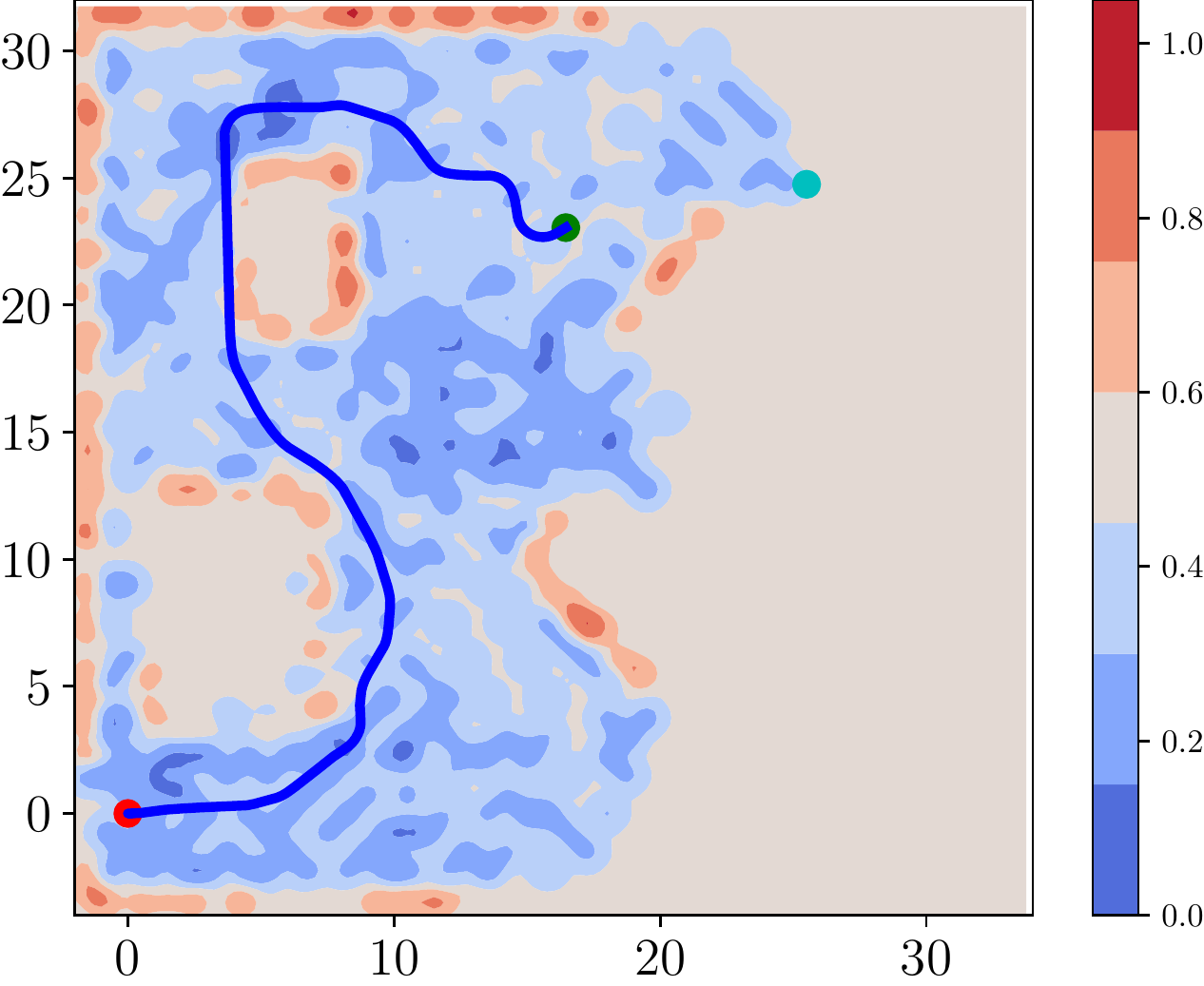}
        \caption{\NEW{$t=209s$}}
        \label{fig:active_map2}
\end{subfigure}%
\hfill
\begin{subfigure}[b]{0.25\textwidth}
        \centering
        \includegraphics[width=\textwidth]{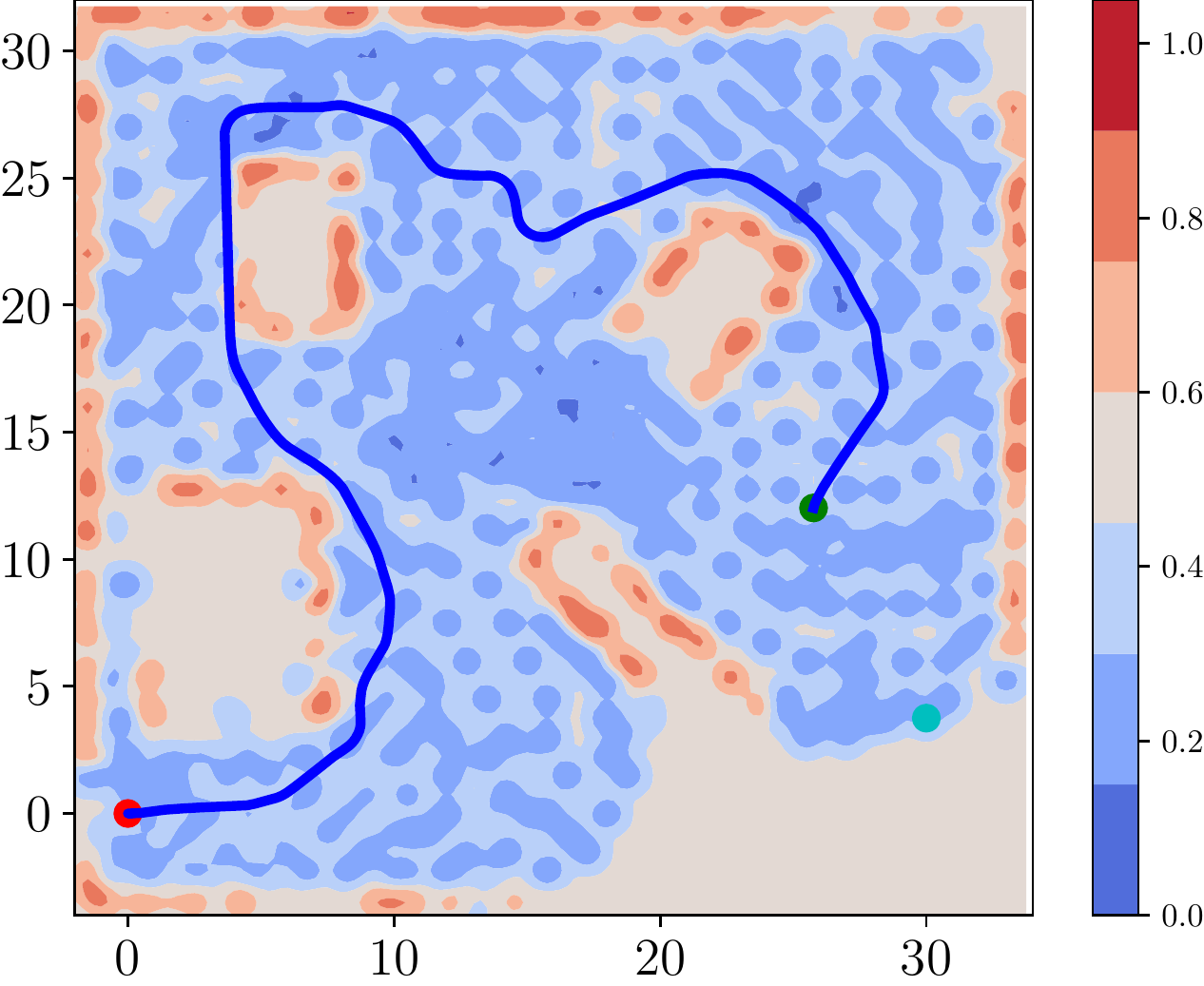}
        \caption{\NEW{$t=301s$}}
        \label{fig:active_map3}
\end{subfigure}%
\hfill
\begin{subfigure}[b]{0.25\textwidth}
        \centering
        \includegraphics[width=\textwidth]{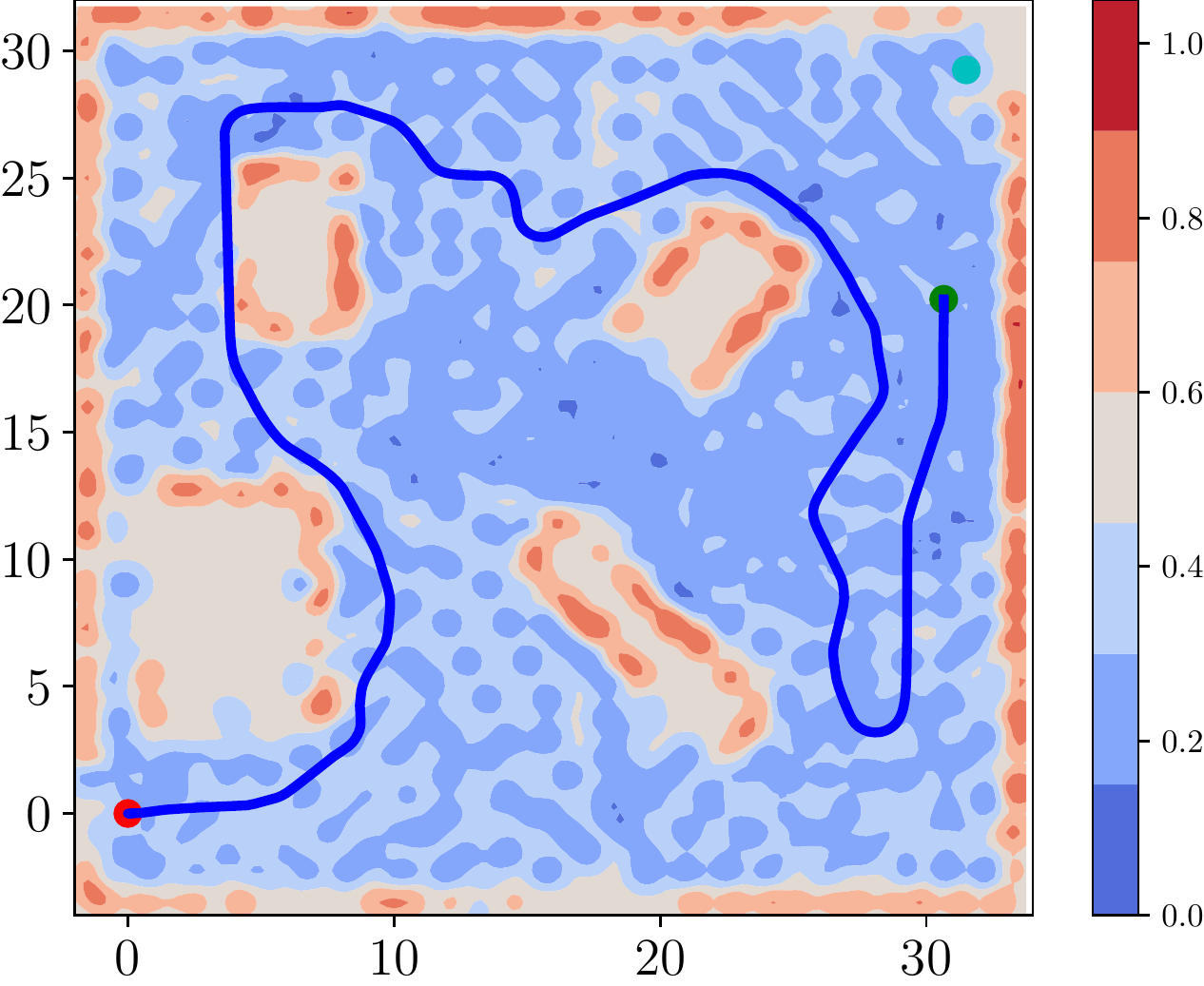}
        \caption{\NEW{$t=414s$}}
        \label{fig:active_map4}
\end{subfigure}%
\caption{\NEW{Active mapping at times $t=76s, 209s, 301s, 414s$. The red, green and cyan dots are the initial and current robot positions, and the goal actively picked to reduce the map entropy, respectively. The robot trajectory is in blue.}}
\label{fig:active_mapping}
\end{figure*}

\begin{figure}[t]
\centering
        \includegraphics[width=0.45\textwidth]{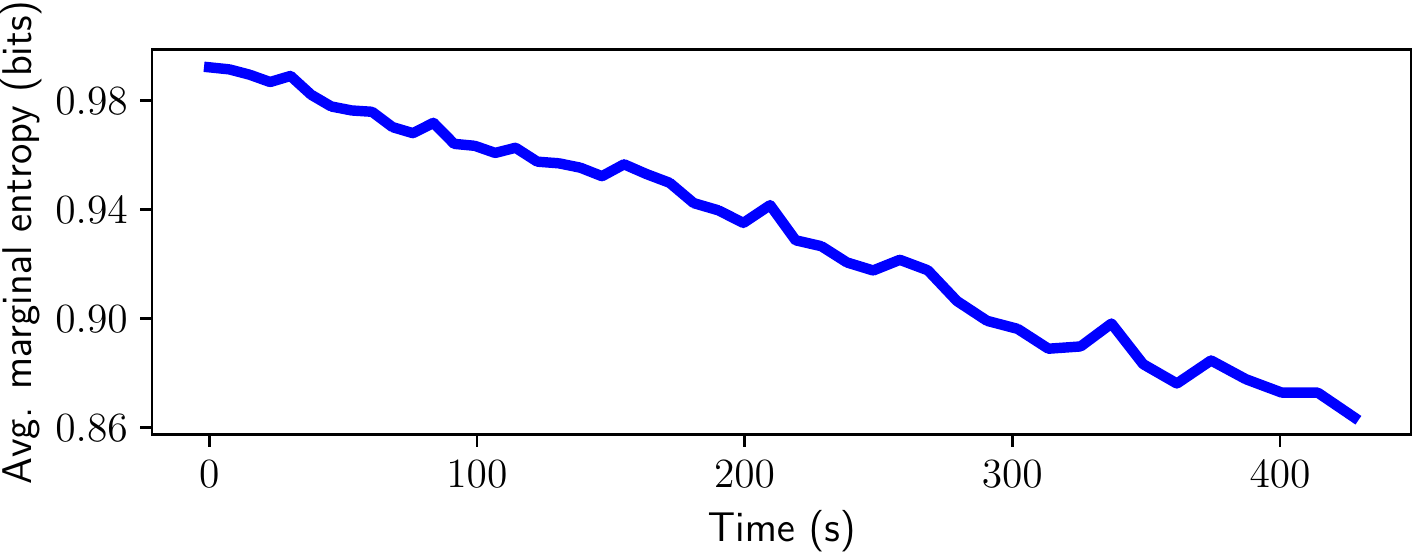}
\caption{\NEW{Average marginal entropy of a point in the map over time.}}
\label{fig:active_mapping_entropy}
\end{figure}

\NEW{
As the decision boundary affects the free/occupied area for robot navigation, we examined the decision boundary's conservativeness with respect to the bias $b$, the threshold $e$, and the measurement noise. Fig. \ref{fig:occupied_area_vs_b_e} plots the occupied area in our map, i.e., the area with occupancy probability greater than the threshold $e$, built from the Intel Lab dataset~\cite{Radish} for different values of the bias $b$ and the threshold $e$. As expected, the occupied area increases, i.e., smaller navigable area, if $e$ decreases and/or $b$ increases. Our SKM model \cite{duong2020autonomous} does not offer similar free/occupied area tuning as it does not have parameters $b$ and $e$. 

The decision boundary's conservativeness is also affected by the measurement noise since the robot should proceed carefully around the obstacle boundary if the depth measurements are noisy. To illustrate the robustness of our approach against measurement noise, we added Gaussian noise with zero mean and variance $l^2$ to the laser endpoints in the dataset and trained our model. The occupied area versus the noise's standard deviation $l$ is plotted in Fig. \ref{fig:occupied_area_vs_noise}. As the noise level $l$ increases, the occupied area increases making the robot more cautious around the obstacles in the environment.}

\subsection{Real Experiments}
\label{subsec:real_experiments}

Real experiments were carried out on an $1/10$th scale Racecar robot equipped with a Hokuyo UST-10LX Lidar and Nvidia TX2 computer. The robot body was modeled by a ball of radius $r = 0.25m$. The online training data (Sec.~\ref{subsec:online_probit_rvm}) were generated from a grid with resolution $0.25m$. We used an RBF kernel parameter $\Gamma = \sqrt{\gamma}\bfI$ with $\gamma = 3.0$ and an $R^*$-tree approximation of the score $F(\bfx)$ with $K^++K^- = 20$ nearest support vectors around the robot location $\bfp_k$ for map updating. For motion planning, second-order polynomial motion primitives were generated with time discretization of $\tau = 1$s as described in Sec.~\ref{subsec:auto_nav}. The motion cost was defined as $c(\bfs, \bfa) := (\|\bfa\|^2 + 2)\tau$ to encourage both smooth and fast motion~\cite{liu2017search}. Alg.~\ref{alg:rvm_collision_checking_curve} with Eq.~\eqref{eq:rvm_ru_star}, $\varepsilon = 0.1$, score approximation with $K^+ = K^- = 2$, \NEW{and threshold $e = -0.01$} was used for collision checking in Alg. \ref{alg:get_successors}. The trajectory generated by an $A^*$ motion planner was tracked using a closed-loop controller~\cite{arslan2016exact}. The robot navigated in an unknown hallway to two destinations consequently chosen by a human operator. Fig. \ref{fig:realexp_rvs} shows the learned  relevance vectors representing the environment. Fig. \ref{fig:realexp_probmap} shows the probabilistic map recovered from the relevance vectors together with the robot trajectory and the two chosen destinations.

The time taken by Alg.~\ref{alg:rvm_training_online} to update the relevance vectors from one lidar scan and the $A^*$ replanning time per motion primitive are shown in Fig.~\ref{fig:realexp_mapupdate} and \ref{fig:realexp_planning_time}. Map updates implemented in Python took $0.4$s on average. It took a longer time ($\sim 1s$) to update the map when the robot observed new large parts of the environment, e.g., at the beginning and toward the end of our experiment. To evaluate collision checking time, the $A^*$ replanning time was normalized by the number of motion primitives being checked to account for differences in planning to nearby and far goals. The planning time per motion primitive was $\sim 15\mu s$ on average and $\sim 30 \mu s$ at most, suggesting our collision checking algorithms' suitability for real-time applications.

\NEW{In both simulations and the real experiment, the free area contains multiple blobs of points with low occupancy probability, caused by the sparse set of relevance vectors and the fixed kernel parameters of SBKM. To improve on this, kernel parameter learning \cite{senanayake2018automorphing} can be used to to adaptively update the kernel parameters at different location based on how large the free space is. However, we leave this for future work.
}

\subsection{\NEW{Active Mapping}}
\label{subsec:active_mapping}

\NEW{Our SBKM representation enables uncertainty quantification which besides for collision checking can be used for active mapping. This ability is not offered by non-Bayesian mapping methods, such as SKM. In this section, we demonstrate active mapping of an unknown simulated environment using SBKM. Our approach estimates the map uncertainty in different regions and chooses the region with the highest uncertainty as the goal region. Specifically, we maintain a frontier, defined as a list of $L$ candidate poses $\calP_l$, $l = 1,2, \ldots, L$. For each pose $\calP_l$, we calculate the map uncertainty $H(\calS_l)$ of the field of view $\calS_l$ of the depth sensor. The map uncertainty of a region $\calS$ is measured as the average marginal entropy over the region:
\begin{equation}
\label{eq:marginal_entropy}
	H(\calS) = \frac{1}{|\calS|} \int_{\calS} h(\bfx) d\bfx,
\end{equation}
where $h(\bfx)$ is the marginal entropy of a point $\bfx$ in the region $\calS$, calculated using the predictive distribution in Def~\ref{def:threshold_classification} as
\begin{equation}
\begin{aligned}
h(\bfx) &=& -P(y = 1|\bfx, \bfxi) \log_2 P(y = 1|\bfx, \bfxi)& \\
&& - P(y = 0|\bfx, \bfxi)\log_2 P(y = 0|\bfx_i, \bfxi)&,
\end{aligned}
\end{equation}
$y \in \{-1,1\}$ is the predictive label of the point $\bfx$, and $|\calS|$ denotes the area of the region $\calS$. We choose the region $S_{l^*}$ with the largest average marginal entropy to explore:
\begin{equation}
l^* = \underset{l=1,2,\ldots, L}{\mathrm{argmax}} H(\calS_l).
\end{equation}
 
}


\NEW{In our active mapping experiment, the candidate poses $\calP_1, \calP_2, \ldots, \calP_L$ in the frontier were sampled from the laser endpoints up to the current time $t$ with $4$ different yaw angles: $0, \frac{\pi}{4}, \frac{\pi}{2}, \frac{3\pi}{4}$. Since the laser scans could not see through obstacles, we gained little information of the environment by placing the robot near the occupied regions. Therefore, only the endpoints with maximum lidar range, i.e. the laser ray did not hit an obstacle, and at least $2m$ away from the positive relevance vectors were considered. A hypothetical lidar field of view $\calS_l$ (similar to Fig. \ref{fig:laser_uninflated_ws} without the obstacles) simulating a Hokuyo UST-10LX lidar, was placed at each candidate pose~$\calP_l$. We sampled $N = 100$ points $\bfx_i, i = 1, \ldots, N,$ from $\calS_l$ and calculated their marginal entropy $H(\bfx_i)$. The average marginal entropy of the region $\calS_l$ (Eq. \eqref{eq:marginal_entropy}) was approximated as 
\begin{equation}
\label{eq:approx_marginal_entropy}
H(\calS_l) \approx \frac{1}{N}\sum_{i = 1}^{N} h(\bfx_i).
\end{equation}
The robot picked the goal region $\calS_{l^*}$ with the highest map uncertainty from the set $\{\calS_1, \calS_2, \ldots, \calS_L\}$ every $0.5s$ and planned a trajectory to reach the goal using our collision checking methods and the same $A^*$ planner used in Sec. \ref{subsec:real_experiments}. Fig. \ref{fig:active_mapping} shows the SBKM map, the robot trajectory and the candidate pose associated with the chosen goal region at different times as the robot successfully explored and actively built the map of the environment. The average marginal entropy of the map (Fig.~\ref{fig:active_mapping_entropy}), estimated using Eq.~\eqref{eq:approx_marginal_entropy} with $N = 20736$ points sampled on a regular grid of resolution $0.25m$, shows our active mapping approach reduced the map uncertainty over time. 
}

\section{Conclusion}
\label{sec:conclusion}

This paper proposes a sparse Bayesian kernel-based mapping method for efficient online generation of large occupancy maps, supporting autonomous robot navigation in unknown environments. Our map representation, as a sparse set of relevance vectors learned from streaming range observations of the environment, is efficient to store. It supports efficient and complete collision checking for general curves modeling potential robot trajectories. Our experiments demonstrate the potential of this model at generating compressed, yet accurate, probabilistic environment models. Our results offer a promising venue for quantifying safety and uncertainty and enabling real-time long-term autonomous navigation in unpredictable environments. 
Future work will \NEW{further} explore active exploration and map uncertainty reduction, \NEW{kernel parameter learning} as well as simultaneous localization and mapping using the proposed map representations.


\appendices

\section{Proof of Proposition \ref{prop:rvm_prob_bound}}
\label{appendix:rvm_prob_bound}
\begin{proof}
A point $\bfx$ is considered free if:
\begin{equation}
\bfPhi_{\bfx}^\top\bfmu + b < e \sqrt{1 + \bfPhi_{\bfx}^\top \bfSigma \bfPhi_{\bfx}},
\end{equation}
where $\bfPhi_{\bfx}$ is the feature vector $\bfPhi_{\bfx} = [k_1(\bfx), k_2(\bfx), \ldots, k_M(\bfx)]^\top$. We use the following lower bound and upper bound on $\bfPhi_{\bfx}^\top \bfSigma \bfPhi_{\bfx}$: $0 \leq \bfPhi_{\bfx}^\top \bfSigma \bfPhi_{\bfx} \leq \lambda_{\max} \sum_{m = 1} ^{M}(k_m(\bfx))^2$ where $\lambda_{\max} \geq 0$ is the largest eigenvalue of the covariance matrix $\bfSigma$. Since $k_m(\bfx) > 0$ for all $m$, we have:
\begin{align}
1\leq \sqrt{1 + \bfPhi_{\bfx}^\top \bfSigma \bfPhi_{\bfx}} \leq  1 +\sqrt{\lambda_{\max}} \sum_{m = 1} ^{M}(k_m(\bfx)).
\end{align}
Therefore, the point $\bfx$ is still free if
\begin{equation}
\bfPhi_{\bfx}^\top\bfmu + b  \leq e (1 +\mathbbm{1}_{\{\textcolor{blue}{e\leq 0}\}}\sqrt{\lambda_{\max}} \sum_{m = 1} ^{M}(k_m(\bfx))),
\end{equation}
or $\sum_{m = 1} ^{M}(\bfmu_{m} - e\mathbbm{1}_{\{\textcolor{blue}{e\leq 0}\}}\sqrt{\lambda_{\max}})k_m(\bfx) + b - e \leq 0$.\qedhere
\end{proof}

\section{Proof of Proposition \ref{prop:rvm_prob_bound_amgm}}
\label{appendix:rvm_prob_bound_amgm}
\begin{proof}
A point $\bfx$ is free if Eq. \eqref{eq:rvm_prob_bound_rewritten} holds. Let $\bfx^+_*$ be the closest positive relevance vector to $\bfx$ and $\bfx_j^-$ be any negative relevance vector. We have:
\begin{eqnarray}
\sum_{i = 1} ^{M^+}\nu_i^+k(\bfx, \bfx_i^+) - \sum_{j = 1} ^{M^-}\nu_j^-k(\bfx, \bfx_j^-) + b - e \leq \nonumber \\
\leq (\sum_{i = 1} ^{M^+}\nu_i^+)k(\bfx, \bfx^+_*) - \nu_j^-k(\bfx, \bfx_j^-) + b - e \nonumber
\end{eqnarray}
Under Assumptions \ref{assumption:rbf} and \ref{assumption:threshold_b}, both terms $\nu_j^-k_j(\bfx)$ and $e-b$ are non-negative. By the arithmetic mean- geometric mean inequality, we have:
\small
\begin{eqnarray}
\nu_j^-k(\bfx, \bfx_j^-)+ e-b &=& n_2 \frac{\nu_j^-k(\bfx, \bfx_j^-)}{n_2} + n_1 \frac{e-b}{n_1} \nonumber \\
&\geq & \scaleMathLine[0.6]{(n_1+n_2)\left(\frac{\nu_j^-k(\bfx, \bfx_j^-)}{n_2}\right)^{\frac{n_2}{n_1 + n_2}}\left(\frac{e-b}{n_1}\right)^{\frac{n_1}{n_1 + n_2}}} \nonumber \\
&=& \rho(e-b,\nu_j^- k(\bfx,\bfx_j^-)). \nonumber
\end{eqnarray}
\normalsize
Therefore, a point $\bfx$ is free if
\begin{equation}
\label{eq:g3_appdx}
(\sum_{i = 1} ^{M^+}\nu_i^+)k(\bfx, \bfx^+_*) - \rho(e-b,\nu_j^- k(\bfx,\bfx_j^-)) \leq 0. \qedhere
\end{equation}
\end{proof}
%

\normalsize
\iftrue
\section{Proof of Proposition \ref{prop:rvm_prob_bound_lineseg}}
\label{appendix:rvm_prob_bound_lineseg}
\begin{proof}
By plugging $k(\bfx,\bfx_*^+) = \eta e^{-\Vert \Gamma(\bfx - \bfx^+_* )\Vert^2}$, and $k(\bfx,\bfx_j^-) = \eta e^{-\gamma\Vert \Gamma(\bfx - \bfx^-_j) \Vert^2}$ into Eq. \eqref{eq:g3_appdx}, a point $\bfx$ is free if
\begin{equation}
\label{eq:g3_appdx_rbf}
e^{-\Vert {\bf\Gamma}(\bfx - \bfx^+_*) \Vert^2 + \frac{n_2}{n_1+n_2}\Vert {\bf\Gamma}(\bfx - \bfx^-_j) \Vert^2} \leq \frac{\rho(e-b,\nu_j^-)}{\eta^{\frac{n_1}{n_1+n_2}}\sum_{i = 1} ^{M^+}\nu_i^+}
\end{equation}

Substituting the test point $\bfx$ by $\bfp(t) = \bfp_0 + t\bfv$ in Eq. \eqref{eq:g3_appdx_rbf}, the point $\bfp(t)$ is free if:
\small
\begin{eqnarray}
V(t, \bfx^+_*,\bfx^-_j)  &=& -(n_1+n_2)\Vert {\bf\Gamma}(\bfp_0 + t\bfv - \bfx^+_*) \Vert^2  \nonumber \\
&& + n_2\Vert {\bf\Gamma}(\bfp_0 + t\bfv - \bfx^-_j) \Vert^2 - (n_1+n_2)\beta \leq 0, \nonumber
\end{eqnarray}
\normalsize
where $\beta = \log\frac{\rho(e-b,\nu_j^-)}{\eta^{\frac{n_1}{n_1+n_2}}\sum_{i = 1} ^{M^+}\nu_i^+}$. By expanding the quadratic norms in $V(t, \bfx^+_*,\bfx^-_j)$, the point $\bfp(t)$ is free if:
\begin{eqnarray}
V(t, \bfx^+_*,\bfx^-_j) &=& at^2 + b(\bfx_*^+, \bfx_j^-)t+ c(\bfx_*^+, \bfx_j^-)\leq 0 \label{eq:quad_poly_appx} \\
\text{where } a &=& -n_1\Vert{\bf\Gamma}\bfv\Vert^2, \nonumber \\
b(\bfx_*^+, \bfx_j^-) &=& -2\bfv^\top {\bf\Gamma}^\top {\bf\Gamma}(n_1\bfp_0 - (n_1 + n_2)\bfx^+_* + n_2\bfx^-_j), \nonumber \\
c(\bfx_*^+, \bfx_j^-) &=& - (n_1 + n_2)\Vert{\bf\Gamma}(\bfp_0- \bfx^+_*)\Vert^2  \nonumber \\
&&+ n_2\Vert{\bf\Gamma}(\bfp_0- \bfx^-_j)\Vert^2 - (n_1 + n_2)\beta .\nonumber
\end{eqnarray}
Note that $V(t, \bfx^+_*,\bfx^-_j)$ is a quadratic polynomial in $t$ and the point $\bfp(t)$ is free if $V(t, \bfx^+_*,\bfx^-_j) \leq 0$.
\begin{enumerate}
	\item If it has less than $2$ roots, Eq. \eqref{eq:quad_poly_appx} is satisfied for all $t$.
	\item If it has $2$ roots $t_1 < t_2$, then $V(t, \bfx^+_*,\bfx^-_j) \leq 0$ for $t \geq t_2$ or $t \leq t_1$. There are three cases: 
\begin{enumerate}
	\item $t_1 < t_2 \leq 0$: $V(t, \bfx^+_*,\bfx^-_j) \leq 0$ for all $t \geq 0$ or the entire ray $s(t)$ is free;
	\item $0 \leq t_1 < t_2$: $V(t, \bfx^+_*,\bfx^-_j) \leq 0$ for $t \in [0, t_1]$ or the ray $s(t)$ is free for $t \in [0, t_1]$.
	\item $t_1 \leq 0 \leq t_2$: $V(0, \bfx^+_*,\bfx^-_j) \geq 0$ or the ray $s(t)$ is colliding.
\end{enumerate}
\end{enumerate}
\begin{eqnarray}
\text{Let }&&\tau(\bfp_0, \bfx^+_*, \bfx_j^-) \nonumber \\
&=&\begin{cases} 
      +\infty, & \text{if } V(t, \bfx^+_*,\bfx^-_j)  \text{ has less than 2 roots} \\
      +\infty, & \text{if } V(t, \bfx^+_*,\bfx^-_j) \text{ has 2 roots $t_1 < t_2 \leq 0$} \\
      t_1 & \text{if } V(t, \bfx^+_*,\bfx^-_j)  \text{ has 2 roots $0 \leq t_1 < t_2$} \\
      0 & \text{if } V(t, \bfx^+_*,\bfx^-_j)  \text{ has 2 roots $t_1 \leq 0 \leq t_2$} 
\end{cases}. \nonumber
\end{eqnarray}
Note that $\bfx^+_*$ varies with $t$ but belongs to a finite set, we can calculate $\tau(\bfp_0, \bfx^+_i, \bfx_j^-)$ for all positive relevance vectors $\bfx^+_i$ and take the minimum value. Therefore, $\bfp(t)$ is free as long as:
\begin{equation}
\label{eq:rvm_tu_appx}
t \leq t_u = \min_{i = 1, \ldots, M^+} {\tau(\bfp_0, \bfx^+, \bfx_j^-)}
\end{equation}
Note that Eq. \eqref{eq:rvm_tu_appx} holds for any negative relevance vector $\bfx_j^-$. Therefore, the point $\bfp(t)$ is free as long as $t~\leq~t_u^*~=~\max_{j = 1, \ldots, M^-}\min_{i =1, \ldots, M^+} {\tau(\bfp_0, \bfx^+, \bfx_j^-)}.$ \qedhere
\end{proof}
\fi

\section{Proof of Proposition \ref{prop:rvm_prob_bound_ball}}
\label{appendix:rvm_prob_bound_ball}
\begin{proof}
\NEW{Consider an arbitrary ray $\bfp'(t) = \bfp_0 + t\bfv', 0\leq t < \infty$. If we scale the velocity $\bfv'$ by a positive constant $\lambda$, i.e. $\bfv = \lambda \bfv'$, the ray $\bfp(t) = \bfp_0 + t\bfv, 0\leq t < \infty$ represents the same ray as $\bfp'(t)$. We scale the vector $v'$ by $\lambda = \frac{1}{\Vert \bf\Gamma v' \Vert}$ so that the velocity vector $\bfv$ satisfies $\Vert \bf\Gamma v \Vert = 1$. Using the Cauchy-Schwarz inequality in Eq. \eqref{eq:quad_poly_appx} in Appendix \ref{appendix:rvm_prob_bound_lineseg}}, we have:
\begin{eqnarray}
&&-2t\bfv^\top {\bf\Gamma}^\top {\bf\Gamma} (n_1\bfp_0 - (n_1 + n_2)\bfx^+_* + n_2\bfx^-_j) \nonumber \\
&\leq& 2t\Vert {\bf\Gamma}(n_1\bfp_0 - (n_1 + n_2)\bfx^+_* + n_2\bfx^-_j)\Vert \nonumber
\end{eqnarray}
Therefore, the point $\bfp(t)$ is free if $\bar{V}(t, \bfx^+_*,\bfx^-_j) \leq 0$. \NEW{Following the same reasoning as Prop. \ref{prop:rvm_prob_bound_lineseg}, 
the point $\mathbf{p}(t)$ is free for $0 < t < r_u$ or $0 < t < r_u^*$. In other words, the interior of the ellipsoids $\mathcal{E}(\bfp_0, r_u) \subseteq \mathcal{E}(\bfp_0, r_u^*)$ is free.} \qedhere
\end{proof}

\ifCLASSOPTIONcaptionsoff
  \newpage
\fi



%
\bibliographystyle{cls/IEEEtran}
\bibliography{bib/thai_ref.bib} 

\begin{thebibliography}{10}
\providecommand{\url}[1]{#1}
\csname url@samestyle\endcsname
\providecommand{\newblock}{\relax}
\providecommand{\bibinfo}[2]{#2}
\providecommand{\BIBentrySTDinterwordspacing}{\spaceskip=0pt\relax}
\providecommand{\BIBentryALTinterwordstretchfactor}{4}
\providecommand{\BIBentryALTinterwordspacing}{\spaceskip=\fontdimen2\font plus
\BIBentryALTinterwordstretchfactor\fontdimen3\font minus
  \fontdimen4\font\relax}
\providecommand{\BIBforeignlanguage}[2]{{%
\expandafter\ifx\csname l@#1\endcsname\relax
\typeout{** WARNING: IEEEtran.bst: No hyphenation pattern has been}%
\typeout{** loaded for the language `#1'. Using the pattern for}%
\typeout{** the default language instead.}%
\else
\language=\csname l@#1\endcsname
\fi
#2}}
\providecommand{\BIBdecl}{\relax}
\BIBdecl

\bibitem{human_friendly_nav_guzzi_icra13}
J.~Guzzi, A.~Giusti, L.~M. Gambardella, G.~Theraulaz, and G.~A.~D. Caro,
  ``{Human-friendly robot navigation in dynamic environments},'' in \emph{IEEE
  Int. Conf. on Robotics and Automation (ICRA)}, 2013, pp. 423--430.

\bibitem{safe_auto_nav_pavone_rss18}
L.~Janson, T.~Hu, and M.~Pavone, ``{Safe motion planning in unknown
  environments: optimality benchmarks and tractable policies},'' in
  \emph{Robotics: Science and Systems (RSS)}, Pittsburgh, Pennsylvania, June
  2018.

\bibitem{rgbd-slam}
P.~Henry, M.~Krainin, E.~Herbst, X.~Ren, and D.~Fox, ``{RGB-D mapping: using
  Kinect-style depth cameras for dense 3D modeling of indoor environments},''
  \emph{The International Journal of Robotics Research (IJRR)}, vol.~31, no.~5,
  pp. 647--663, 2012.

\bibitem{chisel}
M.~Klingensmith, I.~Dryanovski, S.~Srinivasa, and J.~Xiao, ``Chisel: real time
  large scale 3d reconstruction onboard a mobile device,'' in \emph{Robotics:
  Science and Systems (RSS)}, 2015.

\bibitem{elastic_fusion}
T.~Whelan, R.~Salas-Moreno, B.~Glocker, A.~Davison, and S.~Leutenegger,
  ``{ElasticFusion: real-time dense SLAM and light source estimation},''
  \emph{The International Journal of Robotics Research (IJRR)}, vol.~35,
  no.~14, pp. 1697--1716, 2016.

\bibitem{DenseSurfelMapping}
K.~{Wang}, F.~{Gao}, and S.~{Shen}, ``Real-time scalable dense surfel
  mapping,'' in \emph{IEEE Int. Conf. on Robotics and Automation (ICRA)}, 2019,
  pp. 6919--6925.

\bibitem{behley2018rss_surfel_slam}
J.~Behley and C.~Stachniss, ``{Efficient surfel-based SLAM using 3D laser range
  data in urban environments},'' in \emph{Robotics: Science and Systems (RSS)},
  2018.

\bibitem{Kaess_infiniteplanes}
M.~{Kaess}, ``Simultaneous localization and mapping with infinite planes,'' in
  \emph{IEEE Int. Conf. on Robotics and Automation (ICRA)}, 2015, pp.
  4605--4611.

\bibitem{Bowman_SemanticSLAM_ICRA17}
S.~Bowman, N.~Atanasov, K.~Daniilidis, and G.~Pappas, ``Probabilistic data
  association for semantic slam,'' in \emph{IEEE Int. Conf. on Robotics and
  Automation (ICRA)}, 2017.

\bibitem{quadric_slam}
L.~{Nicholson}, M.~{Milford}, and N.~{S\"underhauf}, ``{QuadricSLAM: dual
  quadrics from object detections as landmarks in object-oriented SLAM},''
  \emph{IEEE Robotics and Automation Letters}, vol.~4, no.~1, 2019.

\bibitem{cubeslam}
S.~{Yang} and S.~{Scherer}, ``{CubeSLAM: monocular 3-D object SLAM},''
  \emph{IEEE Transactions on Robotics}, vol.~35, no.~4, pp. 925--938, 2019.

\bibitem{Shan_OrcVIO_IROS20}
M.~Shan, Q.~Feng, and N.~Atanasov, ``{OrcVIO: object residual constrained
  visual-inertial odometry},'' in \emph{IEEE/RSJ Int. Conf. on Intelligent
  Robots and Systems (IROS)}, 2020.

\bibitem{teixeira2016real}
L.~Teixeira and M.~Chli, ``Real-time mesh-based scene estimation for aerial
  inspection,'' in \emph{IEEE/RSJ Int. Conf. on Intelligent Robots and Systems
  (IROS)}, 2016.

\bibitem{piazza2018real}
E.~Piazza, A.~Romanoni, and M.~Matteucci, ``Real-time cpu-based large-scale
  three-dimensional mesh reconstruction,'' \emph{IEEE Robotics and Automation
  Letters}, vol.~3, no.~3, pp. 1584--1591, 2018.

\bibitem{kimera}
A.~Rosinol, M.~Abate, Y.~Chang, and L.~Carlone, ``{Kimera: an open-source
  library for real-time metric-semantic localization and mapping},'' in
  \emph{IEEE Int. Conf. on Robotics and Automation (ICRA)}, 2020.

\bibitem{occgrid}
A.~Elfes, ``Using occupancy grids for mobile robot perception and navigation,''
  \emph{Computer}, vol.~22, no.~6, pp. 46--57, 1989.

\bibitem{InfiniTAM}
O.~K{\"{a}}hler, V.~A. Prisacariu, and D.~W. Murray, ``Real-time large-scale
  dense 3d reconstruction with loop closure,'' in \emph{European Conference on
  Computer Vision (ECCV)}, 2016, pp. 500--516.

\bibitem{VoxelMapVisualSLAM}
M.~Muglikar, Z.~Zhang, and D.~Scaramuzza, ``{Voxel map for visual SLAM},'' in
  \emph{IEEE Int. Conf. on Robotics and Automation (ICRA)}, 2020.

\bibitem{octomap}
A.~Hornung, K.~M. Wurm, M.~Bennewitz, C.~Stachniss, and W.~Burgard,
  ``{OctoMap}: an efficient probabilistic {3D} mapping framework based on
  octrees,'' \emph{Autonomous Robots}, vol.~34, no.~3, pp. 189--206, 2013.

\bibitem{octree_fusion}
M.~Zeng, F.~Zhao, J.~Zheng, and X.~Liu, ``Octree-based fusion for realtime 3d
  reconstruction,'' \emph{Graphical Models}, vol.~75, no.~3, pp. 126--136,
  2013.

\bibitem{supereight}
E.~Vespa, N.~Nikolov, M.~Grimm, L.~Nardi, P.~H.~J. Kelly, and S.~Leutenegger,
  ``Efficient octree-based volumetric slam supporting signed-distance and
  occupancy mapping,'' \emph{IEEE Robotics and Automation Letters}, vol.~3,
  no.~2, pp. 1144--1151, 2018.

\bibitem{curless1996volumetric}
B.~Curless and M.~Levoy, ``A volumetric method for building complex models from
  range images,'' in \emph{Conference on Computer Graphics and Interactive
  Techniques (SIGGRAPH)}, 1996, pp. 303--312.

\bibitem{kazhdan2006poisson}
M.~Kazhdan, M.~Bolitho, and H.~Hoppe, ``Poisson surface reconstruction,'' in
  \emph{Eurographics Symposium on Geometry Processing}, 2006.

\bibitem{kinfu}
S.~Izadi, D.~Kim, O.~Hilliges, D.~Molyneaux, R.~Newcombe, P.~Kohli, J.~Shotton,
  S.~Hodges, D.~Freeman, A.~Davison, and A.~Fitzgibbon, ``{KinectFusion:
  real-time 3D reconstruction and interaction using a moving depth camera},''
  in \emph{ACM Sym. on User Interface Software and Technology (UIST)}, 2011,
  pp. 559--568.

\bibitem{voxblox}
H.~Oleynikova, Z.~Taylor, M.~Fehr, R.~Siegwart, and J.~Nieto, ``{Voxblox:
  incremental 3D Euclidean signed distance fields for on-board MAV planning},''
  in \emph{IEEE/RSJ Int. Conf. on Intelligent Robots and Systems (IROS)}, 2017.

\bibitem{fiesta}
L.~Han, F.~Gao, B.~Zhou, and S.~Shen, ``Fiesta: a fast incremental euclidean
  distance fields for online quadrotor motion planning,'' in \emph{IEEE/RSJ
  Int. Conf. on Intelligent Robots and Systems (IROS)}, 2019.

\bibitem{niessner2013real}
M.~Nie{\ss}ner, M.~Zollh{\"o}fer, S.~Izadi, and M.~Stamminger, ``Real-time 3d
  reconstruction at scale using voxel hashing,'' \emph{ACM Transactions on
  Graphics}, vol.~32, no.~6, 2013.

\bibitem{duong2020autonomous}
T.~Duong, N.~Das, M.~Yip, and N.~Atanasov, ``Autonomous navigation in unknown
  environments using sparse kernel-based occupancy mapping,'' in \emph{IEEE
  Int. Conf. on Robotics and Automation (ICRA)}, 2020.

\bibitem{thrun2005probabilistic}
S.~Thrun, W.~Burgard, and D.~Fox, \emph{Probabilistic robotics}.\hskip 1em plus
  0.5em minus 0.4em\relax MIT press, 2005.

\bibitem{gmapping}
G.~{Grisetti}, C.~{Stachniss}, and W.~{Burgard}, ``Improved techniques for grid
  mapping with rao-blackwellized particle filters,'' \emph{IEEE Transactions on
  Robotics}, vol.~23, no.~1, pp. 34--46, 2007.

\bibitem{OCallaghan2012gaussian}
S.~T. O'Callaghan and F.~T. Ramos, ``Gaussian process occupancy maps,''
  \emph{The International Journal of Robotics Research (IJRR)}, vol.~31, no.~1,
  pp. 42--62, 2012.

\bibitem{wang2016fast}
J.~Wang and B.~Englot, ``Fast, accurate gaussian process occupancy maps via
  test-data octrees and nested bayesian fusion,'' in \emph{IEEE Int. Conf. on
  Robotics and Automation (ICRA)}, 2016, pp. 1003--1010.

\bibitem{jadidi2017warped}
M.~G. Jadidi, J.~V. Miro, and G.~Dissanayake, ``Warped gaussian processes
  occupancy mapping with uncertain inputs,'' \emph{IEEE Robotics and Automation
  Letters}, vol.~2, no.~2, pp. 680--687, 2017.

\bibitem{ramos2016hilbertmap}
F.~Ramos and L.~Ott, ``Hilbert maps: scalable continuous occupancy mapping with
  stochastic gradient descent,'' \emph{The International Journal of Robotics
  Research (IJRR)}, vol.~35, no.~14, pp. 1717--1730, 2016.

\bibitem{senanayake2017bayesian}
R.~Senanayake and F.~Ramos, ``Bayesian hilbert maps for dynamic continuous
  occupancy mapping,'' in \emph{Conference on Robot Learning (CoRL)}, 2017, pp.
  458--471.

\bibitem{tipping2000relevance}
M.~E. Tipping, ``The relevance vector machine,'' in \emph{Advances in neural
  information processing systems}, 2000, pp. 652--658.

\bibitem{tipping2001sparse}
M.~E. Tipping, ``Sparse bayesian learning and the relevance vector machine,''
  \emph{Journal of Machine Learning Research}, vol.~1, no. Jun, pp. 211--244,
  2001.

\bibitem{tipping2003fast}
M.~E. Tipping, A.~C. Faul \emph{et~al.}, ``Fast marginal likelihood
  maximisation for sparse bayesian models.'' in \emph{Int. Conf. on Artificial
  Intelligence and Statistics (AISTATS)}, 2003.

\bibitem{lopez2017aggressive}
B.~T. Lopez and J.~P. How, ``Aggressive 3-d collision avoidance for high-speed
  navigation,'' in \emph{IEEE Int. Conf. on Robotics and Automation (ICRA)},
  2017, pp. 5759--5765.

\bibitem{chen2017improving}
J.~Chen and S.~Shen, ``Improving octree-based occupancy maps using environment
  sparsity with application to aerial robot navigation,'' in \emph{IEEE Int.
  Conf. on Robotics and Automation (ICRA)}, 2017, pp. 3656--3663.

\bibitem{fridovich2017atommap}
D.~Fridovich-Keil, E.~Nelson, and A.~Zakhor, ``Atommap: A probabilistic
  amorphous 3d map representation for robotics and surface reconstruction,'' in
  \emph{IEEE Int. Conf. on Robotics and Automation (ICRA)}, 2017, pp.
  3110--3117.

\bibitem{bialkowski2016efficient}
J.~Bialkowski, M.~Otte, S.~Karaman, and E.~Frazzoli, ``Efficient collision
  checking in sampling-based motion planning via safety certificates,''
  \emph{The International Journal of Robotics Research (IJRR)}, vol.~35, no.~7,
  pp. 767--796, 2016.

\bibitem{luo2014empirical}
J.~Luo and K.~Hauser, ``An empirical study of optimal motion planning,'' in
  \emph{IEEE/RSJ Int. Conf. on Intelligent Robots and Systems (IROS)}.\hskip
  1em plus 0.5em minus 0.4em\relax IEEE, 2014, pp. 1761--1768.

\bibitem{hauser2015lazy}
K.~Hauser, ``Lazy collision checking in asymptotically-optimal motion
  planning,'' in \emph{IEEE Int. Conf. on Robotics and Automation (ICRA)},
  2015.

\bibitem{Tsardoulias2016planninggridmap}
E.~G. Tsardoulias, A.~Iliakopoulou, A.~Kargakos, and L.~Petrou, ``A review of
  global path planning methods for occupancy grid maps regardless of obstacle
  density,'' \emph{Journal of Intelligent {\&} Robotic Systems}, vol.~84,
  no.~1, pp. 829--858, 2016.

\bibitem{das2017fastron}
N.~Das, N.~Gupta, and M.~Yip, ``{Fastron: an online learning-based model and
  active learning strategy for proxy collision detection},'' in
  \emph{Conference on Robot Learning (CoRL)}, 2017, pp. 496--504.

\bibitem{pan2015efficient}
J.~Pan and D.~Manocha, ``Efficient configuration space construction and
  optimization for motion planning,'' \emph{Engineering}, vol.~1, no.~1, pp.
  046--057, 2015.

\bibitem{huh2016learningGMM}
J.~Huh and D.~D. {Lee}, ``{Learning high-dimensional mixture models for fast
  collision detection in rapidly-exploring random trees},'' in \emph{IEEE Int.
  Conf. on Robotics and Automation (ICRA)}, 2016, pp. 63--69.

\bibitem{das2020learning}
N.~Das and M.~Yip, ``Learning-based proxy collision detection for robot motion
  planning applications,'' \emph{IEEE Transactions on Robotics}, 2020.

\bibitem{pan2012fcl}
J.~Pan, S.~Chitta, and D.~Manocha, ``Fcl: A general purpose library for
  collision and proximity queries,'' in \emph{IEEE Int. Conf. on Robotics and
  Automation (ICRA)}.\hskip 1em plus 0.5em minus 0.4em\relax IEEE, 2012, pp.
  3859--3866.

\bibitem{liu2017search}
S.~Liu, N.~Atanasov, K.~Mohta, and V.~Kumar, ``Search-based motion planning for
  quadrotors using linear quadratic minimum time control,'' in \emph{IEEE/RSJ
  Int. Conf. on Intelligent Robots and Systems (IROS)}, 2017, pp. 2872--2879.

\bibitem{de2000stabilization}
A.~De~Luca, G.~Oriolo, and M.~Vendittelli, ``Stabilization of the unicycle via
  dynamic feedback linearization,'' \emph{IFAC Proceedings Volumes}, vol.~33,
  no.~27, pp. 687--692, 2000.

\bibitem{franch2009control}
J.~Franch and J.~Rodriguez-Fortun, ``Control and trajectory generation of an
  ackerman vehicle by dynamic linearization,'' in \emph{European Control
  Conference (ECC)}, 2009, pp. 4937--4942.

\bibitem{mackay1992evidence}
D.~J. MacKay, ``The evidence framework applied to classification networks,''
  \emph{Neural Computation}, vol.~4, no.~5, pp. 720--736, 1992.

\bibitem{nabney2004efficient}
I.~T. Nabney, ``Efficient training of rbf networks for classification,''
  \emph{International Journal of Neural Systems}, vol.~14, no.~3, pp. 1--8,
  2004.

\bibitem{neal2012bayesian}
R.~M. Neal, \emph{Bayesian learning for neural networks}.\hskip 1em plus 0.5em
  minus 0.4em\relax Springer Science \& Business Media, 2012, vol. 118.

\bibitem{freespaceassumption}
S.~{Koenig} and Y.~{Smirnov}, ``Sensor-based planning with the freespace
  assumption,'' in \emph{IEEE Int. Conf. on Robotics and Automation (ICRA)},
  1997.

\bibitem{lehoucq1998arpack}
R.~B. Lehoucq, D.~C. Sorensen, and C.~Yang, \emph{ARPACK users' guide: solution
  of large-scale eigenvalue problems with implicitly restarted Arnoldi
  methods}.\hskip 1em plus 0.5em minus 0.4em\relax SIAM, 1998.

\bibitem{Russell_AI_Modern_Approach}
S.~Russell and P.~Norvig, \emph{Artificial intelligence: a modern
  approach}.\hskip 1em plus 0.5em minus 0.4em\relax Prentice Hall Press, 2009.

\bibitem{karaman2010incremental}
S.~Karaman and E.~Frazzoli, ``Incremental sampling-based algorithms for optimal
  motion planning,'' \emph{Robotics: Science and Systems (RSS)}, vol. 104,
  no.~2, 2010.

\bibitem{Radish}
\BIBentryALTinterwordspacing
A.~Howard and N.~Roy, ``The robotics data set repository (radish),'' 2003.
  [Online]. Available: \url{http://radish.sourceforge.net/}
\BIBentrySTDinterwordspacing

\bibitem{arslan2016exact}
O.~Arslan and D.~E. Koditschek, ``Exact robot navigation using power
  diagrams,'' in \emph{IEEE Int. Conf. on Robotics and Automation (ICRA)},
  2016.

\bibitem{senanayake2018automorphing}
R.~Senanayake, A.~Tompkins, and F.~Ramos, ``Automorphing kernels for
  nonstationarity in mapping unstructured environments.'' in \emph{CoRL}, 2018,
  pp. 443--455.

\end{thebibliography}

%


\begin{IEEEbiography}[{\includegraphics[width=1in,height=1.25in,clip,keepaspectratio]{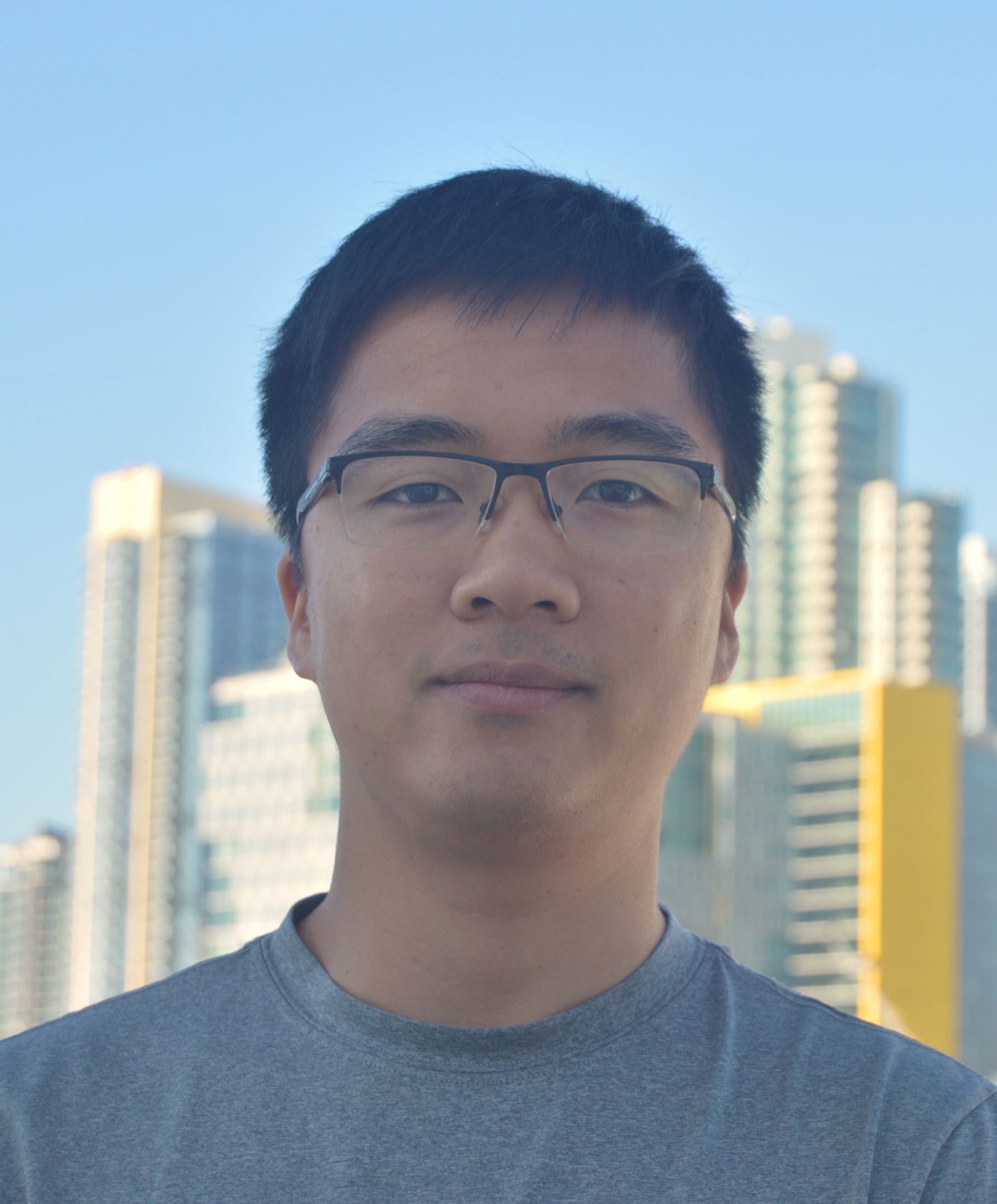}}]{Thai Duong}
is a PhD student in Electrical and Computer Engineering at the University of California, San Diego. He received a B.S. degree in Electronics and Telecommunications from Hanoi University of Science and Technology, Hanoi, Vietnam in 2011 and an M.S. degree in Electrical and Computer Engineering from Oregon State University, Corvallis, OR,  in 2013. His research interests include machine learning with applications to robotics, mapping and active exploration using mobile robots, robot dynamics learning, and decision making under uncertainty.
\end{IEEEbiography}
\begin{IEEEbiography}[{\includegraphics[width=1in,height=1.25in,clip,keepaspectratio]{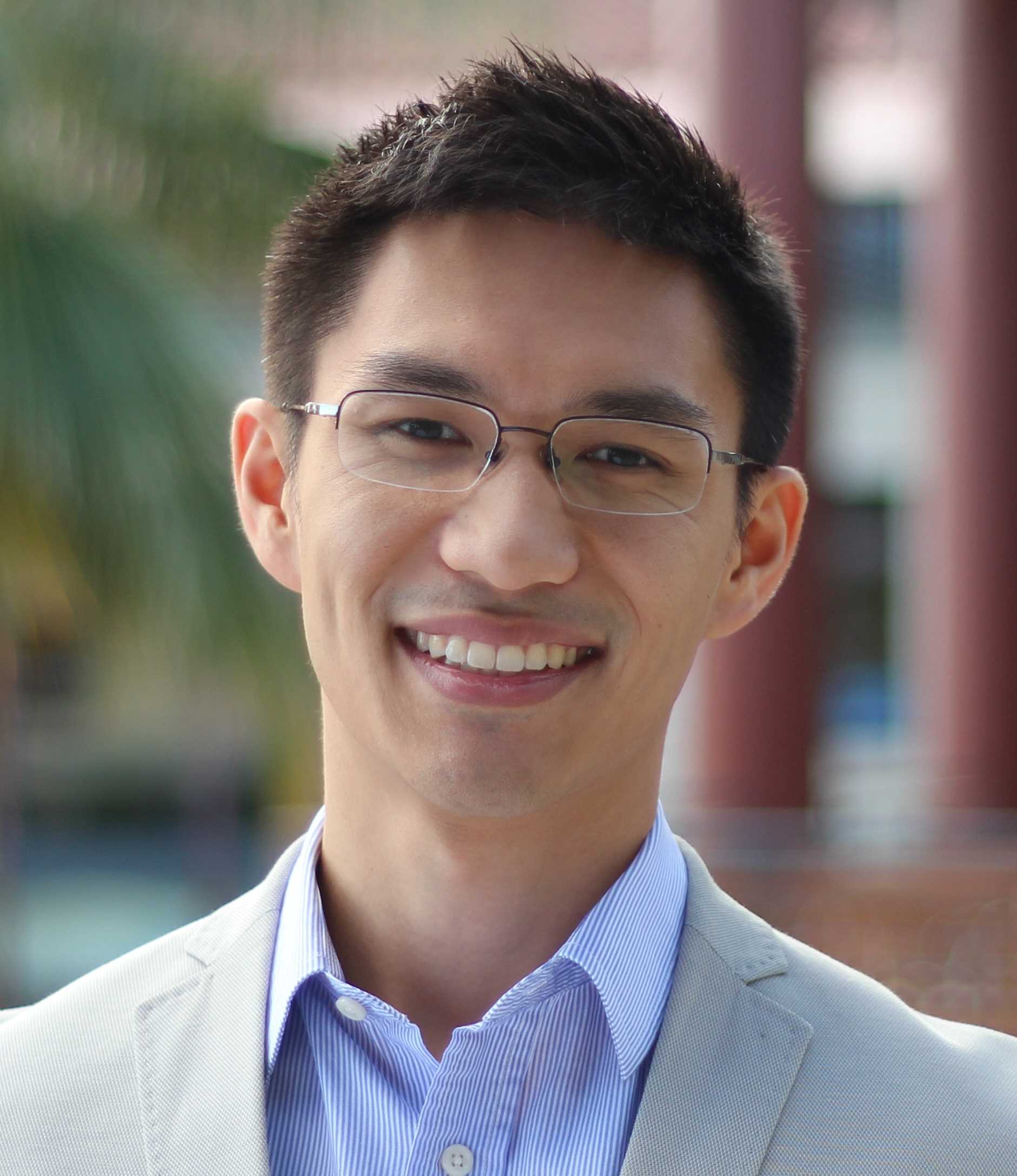}}]{Michael Yip}
is an Assistant Professor of Electrical and Computer Engineering at UC San Diego, IEEE RAS Distinguished Lecturer, Hellman Fellow, and Director of the Advanced Robotics and Controls Laboratory (ARCLab). His group currently focuses on solving problems in data-efficient and computationally efficient robot control and motion planning through the use of various forms of learning representations, including deep learning and reinforcement learning strategies. His lab applies these ideas to surgical robotics and the automation of surgical procedures. Previously, Dr. Yip's research has investigated different facets of model-free control, planning, haptics, soft robotics, and computer vision strategies, all towards achieving automated surgery. Dr. Yip's work has been recognized through several best paper awards at ICRA, including the inaugural best paper award for IEEE's Robotics and Automation Letters. Dr. Yip has previously been a Research Associate with Disney Research Los Angeles in 2014, a Visiting Professor at Stanford University in 2019, and a Visiting Professor with Amazon Robotics' Machine Learning and Computer Vision group in Seattle, WA in 2018. He received a B.Sc. in Mechatronics Engineering from the University of Waterloo, an M.S. in Electrical Engineering from the University of British Columbia, and a Ph.D. in Bioengineering from Stanford University.
\end{IEEEbiography}
\begin{IEEEbiography}[{\includegraphics[width=1in,height=1.25in,clip,keepaspectratio]{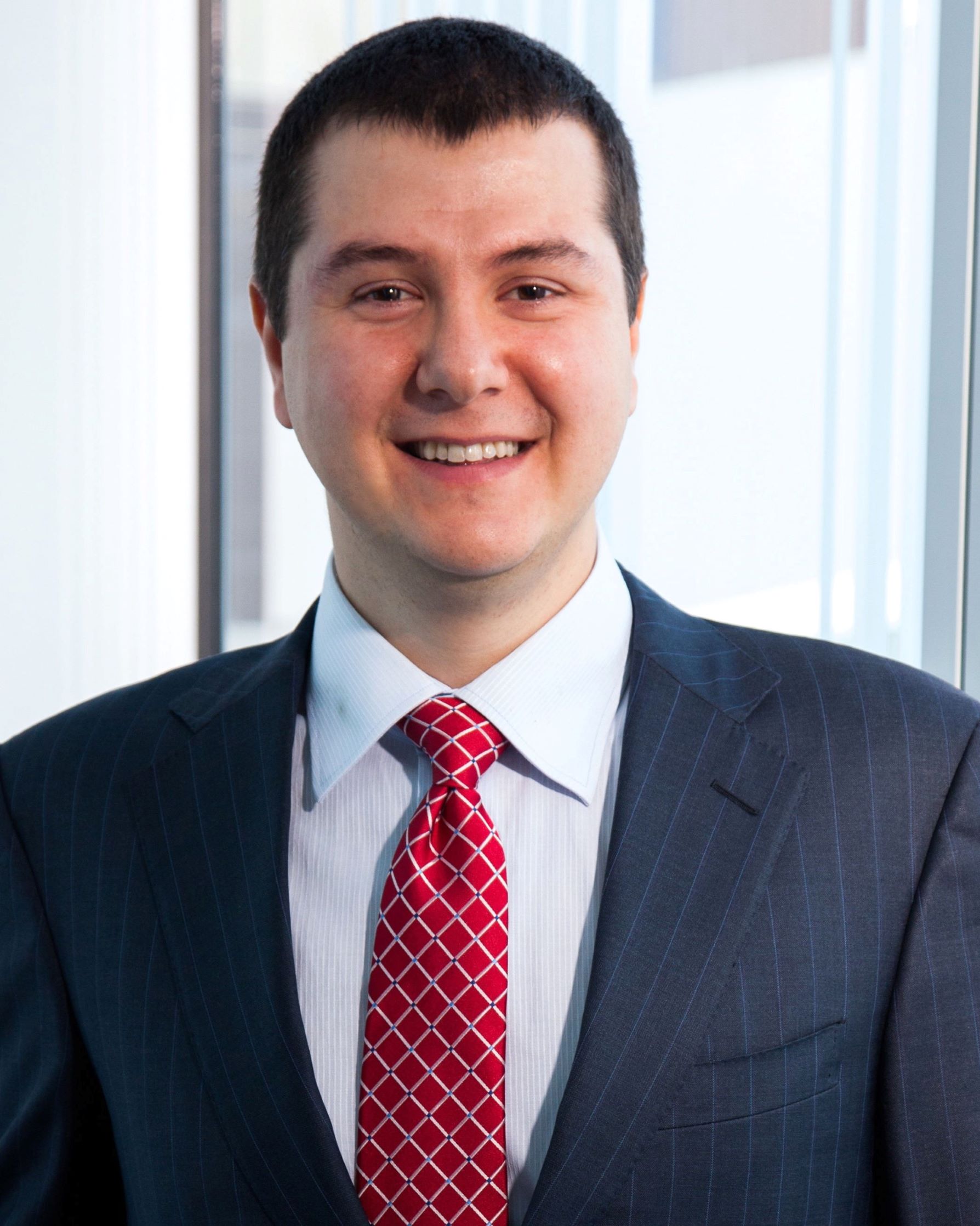}}]{Nikolay Atanasov}
(S'07-M'16) is an Assistant Professor of Electrical and Computer Engineering at the University of California San Diego. He obtained a B.S. degree in Electrical Engineering from Trinity College, Hartford, CT, in 2008 and M.S. and Ph.D. degrees in Electrical and Systems Engineering from the University of Pennsylvania, Philadelphia, PA, in 2012 and 2015, respectively. His research focuses on robotics, control theory, and machine learning, applied to active sensing using ground and aerial robots. He works on probabilistic environment models that unify geometry and semantics and on optimal control and reinforcement learning approaches for minimizing uncertainty in these models. Dr. Atanasov's work has been recognized by the Joseph and Rosaline Wolf award for the best Ph.D. dissertation in Electrical and Systems Engineering at the University of Pennsylvania in 2015 and the best conference paper award at the International Conference on Robotics and Automation in 2017.
\end{IEEEbiography}




\end{document}